\newtheorem{proposition}{Proposition}
\newcommand{\tabincell}[2]{\begin{tabular}{@{}#1@{}}#2\end{tabular}}
\begin{document}
%
% paper title
% Titles are generally capitalized except for words such as a, an, and, as,
% at, but, by, for, in, nor, of, on, or, the, to and up, which are usually
% not capitalized unless they are the first or last word of the title.
% Linebreaks \\ can be used within to get better formatting as desired.
% Do not put math or special symbols in the title.
\title{Personal Privacy Protection via Irrelevant Faces Tracking and Pixelation in Video Live Streaming}
%
%
% author names and IEEE memberships
% note positions of commas and nonbreaking spaces ( ~ ) LaTeX will not break
% a structure at a ~ so this keeps an author's name from being broken across
% two lines.
% use \thanks{} to gain access to the first footnote area
% a separate \thanks must be used for each paragraph as LaTeX2e's \thanks
% was not built to handle multiple paragraphs
%

\author{Jizhe~Zhou,~\IEEEmembership{Student Member,~IEEE,}
        Chi-Man~Pun,~\IEEEmembership{Senior Member,~IEEE}}

%------------------------------------------------------------------------------
%\thanks{M. Shell was with the Department
%of Electrical and Computer Engineering, Georgia Institute of Technology, Atlanta,
%GA, 30332 USA e-mail: (see http://www.michaelshell.org/contact.html).}% <-this % stops a space
%\thanks{J. Doe and J. Doe are with Anonymous University.}% <-this % stops a space
%\thanks{Manuscript received April 19, 2005; revised August 26, 2015.}}
%-----------------------------------------------------------------------------

% note the % following the last \IEEEmembership and also \thanks -
% these prevent an unwanted space from occurring between the last author name
% and the end of the author line. i.e., if you had this:
%
% \author{....lastname \thanks{...} \thanks{...} }
%                     ^------------^------------^----Do not want these spaces!
%
% a space would be appended to the last name and could cause every name on that
% line to be shifted left slightly. This is one of those "LaTeX things". For
% instance, "\textbf{A} \textbf{B}" will typeset as "A B" not "AB". To get
% "AB" then you have to do: "\textbf{A}\textbf{B}"
% \thanks is no different in this regard, so shield the last } of each \thanks
% that ends a line with a % and do not let a space in before the next \thanks.
% Spaces after \IEEEmembership other than the last one are OK (and needed) as
% you are supposed to have spaces between the names. For what it is worth,
% this is a minor point as most people would not even notice if the said evil
% space somehow managed to creep in.

% The paper headers
\markboth{Journal of \LaTeX\ Class Files,~Vol.~14, No.~8, August~2015}%
{Shell \MakeLowercase{\textit{et al.}}: Bare Demo of IEEEtran.cls for IEEE Journals}
% The only time the second header will appear is for the odd numbered pages
% after the title page when using the twoside option.
%
% *** Note that you probably will NOT want to include the author's ***
% *** name in the headers of peer review papers.                   ***
% You can use \ifCLASSOPTIONpeerreview for conditional compilation here if
% you desire.

% If you want to put a publisher's ID mark on the page you can do it like
% this:
%\IEEEpubid{0000--0000/00\$00.00~\copyright~2015 IEEE}
% Remember, if you use this you must call \IEEEpubidadjcol in the second
% column for its text to clear the IEEEpubid mark.

% use for special paper notices
%\IEEEspecialpapernotice{(Invited Paper)}

% make the title area
\maketitle

% As a general rule, do not put math, special symbols or citations
% in the abstract or keywords.
%\begin{abstract}
%The abstract goes here.
%\end{abstract}

% Note that keywords are not normally used for peerreview papers.

% For peer review papers, you can put extra information on the cover
% page as needed:
% \ifCLASSOPTIONpeerreview
% \begin{center} \bfseries EDICS Category: 3-BBND \end{center}
% \fi
%
% For peerreview papers, this IEEEtran command inserts a page break and
% creates the second title. It will be ignored for other modes.
\IEEEpeerreviewmaketitle

% The very first letter is a 2 line initial drop letter followed
% by the rest of the first word in caps.
%
% form to use if the first word consists of a single letter:
% \IEEEPARstart{A}{demo} file is ....
%
% form to use if you need the single drop letter followed by
% normal text (unknown if ever used by the IEEE):
% \IEEEPARstart{A}{}demo file is ....
%
% Some journals put the first two words in caps:
% \IEEEPARstart{T}{his demo} file is ....
%
% Here we have the typical use of a "T" for an initial drop letter
% and "HIS" in caps to complete the first word.
\begin{abstract}
To date, the privacy-protection intended pixelation tasks are still labor-intensive and yet to be studied. With the prevailing of video live streaming, establishing an online face pixelation mechanism during streaming is an urgency. In this paper, we develop a new method called Face Pixelation in Video Live Streaming (FPVLS) to generate automatic personal privacy filtering during unconstrained streaming activities. Simply applying multi-face trackers will encounter problems in target drifting, computing efficiency, and over-pixelation. Therefore, for fast and accurate pixelation of irrelevant people's faces, FPVLS is organized in a frame-to-video structure of two core stages. On individual frames, FPVLS utilizes image-based face detection and embedding networks to yield face vectors. In the raw trajectories generation stage, the proposed Positioned Incremental Affinity Propagation (PIAP) clustering algorithm leverages face vectors and positioned information to quickly associate the same person's faces across frames. Such frame-wise accumulated raw trajectories are likely to be intermittent and unreliable on video level. Hence, we further introduce the trajectory refinement stage that merges a proposal network with the two-sample test based on the Empirical Likelihood Ratio (ELR) statistic to refine the raw trajectories. A Gaussian filter is laid on the refined trajectories for final pixelation. On the video live streaming dataset we collected, FPVLS obtains satisfying accuracy, real-time efficiency, and contains the over-pixelation problems.
\end{abstract}
\begin{IEEEkeywords}
face pixelation, video live streaming, privacy protection, positioned incremental affinity propagation, empirical likelihood ratio
\end{IEEEkeywords}
\section{Introduction}
% to filter out the irrelevant people's faces

\begin{figure*}[tbp]
\centering
\vspace{-0.35cm}
\subfigtopskip=2pt
\subfigbottomskip=2pt
\subfigcapskip=-3pt
\subfigure%[Original Live Stream Video]
{
\includegraphics[width=3.45cm]{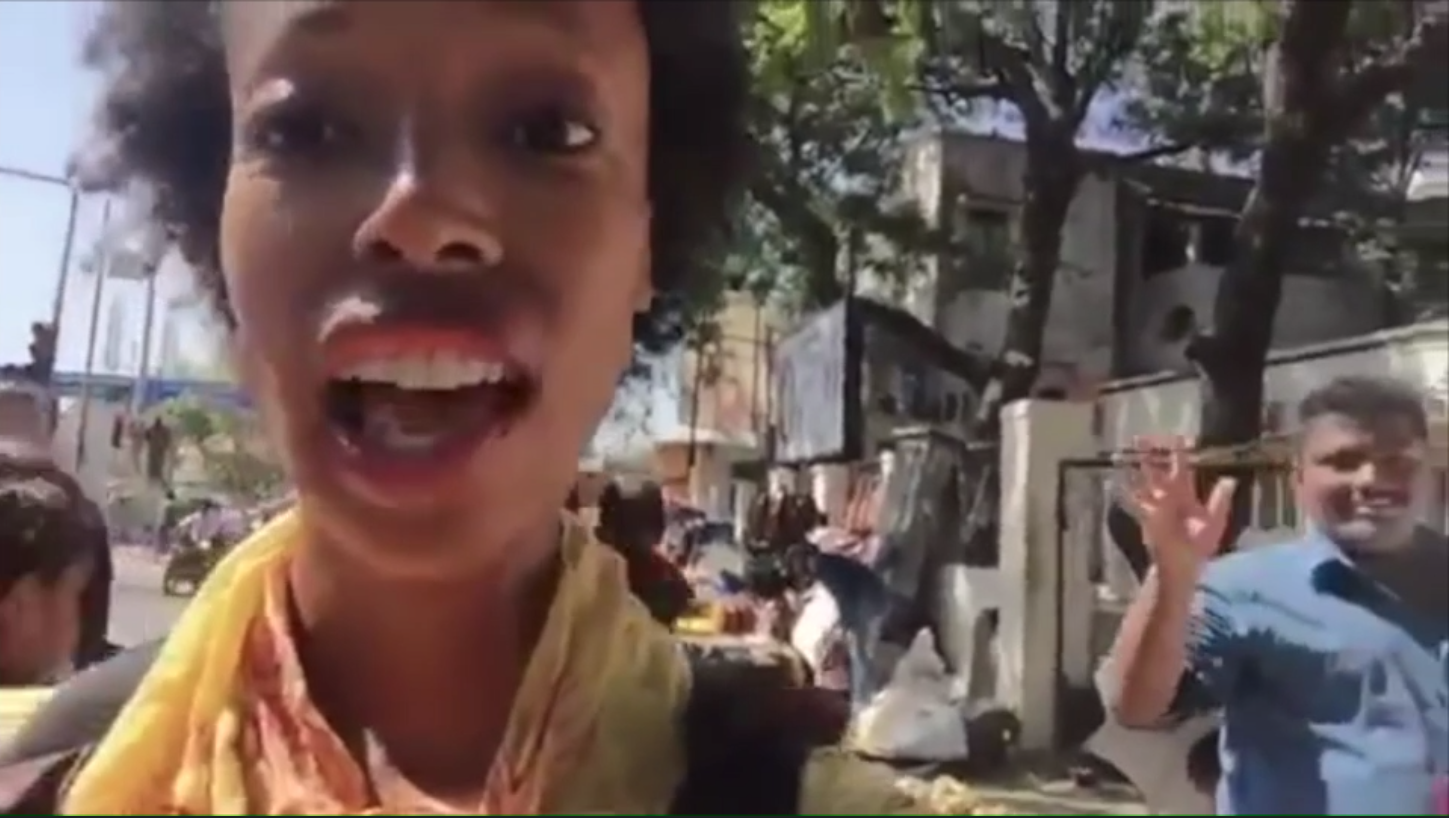}\hspace {-0.1em}
\includegraphics[width=3.45cm]{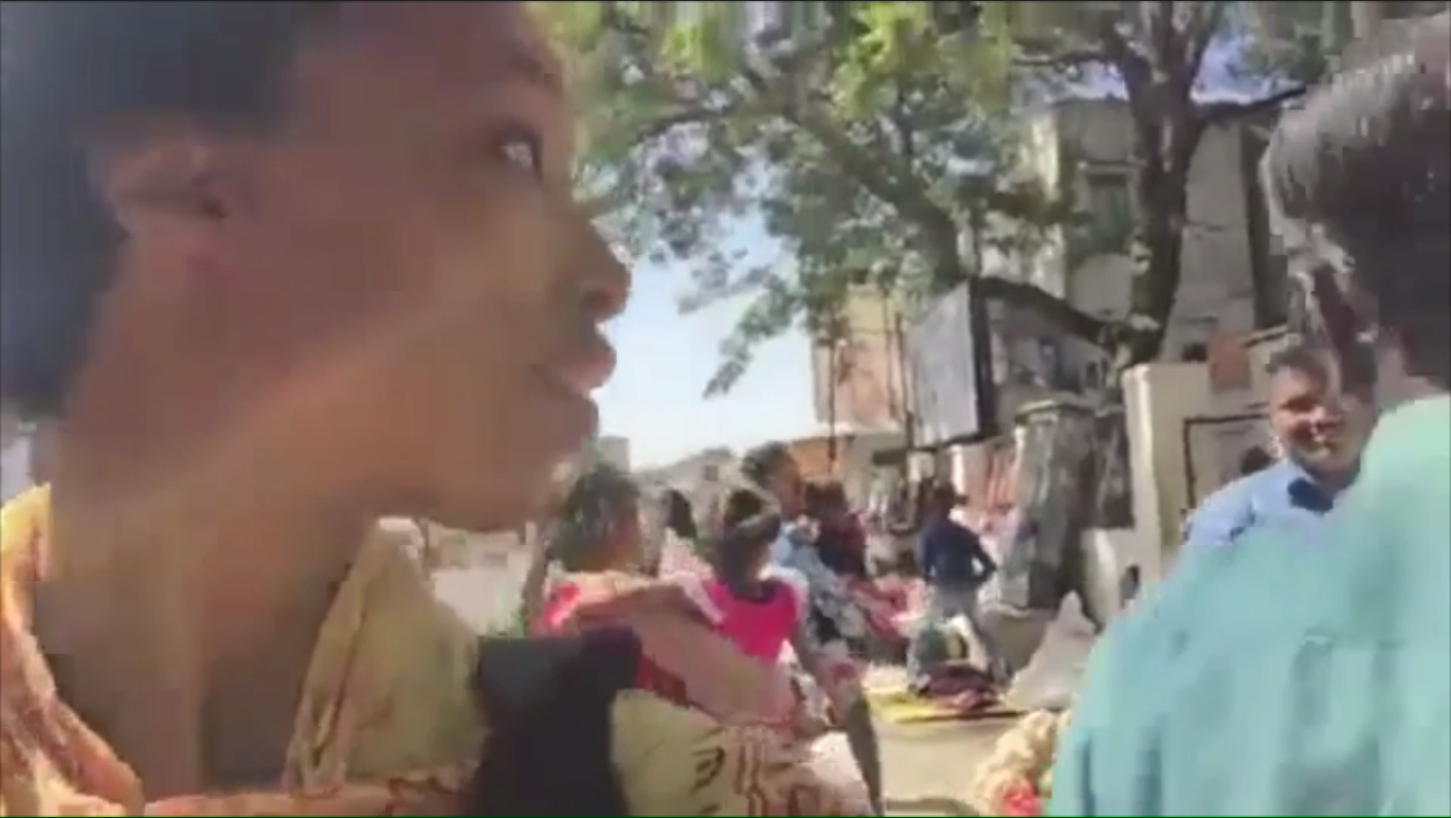}\hspace {-0.1em}
\includegraphics[width=3.45cm]{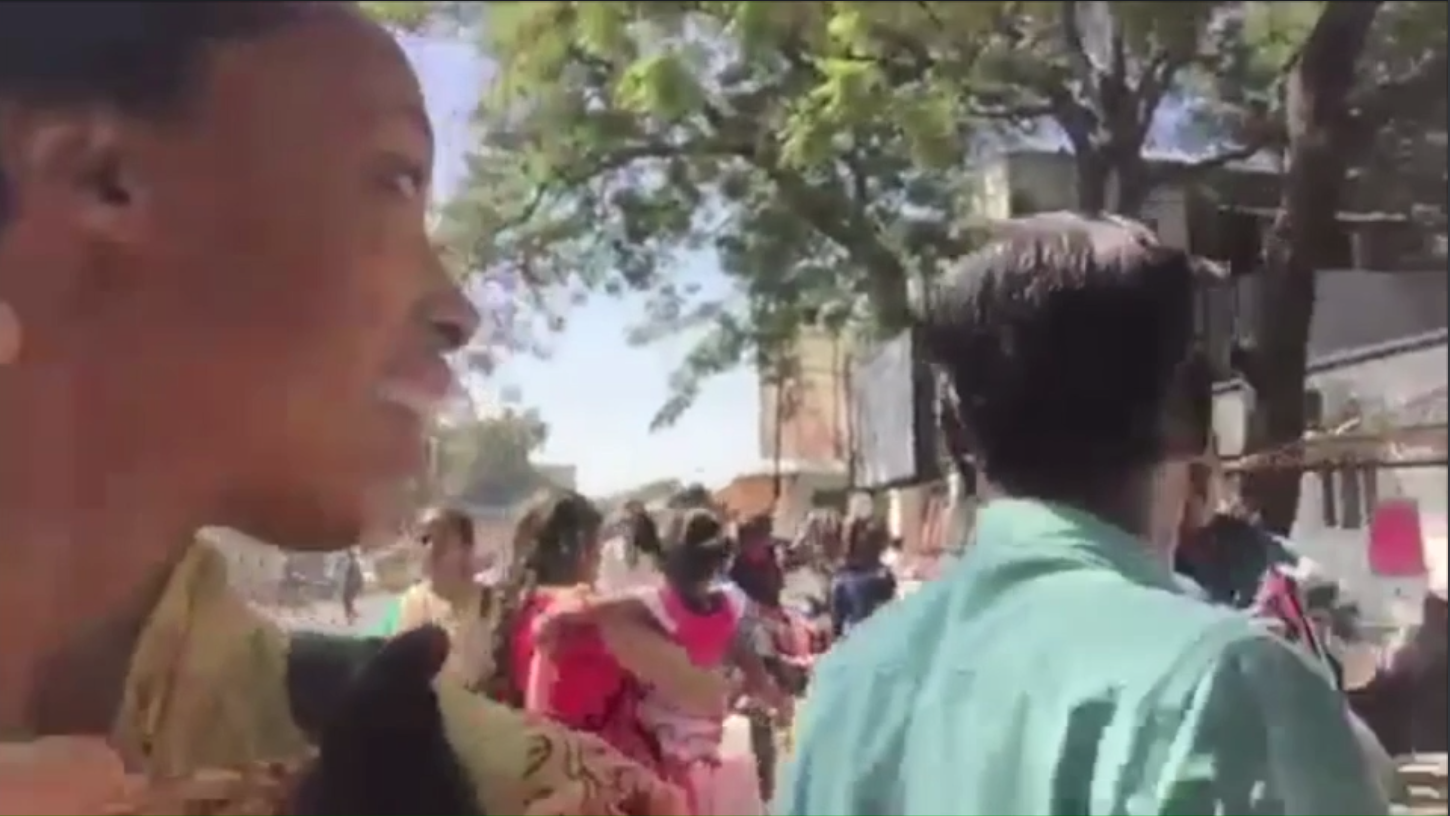}\hspace {-0.1em}
\includegraphics[width=3.45cm]{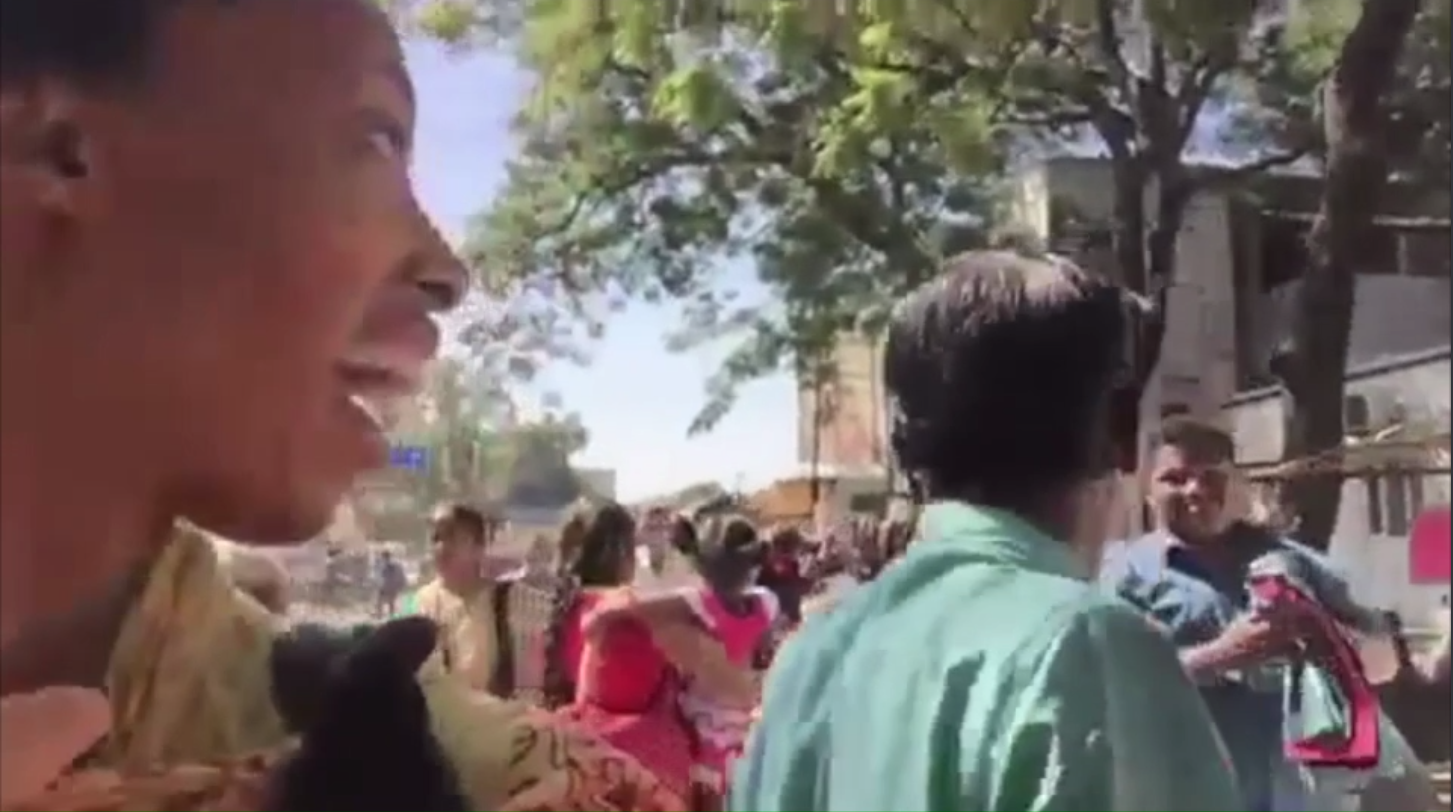}\hspace {-0.1em}
\includegraphics[width=3.46cm]{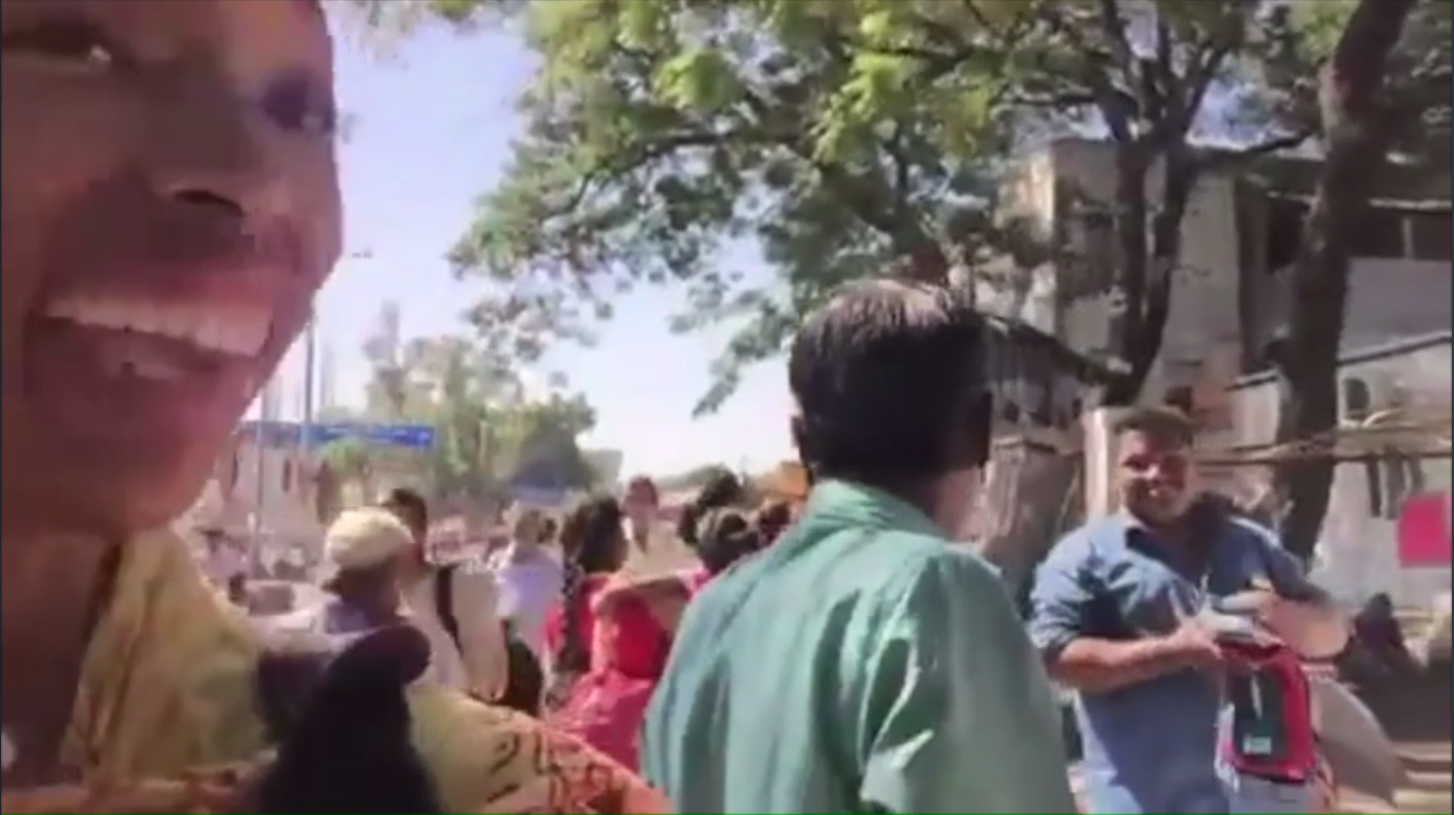}
%\subcaption*{}
%\caption*{Unnumbered figure caption.}
}%\vskip -0.5pt

%\quad
\subfigure%[Pixelized Live Stream Video through YouTube Studio]
{
\includegraphics[width=3.45cm]{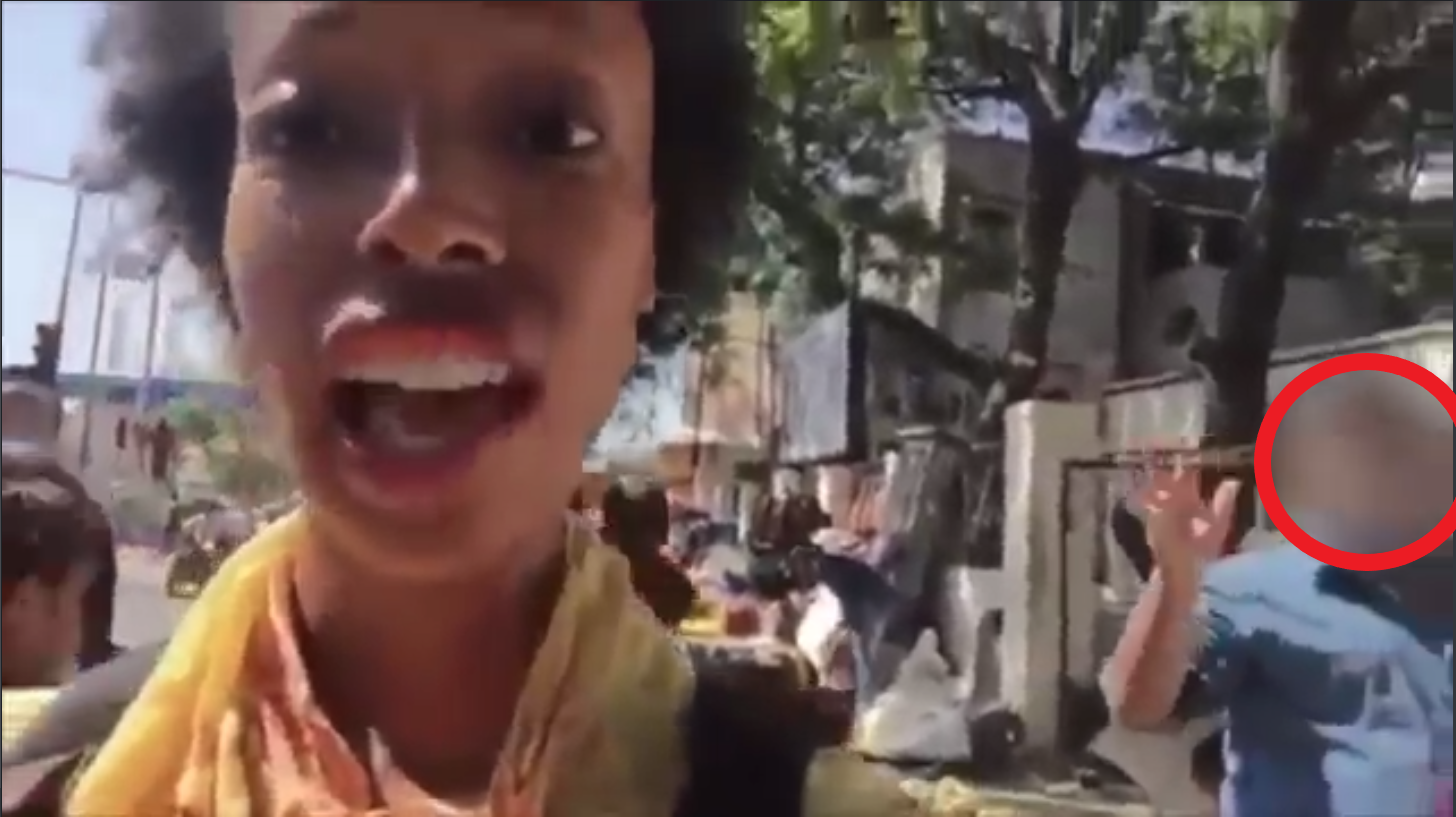}\hspace {-0.1em}
\includegraphics[width=3.45cm]{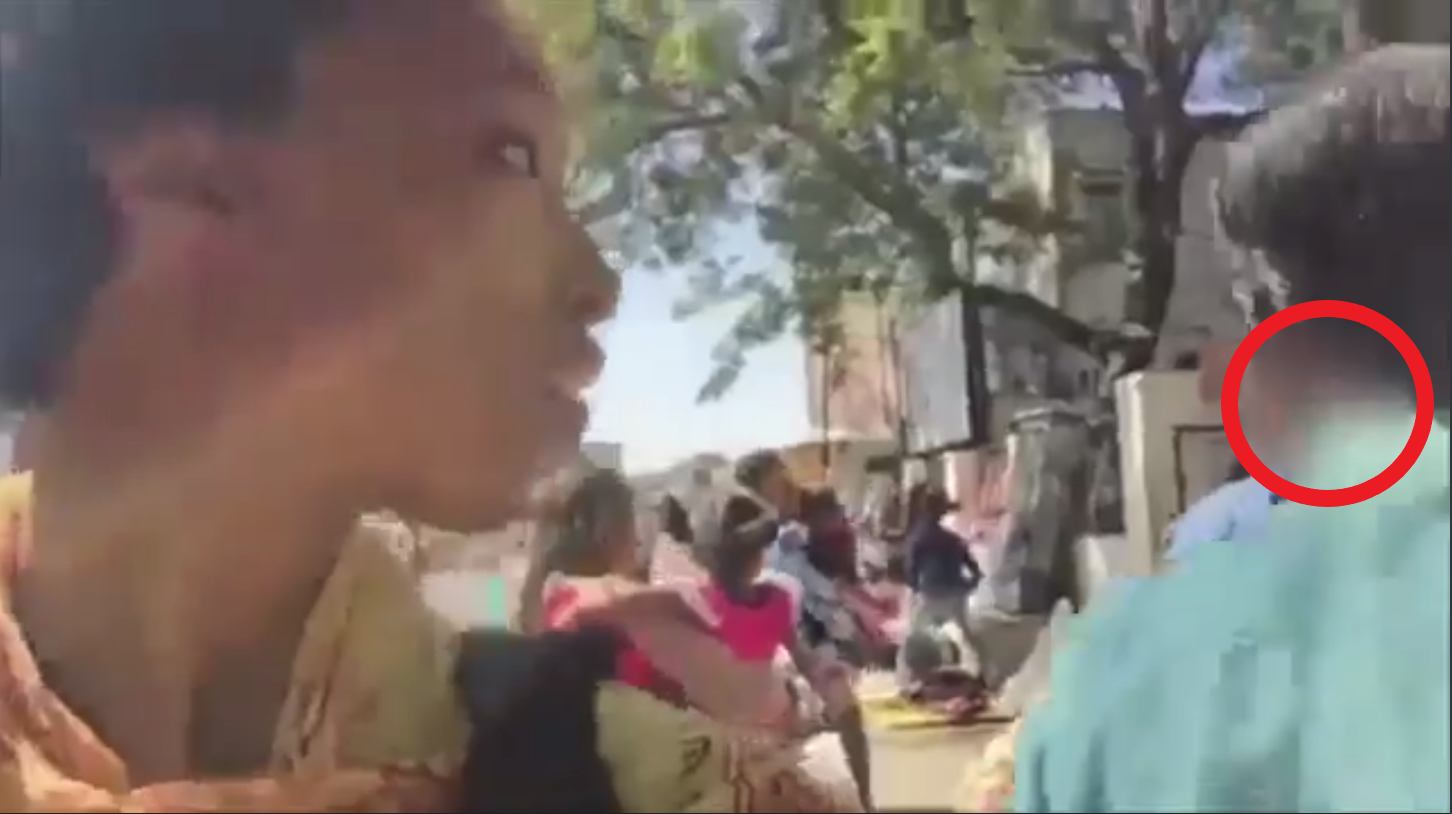}\hspace {-0.1em}
\includegraphics[width=3.45cm]{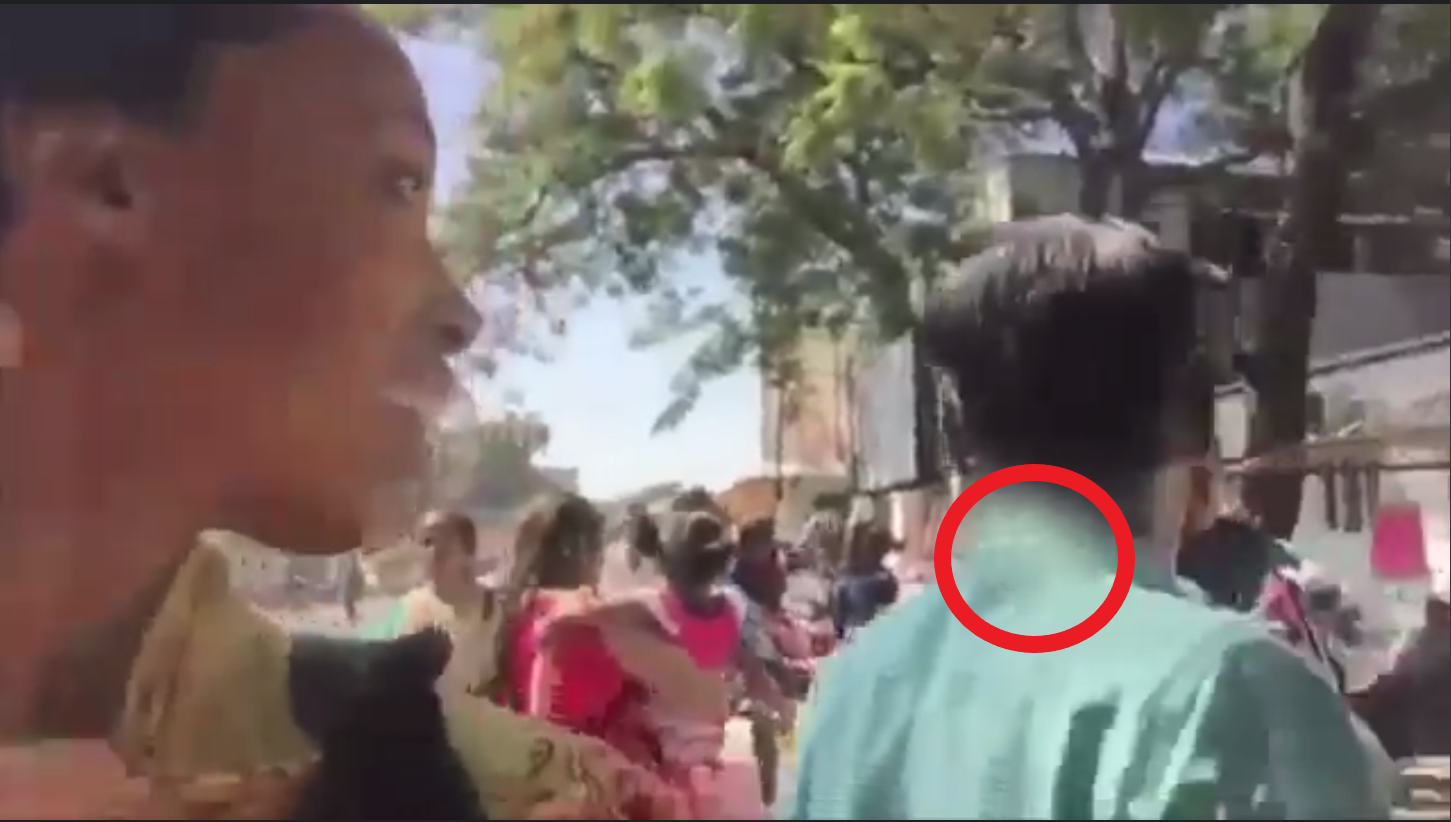}\hspace {-0.1em}
\includegraphics[width=3.45cm]{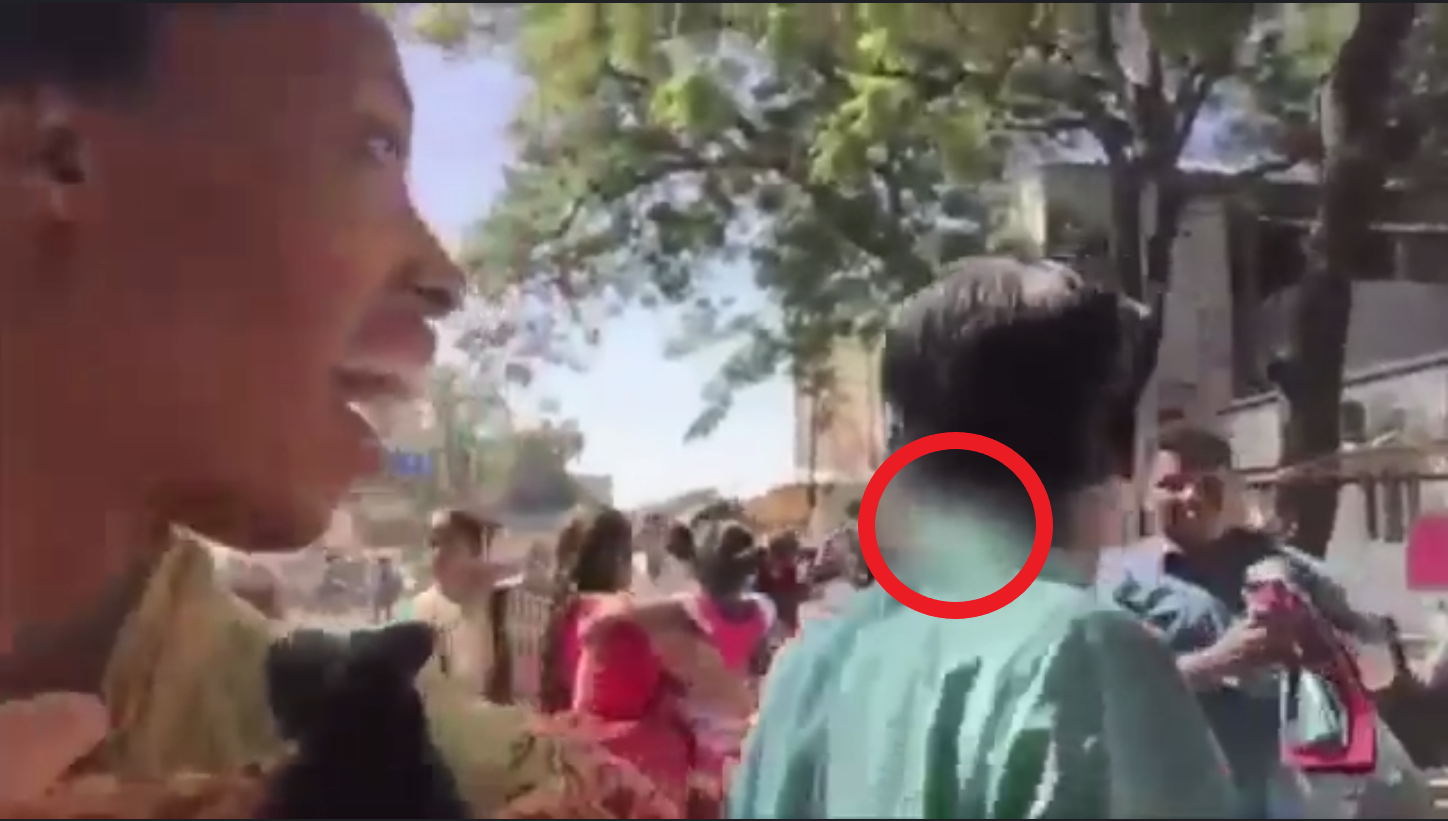}\hspace {-0.1em}
\includegraphics[width=3.45cm]{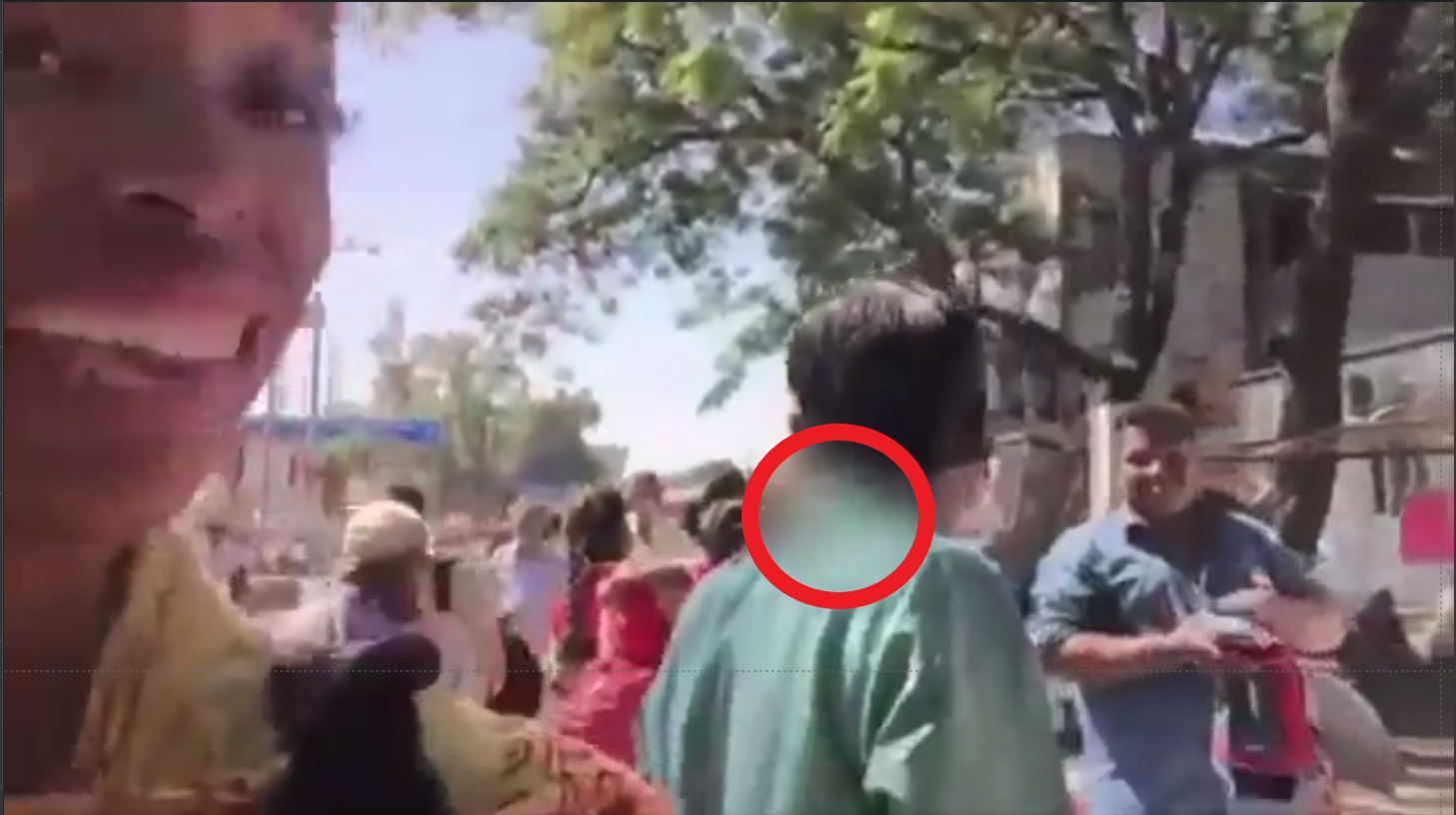}
%\subcaption*{}
}\hskip -1pt
%\quad
\subfigure%[Pixelized Live Stream Video through FPVLS]
{
\includegraphics[width=3.45cm]{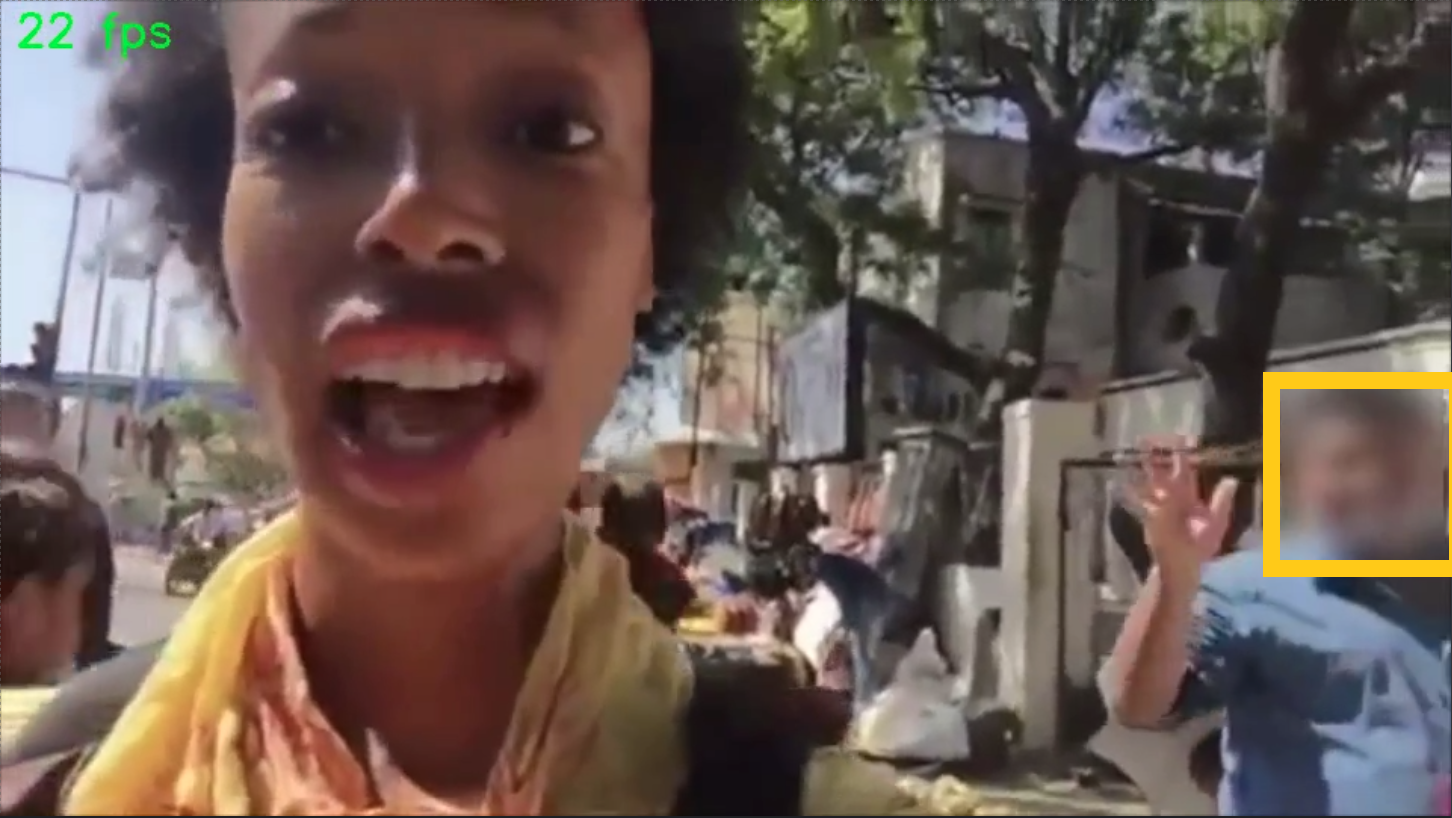}\hspace {-0.1em}
\includegraphics[width=3.45cm]{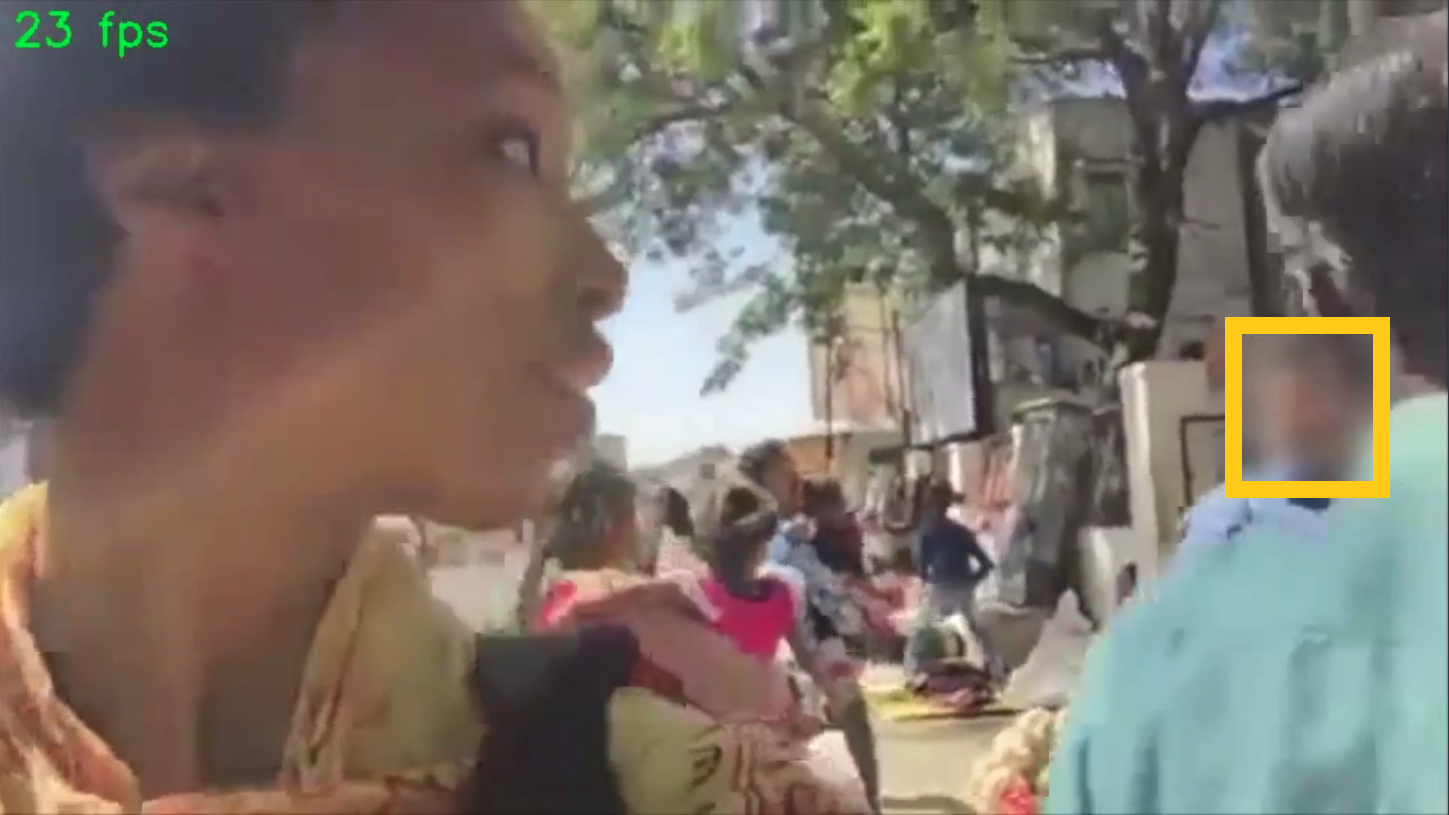}\hspace {-0.1em}
\includegraphics[width=3.45cm]{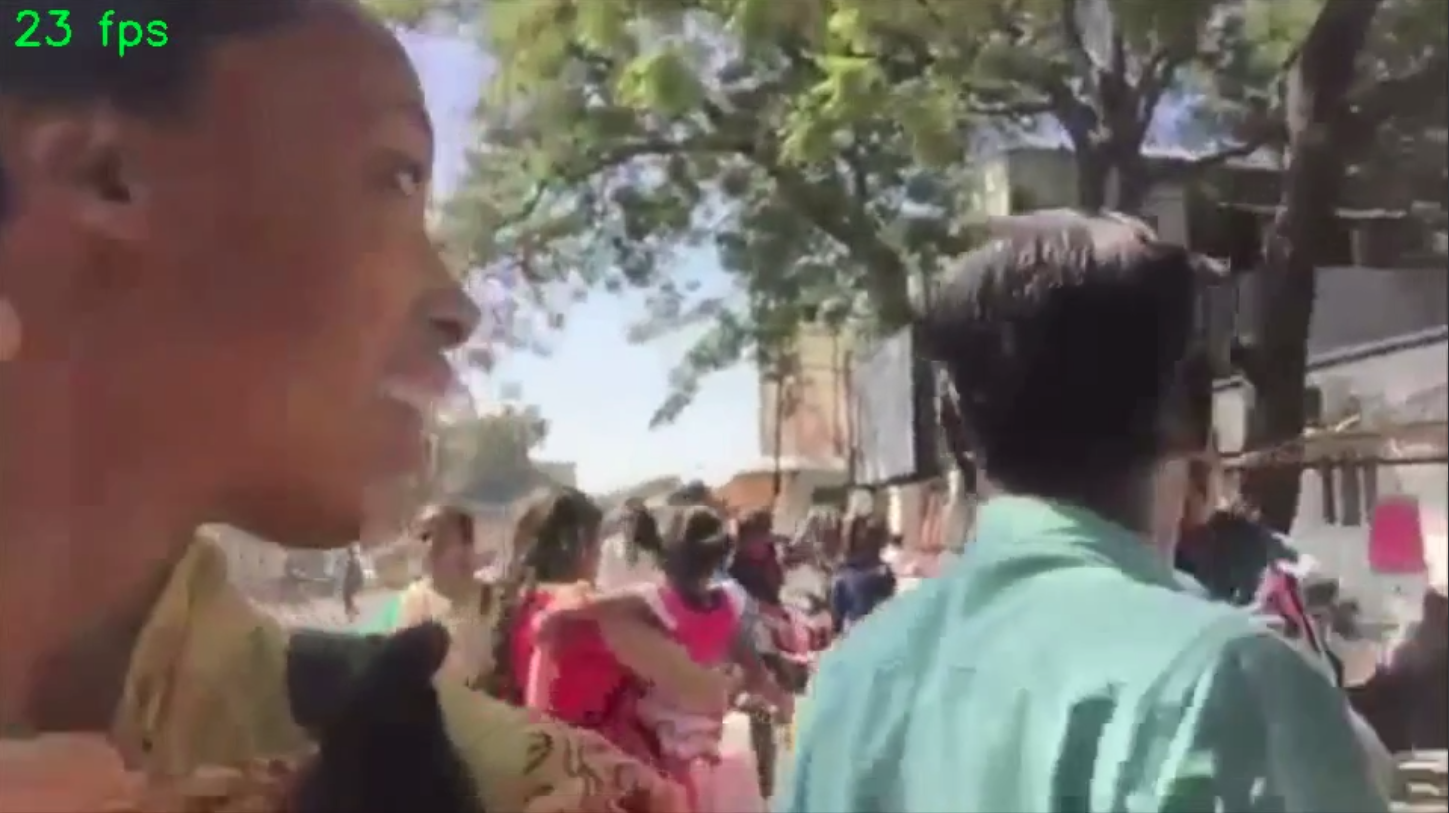}\hspace {-0.1em}
\includegraphics[width=3.45cm]{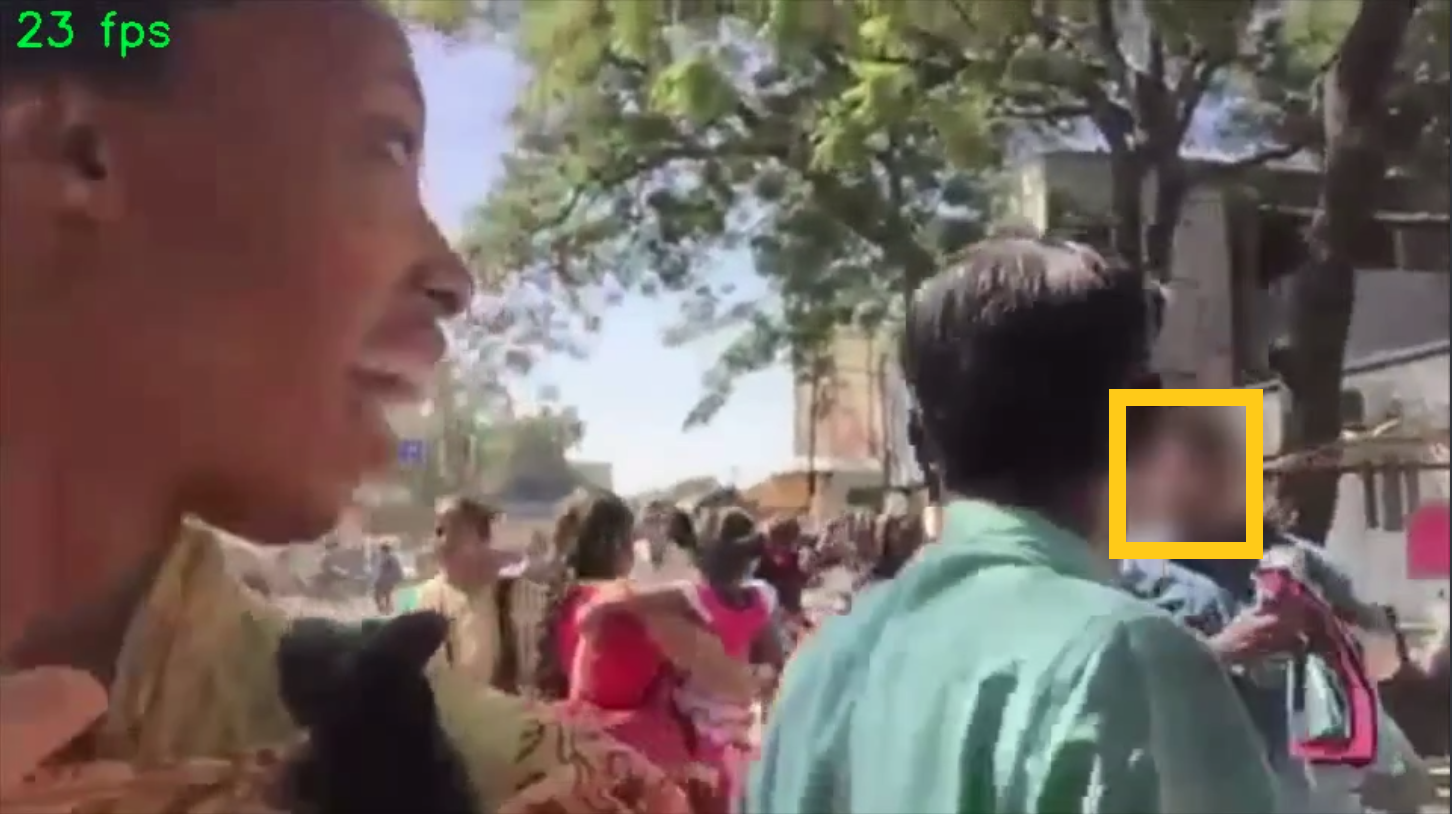}\hspace {-0.1em}
\includegraphics[width=3.45cm]{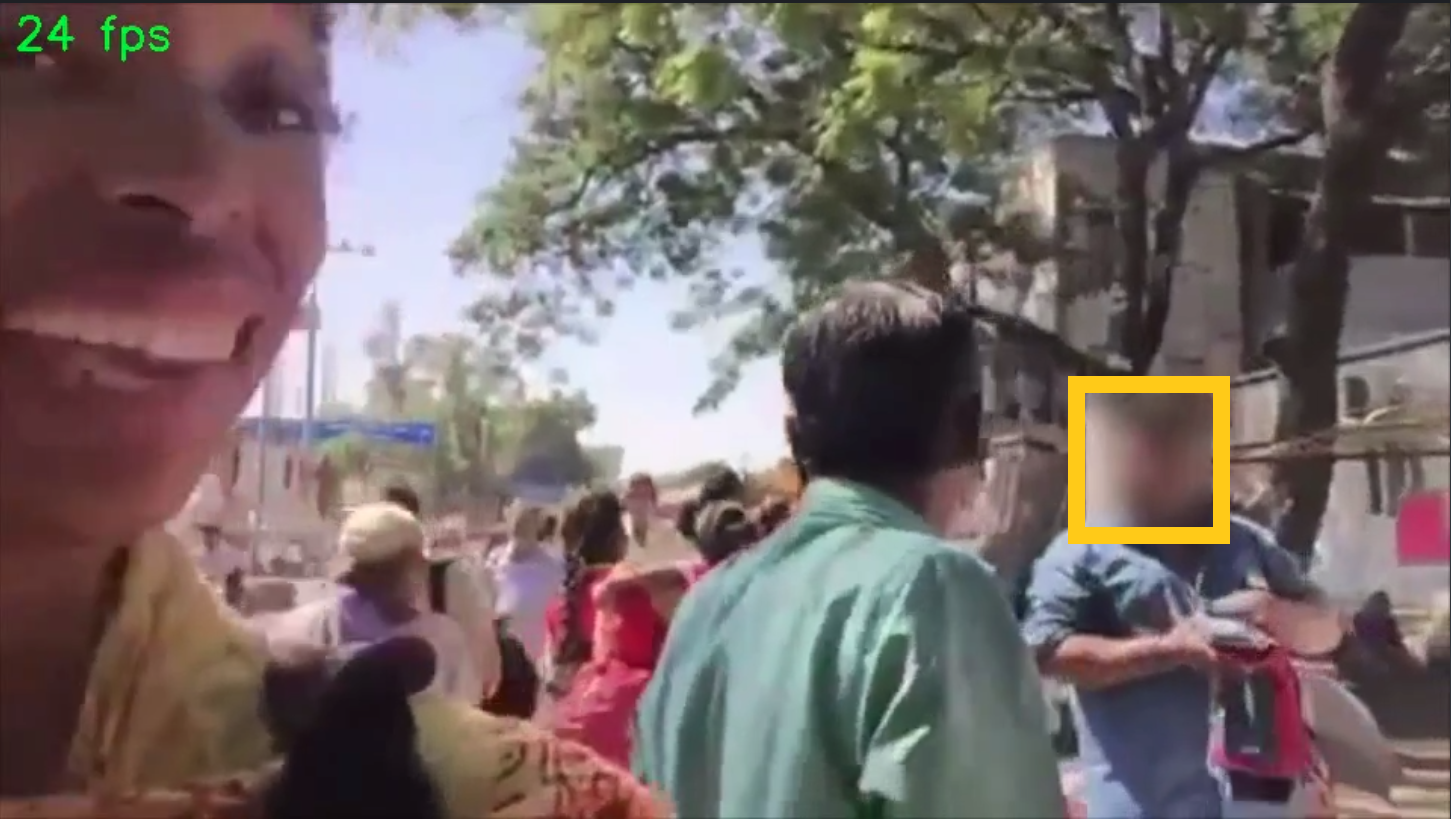}
%\subcaption*{}
}%\vskip -0.5pt
\hskip -1pt

%\quad
%\subfigure[pic4.]{
%\includegraphics[width=5.5cm]{INT/14.png}
%}
\caption{Differences on the drifting and over-pixelation issues handled by FPVLS (yellow) and tracker-based YouTube Studio tool (red). The original snapshots of the live streaming scene are listed in the upper row from left to right. The middle row with red circle mosaics is the offline pixelation results of YouTube Studio tools. The bottom row with yellow rectangular mosaics is the online pixelation results of FPVLS.}
\vspace{-0.5cm}
\label{fig1}
\end{figure*}

\par
Video live streaming has never been so prevailing as in the past two or three years. Live streaming platforms, sites, and streamers rapidly become part of our daily life. Benefit from the popularity of smart-phones and cheaper 5G networks, online video live streaming profoundly shapes our daily life~\cite{vo2017optimal}. Streaming techniques endow people with the capability to instantly record and broadcast real scenes to audiences. The primary host of live streaming activities transformed from TV stations, news bureau, and professional agencies to the ordinaries. Without the imposed censorship, these private streaming channels severely disregard the personal privacy rights~\cite{faklaris2016legal}. As a result, streamers, together with live streaming platforms, have raised a host of privacy infringement issues. Despite different privacy laws in different regions, these privacy infringements are always concerning and even committed to crimes~\cite{zimmer2017law}.

\par
Apart from the indifference of streamers, the handcrafted and labor-intensive pixelation process is in fact the main reason leads to rampant privacy violations amid streaming~\cite{stewart2016up}. Streaming platforms are therefore incapable of allocating adequate human labor on executing censorship anytime, anywhere. Being a giant hosting service provider, YouTube is already aware of the reason and published its offline face pixelation tool on the latest YouTube Creator Studio~\cite{youtube2020}. When uploading videos, creators are now required to pixelate faces of children under age 13 through YouTube Creator Studio, unless parental consent is obtained~\cite{youtube2020kid}. YouTube Live and Facebook Live experienced explosive growths during the global COVID-19 pandemic, and how to push similar privacy-protection policies in streaming is under heated discussions. As far as we know, current studies, including YouTube Studio~\cite{youtube2020} and Microsoft Azure~\cite{Juliako2020}, mainly focus on processing offline videos; whilst leave the face pixelation solution on the online video live streaming field underexplored. Thus, in this paper, we build a privacy-protection method named Face Pixelation in Video Live Streaming (FPVLS) to blur irrelevant peoples' faces amid streaming. FPVLS protects personal privacy rights as well as promotes the sound development for the video live streaming industry.

%\par
%The tangible reference of FPVLS is daily life TV shows. People who are unwilling to expose in front of the camera will be facially pixelated in the post-production to protect their privacy. However, even today, the placement of mosaics in the post-production procedures is still a labor-intensive manual work. Therefore, apart from the indifference to privacy, streamers and streaming platforms are also incapable of allocating adequate human labor on such censorship anytime, anywhere. With the exploding popularity, live video streaming is usurping the place of TV programs in young generations. From this perspective, FPVLS is expected to take over partial roles of manual censorship, thereby promoting a sound and rapid development of the live video streaming industry.
\par
The tangible reference of FPVLS is the mosaics manually allocated in TV shows. People who are unwilling to expose in front of the audiences will be facially pixelated in the show's post-production process. Hence, FPVLS can be intuitively resolved as an automized replica of the manual face pixelation process. Furthermore, the manual face pixelation process is remarkably close to the multi-face tracking (MFT) algorithms, because they both involve continuous identification of target faces on frames. Then, fine-tuning MFT algorithms on live streaming seem to be a straightforward solution for FPVLS.

\par
Follow the above interpretation, we directly migrated and tuned the multi-face trackers on our collected video live streaming dataset. However, vast tests reveal that migrated multi-face trackers suffer a severe drifting issue. Fig.~\ref{fig1} demonstrates a typical street-streaming scene from left to right. A street hawker is waving hands to the female streamer for greeting. Without the permission of the hawker, his face shall be pixelated for privacy protection. Meanwhile, a passerby in a light green shirt is passing through the streamer and the hawker. YouTube Studio, which relies on MFT algorithms, generates the middle row's mosaics circled in red. During the crossover of the passerby and the hawker (middle-three snapshots), the hawker's face mosaics drift manifestly. At the rightmost snapshot, the mosaics are even brought to the passerby's neck region and leave the hawker's face completely exposed.

\par
Such a drifting issue is mainly caused by the underperformance of the face detection algorithms in the unconstrained live-streaming scenes. State-of-the-art MFT studies adopt a similar tracking-by-detection structure~\cite{sanchez2016cascaded,danelljan2017eco,henriques2014high}, which assembles face detectors ahead of trackers. Detectors specify the bounding boxes of faces on frames and pass the results to trackers. Trackers either designate an existing tracklet or initiate a new tracklet for each detection, and further merge tracklets into an intact trajectory. The detector's outcomes primarily determine the quality of the tracklets, and trackers accept tracklets as priors in making scene-based matching or reasoning.
On the whole performance of the MFT algorithm, the detector plays an as pivotal role as the tracker. However, the detection accuracy is seldom mentioned and discussed in current MFT algorithms~\cite{sanchez2016cascaded,danelljan2017eco,henriques2014high,yu2016poi,shen2018tracklet}. Benchmark tests for these MFT algorithms are carried out in music videos, movie clips, TV sitcoms, or CCTV footage recorded by professional, high-resolution, steady, or stably moving cameras with various shot changes. Face detectors function well in these videos; accordingly, tracklets can be reliably generated. Clean face detection results and reliable primal tracklets become the acquiescence, and the major concern of current MFT is how to establish consistent, accurate associations among tracklets under frequent switching shots.

\par
In practice, live streaming scenes are commonly recorded by handheld mobile phone cameras with only a single or a few shots. Likewise, the video quality can only maintain at a relatively low level (720p/30FPS or lower). Crowded scenarios, frequent irregular motions of the camera, and abrupt broadcasting resolution conversion are everywhere in live streaming. All these harsh realities indicate that face detection networks are no longer able to yield accurate results for the generation of reliable tracklets. Once scattered or unreliable tracklets are initialized, trackers are prone to fast corruption and jeopardize the efficacy of MFT based pixelation methods. Only very few previous works~\cite{huang2008robust,yu2016poi} noticed the uncertainty caused by unreliable detection results but hardly explored solutions.

\par
Besides the drifting, the apparent over-pixelation problem can be observed in the middle snapshots of Fig.~\ref{fig1}. Introduced by the tracking algorithms and inherited by tracking based pixelation methods, the over-pixelation occurs when pixelation methods generate unnecessary and excessive mosaics for un-identifiable faces. Heavy or full occlusion and massive motion blurs are the common reasons in causing un-identifiable faces. Over-pixelation is an intrinsic problem while migrating tracking algorithms to pixelation tasks. MFT algorithms benefit from estimating the locations of temporary un-identifiable faces~\cite{zhang2014technology, shen2018tracklet}. Such an evidence accumulation process fairly contributes to constructing long enough trajectories, thereby avoiding frequent face ID-switch. Unlike tracking, in the meantime of redacting irrelevant faces out, pixelation tasks eager to preserve the audience as many originals as possible.

\par
Consequently, to accomplish the irrelevant people's faces tracking and pixelation task under streaming scenes, FPVLS adopts a brand new, clustering and re-detection based, frame-to-video framework. The core of this framework is the raw face trajectory generation stage and the trajectory refinement stage. In a nutshell, when streaming started, face detection and embedding (recognition) networks are alternately applied on frames to yield face vectors. Noises caused both by false positives in detection and variations in embedding networks are concealed in these face vectors. Thereupon, Positioned Incremental Affinity Propagation (PIAP) is proposed to cope with these noises and generate raw face trajectories. PIAP inherits the ability of clustering under ill-defined cluster numbers from classic Affinity Propagation (AP)~\cite{frey2007clustering}. We introduce the positioned information to revise affinities and endow PIAP with noise-resistance. We further employ incremental clustering to accelerate the consensus-reaching process. Besides the noises, false negatives in detection arouse intermittences in raw trajectories. The trajectory refinement stage aims to elaborately compensate the intermittences without provoking over-pixelation. We construct a proposal network to re-detect the faces in the intermittences. The two-sample test based on the Empirical Likelihood Ratio (ELR) is then applied to cull the non-faces from re-detection results. The proposal network and the two-sample test collaborate to yield the final trajectories. Lastly, the post-process allocates Gaussian filters on the refined trajectories for pixelation. An example of FPVLS is presented in the bottom row of Fig.~\ref{fig1}. The mosaics marked in yellow rectangular remain on the hawker's face after the crossover. The drifting issue is greatly alleviated. Further, as shown in the middle pictures of the third row, while the target face is fully occluded, FPVLS will not produce baffling, over-pixelated mosaics like YouTube Studio or other tracking based methods.

%PIAP inherits the ability of clustering under ill-defined cluster numbers from classic Affinity Propagation (AP). We further introduce the positioned information to revise affinities and endow PIAP with noise resistance. Also, we employ the incremental clustering to accelerate the consensus-reaching process.
% We construct a proposal network of a shallow structure and high recall rate to re-detect the gaps in raw trajectories. The two-sample test based on the empirical likelihood ratio is then applied to cull the non-faces from re-detection results. The Proposal network and the two-sample test collaborate to yield the final seamless trajectories.

\par
Specifically, this paper consists of following contributions:
\begin{itemize}%\vspace{-0.15em}
    \item We build the first online pixelation framework: Face Pixelation in Video Live Streaming (FPVLS). FPVLS adopts PIAP clustering and the two-sample test based on ELR to alleviate the trajectories drifting as well as contain the over-pixelation problem.
    \item We proposed the Positioned Incremental Affinity Propagation (PIAP) clustering algorithm to generate raw trajectories upon inaccurate face vectors. The proposed PIAP spontaneously handles the cluster number generation problem, and are endowed with noise-resistant and time-saving merits through position information and increment clustering.
    \item A non-parametric imposing two-sample test based on empirical likelihood ratio (ELR) statistics is introduced. Cooperating with the proposal network, the two-sample test compensates the deep networks insufficiency through the ELR statistics, and avoid over-pixelation in the trajectory refinement. Such an error rejection algorithm can serve other face detection algorithms.
    \item We collected a diverse video live streaming dataset from the streaming platforms and manually labeled dense annotations (51040 face labels) on frames to conduct our experiments. This dataset will be released to the public and could be the benchmark dataset for future studies on video live streaming.
\end{itemize}

\par
The remaining of the paper is organized as follows. Section \uppercase\expandafter{\romannumeral2} reviews the related works; Section \uppercase\expandafter{\romannumeral3} introduces the detail methods of FPVLS; experiments, results, and discussion could be found in Section \uppercase\expandafter{\romannumeral4}; Section \uppercase\expandafter{\romannumeral5} concludes and discusses future work.

\section{Related Works}
To our best knowledge, face pixelation in video live streaming is not studied before. Referring to existing techniques, we first tried to implement FPVLS in an end-to-end manner by replacing the 2D deep CNN with its extension 3D CNN~\cite{ji20133d}. Taking an entire video as input, 3D CNN handles spatial-temporal information simultaneously and achieves great success in video segmentation~\cite{yue2015beyond,karpathy2014large} and action recognition areas~\cite{ji20133d,tran2015learning}. However, tested on the live streaming dataset we collected, 3D CNN models are quite time-consuming and extremely sensitive to video quality and length. Furthermore, without sufficient training data, as designed to cope with high-level abstractions of video contents, 3D CNN cannot precisely handle the pixelation task on the individual frame-level. Therefore, the face mosaics generated by the 3D CNN model constantly blinking during streaming.
\par
The other proper way for realization is the multi-object tracking (MOT) or multi-face tracking (MFT) algorithms we discussed in the Introduction. Current offline pixelation methods, including YouTube Studio and Microsoft Azure, are mainly implemented on MFT. Although MOT and MFT are claimed to be thoroughly studied~\cite{zhang2014technology}, their tracking accuracy is still challenging in unconstrained videos. Reviewing state-of-the-art trackers, we find two main categories: offline and online trackers. The former category assumes the object detection in all frames has already been conducted, and clean tracklets are established by linking different detection in offline mode~\cite{berclaz2011multiple}. This property of offline trackers allows global optimization for the path~\cite{zamir2012gmcp} but makes them incapable of dealing with online tasks.
\par
State-of-the-art online MOT algorithms~\cite{henriques2014high, danelljan2017eco, mueller2017context, valmadre2017end, liu2016structural} combine continuous motion prediction, online learning, and the correlation filter to find an operator that gradually updates the tracker at each timestamp. KCF~\cite{henriques2014high} and ECO~\cite{danelljan2017eco} are the representatives of this kind. Within the same shot, these trackers assume a predictable position for targets in the adjacent frames. Although this assumption is generally applicable in conventional videos, it does not hold in video live streaming due to the frequent camera shakes. Tests of such tracking algorithms are always established on high-resolution music video datasets in the absence of massive camera moves and motion blur~\cite{zhang2016tracking}. Moreover, MOT sometimes encounters the size issue when directly applied to the face tracking field. The faces' size may be so small (12*12 pixels) that the online-learned operator is not robust enough to progressively locate the time-invariant feature of the face. Therefore, the correlation filter based MOT algorithms also suffer drastic drifting issues on the face pixelation tasks.
\par
State-of-the-art MFT algorithms adopt the tracking-by-detection structure~\cite{bochinski2017high,yu2016poi,sridhar2015fast,weinzaepfel2015learning}. IOU tracker~\cite{bochinski2017high} and POI tracker~\cite{yu2016poi} are the outstanding ones. IOU tracker applies detection networks on the current frame. The bounding boxes with the most considerable IoU value will be assigned to the corresponding trajectory, referring to the tracking list. IOU tracker is agile in processing frames, but heavily relies on the detection results and does not consider the temporal connection between frames. Thereby, false positives and false negatives in detection are hard to be eliminated and cause instant drifting mosaics. POI tracker replaces the correlation filters with deep embedding networks. Online POI tracker applies detection networks in all frames and divides the tracking video into multiple disjoint segments. They then use the dense neighbors (DN) search to associate the detection responses into short tracklets in each segment. Several nearby segments are merged into a longer segment, and the DN search is applied again in each longer segment to associate existing tracklets into longer tracklets. To some degree, the POI tracker shares a cognate idea with our raw face trajectory generation stage in the video segment, detection, and embedding aspects. However, POI uses a DN search instead of clustering algorithms and do not further apply refinement. Therefore, the POI tracker will also be profoundly affected by inaccurate detection contains frequent false positives and negatives. Besides, widely accepted solutions of MOT and MFT~\cite{zhang2016joint,insafutdinov2017arttrack} in unconstrained videos assume some priors, including the number of people showing up in the video, initial positions of the people, no fast motions, no camera movements, and high recording resolution. The state-of-the-art studies on MFT~\cite{lin2018prior} manage to exclude as many priors as possible but still require the full-length video in advance for analysis.

\begin{figure*}[tbp]
\centering
\includegraphics[scale=0.22]{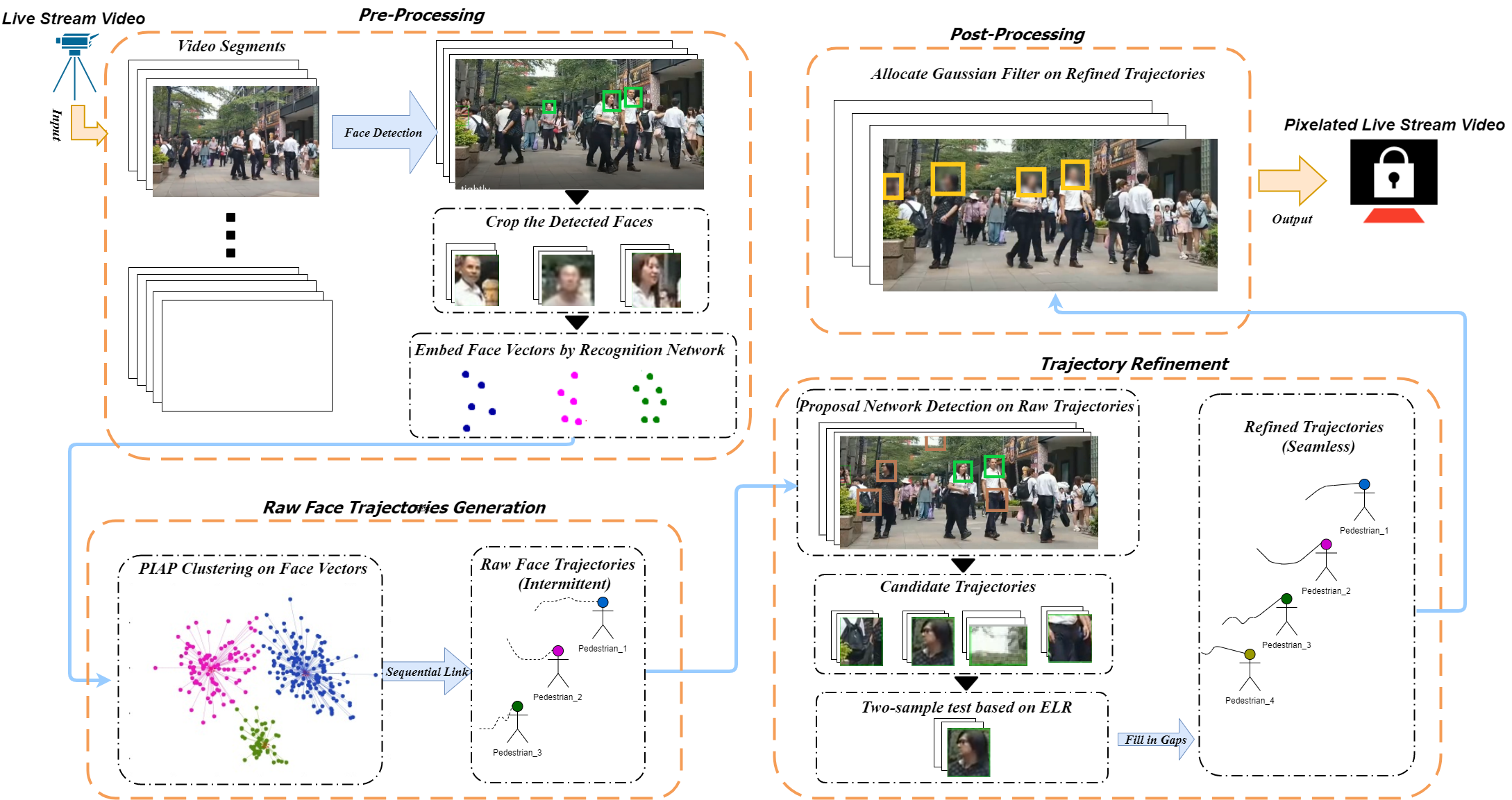}
\caption {Proposed framework of FPVLS.}
%\vspace{-1em}
\label{fig2}
\end{figure*}

\section{Proposed Methods}
Consider the aforementioned pros and cons; we construct the FPVLS as depicted in Fig.~\ref{fig2}. Along with a preprocessing and a postprocessing stage, FPVLS owns two core stages: the raw face trajectories generation stage and the trajectory refinement stage. In Fig.~\ref{fig2}, stages are enclosed by yellow rectangular, and the pivotal intermediate results or algorithms are surrounded by dark rectangular.
\par
Specifically: $\textbf{(1) Prepocessing:}$ We design a buffer section at the beginning of each live streaming. The buffer section enables FPVLS to slice an entire video streaming into video segments, thereby launching the later trajectory refinement stage. Besides, the buffering time can be leveraged to designate the streamers' faces (like touching the phone screen). Face detection and embedding networks are also applied in this stage to generate face vectors. $\textbf{(2) Raw Face Trajectories Generation:}$ Raw face trajectories link the same person's face vectors across frames through clustering. For the clustering afoot, the number of people showing up in streaming cannot be known in advance, the face vectors contain noises, and the real-time efficiency must be met. Hence, we propose the PIAP upon AP~\cite{frey2007clustering}. Positioned information revises the affinities, thereby altering the embedding variations and excluding the noises as outliers. The incremental propagation shoves the consensus-reaching process, thereby, guarantees efficiency. $\textbf {(3) Trajectory Refinement:}$ Besides the noises, raw trajectories incur intermittences due to the false negatives in detection. A proposal network is built to propose suspicious face areas. Afterward, the two-sample test based on empirical likelihood ratio statistic is applied to deny the inappropriate bounding boxes yielded by the proposal network and fill the accepted ones into the intermittences for refinement. The discontinuity of raw trajectories is therefore elaborately vanished without provoking the over-pixelation problem. $\textbf {(4) Postprocessing.}$ Gaussian filters with various kernel values can be placed on the non-streamers' trajectories to generate the final pixelated streaming and then broadcast to the audience.

%Thereupon, we propose the Positioned Incremental Affinity Propagation (PIAP) clustering. Classic Affinity Propagation~\cite{frey2007clustering} (AP) copes with the ill-defined cluster number problem through message-passing. We further introduce the positioned information into the message-passing. Positioned information revises the affinities, thereby, alters the embedding variations as well as excludes the noises as outliers. Frames are closely correlated, we therefore bring an incremental propagation manner to shove the consensus-reaching process and guarantee the efficiency.
\subsection{Prepossessing}
\subsubsection{Video Segments}
FPVLS deals with video live streaming; the processing speed of FPVLS, counted by pixelation frames-per-second (FPS), shall be greater than or equal to the streaming's original broadcasting speed, which is also counted by FPS. Usually, the broadcasting FPS is constant. Therefore, if we denote the broadcasting FPS as ${FPS}$, we have pixelation FPS $\geq {FPS}$ as facts. Then, we can stack every $\mathcal{N}$ frames into a short video segment by demanding an at least $(2*\mathcal{N})$ frames buffer section at the very beginning of streaming without causing discontinuities in the broadcasting after pixelation (\textbf{Proposition 1}).

\par
Afterward, leveraging image-based face detection and recognition networks frame-by-frame, face vectors in a segment are obtained. Video segments reform the commonly hours-long video live streaming into numerous segments in seconds level. In this manner, we alleviate the conflict between accuracy and efficiency by transforming it into a broadcasting latency. Considering the primal latency brought by communication and compression, such a buffering latency is minor and may not be noticed. However, the length of the segments shall still be restrained within a rational interval to balance the performance of FPVLS and the user-experienced latency.

%%%%%%%%%%%%%%%%%%%%%%%%%%%%%%%%%%%%Proof section here%%%%%%%%%%%%%%%%%%%%%%%%%%%%%%%%%%%%%
\begin{proposition}
  \vspace{1em}
If video segments length is $\mathcal{N}$ frames, a $(2*\mathcal{N})$ frames buffer section can ensure the broadcasting continuity after pixelation.
\end{proposition}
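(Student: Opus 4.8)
The plan is to treat the system as a three‑stage pipeline — frame acquisition, per‑segment processing (detection, embedding, clustering, and the refinement of Section~III), and broadcasting — and to show that once the $2\mathcal{N}$‑frame buffer has been filled the broadcast stage never underflows, i.e.\ a pixelated frame is available at every broadcast tick. First I would fix the timing model, measuring all times from the start of streaming: incoming frames arrive at the constant rate ${FPS}$; the pixelated output must be emitted at the same constant rate ${FPS}$; and from the standing hypothesis (pixelation FPS $\ge {FPS}$) processing a block of $\mathcal{N}$ frames takes time at most $\mathcal{N}/{FPS}$ — that is, no longer than the time needed to record those $\mathcal{N}$ frames.

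Next I would write down the schedule and verify it by induction on the segment index $k$. Segment $k$, consisting of frames $(k-1)\mathcal{N}+1,\dots,k\mathcal{N}$, is fully acquired at time $k\mathcal{N}/{FPS}$; its processing can start then and, taking at most $\mathcal{N}/{FPS}$, finishes no later than time $(k+1)\mathcal{N}/{FPS}$. During exactly the interval $[k\mathcal{N}/{FPS},\,(k+1)\mathcal{N}/{FPS}]$ segment $k+1$ is being acquired, so processing of segment $k$ and acquisition of segment $k+1$ overlap and both complete by $(k+1)\mathcal{N}/{FPS}$; hence at most two segments' worth of frames, i.e.\ $2\mathcal{N}$, are ever held at once. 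I would then assign the broadcast of the pixelated segment $k$ to the interval $[(k+1)\mathcal{N}/{FPS},\,(k+2)\mathcal{N}/{FPS}]$: this lasts $\mathcal{N}/{FPS}$ at rate ${FPS}$, its material is ready at its start time by the previous sentence, and the end time of segment $k$'s broadcast equals the start time of segment $(k+1)$'s broadcast, so consecutive broadcasts abut with no gap. The base case $k=1$ is the only place the buffer size enters: the first broadcast begins at $2\mathcal{N}/{FPS}$, which is exactly the instant the $2\mathcal{N}$‑frame buffer has finished filling, and from then on the induction guarantees an uninterrupted output stream.

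Finally I would observe that $2\mathcal{N}$ is tight under these hypotheses: when pixelation FPS $={FPS}$, segment~1 is not completely acquired before $\mathcal{N}/{FPS}$ and its processing then consumes a further $\mathcal{N}/{FPS}$, so no pixelated frame can exist before time $2\mathcal{N}/{FPS}$, and any shorter buffer would force the broadcast to start before its first frame is produced — a discontinuity. Since the whole argument is latency accounting for a pipeline, the only delicate point, and the one I would spell out most carefully, is the worst‑case timing: justifying the bound $\mathcal{N}/{FPS}$ on processing a segment purely from pixelation FPS $\ge {FPS}$, and checking that the buffer is never momentarily drained at a broadcast tick, which reduces to the inductive claim that each broadcast interval begins no earlier than the corresponding processing‑completion time.
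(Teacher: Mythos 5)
Your proof is correct and takes essentially the same route as the paper's: both decompose the per-frame delay into acquisition time (up to the end of the segment), processing time (at most $\mathcal{N}/{FPS}$, from pixelation FPS $\geq {FPS}$), and the in-segment offset at broadcast, arriving at a worst-case lag of $2\mathcal{N}$ frames that the buffer absorbs. Your inductive schedule of abutting broadcast intervals makes the continuity claim slightly more explicit than the paper's per-frame lag bound, and the tightness remark is a small addition the paper does not include, but the underlying accounting is identical.
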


\begin{proof}
The recording of a video is continuous and normally the recording FPS $= {FPS}$.
FPVLS receives this video as raw input and processes on the unit of video segments. For a certain frame $Fr$ recorded at frame number $f$, FPVLS will stack $Fr$ into the segment $Sg$; then process the entire segment $Sg$; and finally broadcast $Sg$.

\par
Thus, we can have:
\begin {itemize}
\item The recording of the all the frames belonging to $Sg$ is completed at time $\{\lceil \frac{f}{\mathcal{N}} \rceil * \frac{\mathcal{N}}{FPS}\}$.\\
\item FPVLS takes at most $\{\frac{\mathcal{N}}{FPS}$\} seconds to process $Sg$.\\
\item $Sg$ is then broadcast, and extra $\{\frac{f}{FPS}- \lfloor \frac{f}{\mathcal{N}} \rfloor *\frac{\mathcal{N}}{FPS}\}$ seconds are needed to display $Fr$, as $Sg$ is broadcast from its beginning frame.
\end{itemize}
\par
The broadcasting frame number of $Fr$ after FPVLS is the sum of the above three:\newline
\[FPS*\{\lceil \frac{f}{\mathcal{N}} \rceil * \frac{\mathcal{N}}{FPS}+\frac{\mathcal{N}}{FPS}+\frac{f}{FPS}- \lfloor \frac{f}{\mathcal{N}} \rfloor *\frac{\mathcal{N}}{FPS}\}\]
\[\Rightarrow \{f + \mathcal{N} + {\mathcal{N}} * ( \lceil \frac{f}{\mathcal{N}} \rceil - \lfloor \frac{f}{\mathcal{N}} \rfloor)\}    \Rightarrow f+2*\mathcal{N} \]
That is, when segment length is $\mathcal{N}$ frames, FPVLS broadcasts any frame with an at most $(2*\mathcal{N})$ frames lag. We can directly cover this lag by demanding a $(2*\mathcal{N})$ frame long buffer section at the beginning of streaming.
\qedhere
\vspace{1em}
\end{proof}

%%%%%%%%%%%%%%%%%%%%%%%%%%%%%%%%%%%%Proof section here%%%%%%%%%%%%%%%%%%%%%%%%%%%%%%%%%%%%%

\subsubsection{Face Detection and Recognition on Frames}
In this paper, we use MTCNN~\cite{zhang2016joint}, and CosFace~\cite{wang2018cosface} to process every frame in turn. MTCNN and CosFace are chosen for the convenience of later affinity in PIAP and the training of the proposal network. However, do note that the detection and embedding networks could be substituted by other state-of-the-art work like PyramidBox~\cite{tang2018pyramidbox} \& ArcFace~\cite{deng2019arcface}. Also, instead of indulging in comparing or tuning pre-trained face detection and embedding algorithms, we intentionally avoid using the most advanced algorithms to demonstrate the performances of FPVLS under relatively poor priors. MTCNN accepts arbitrary size inputs and detects face larger than $12*12$ pixel. CosFace reads the face detection results as input and generates $512$ dimension feature vector for each face. Face alignment is applied right after detection. Every detected face is cropped out from the frame and aligned to frontal pose through the affine transformation before embedding. Since we built the refinement stage, the detection threshold is initiated to a higher value for better suppressing the false positives.

\subsection{Raw Face Trajectories Generation}
The embedded face vectors are further clustered to connect the same face across frames and segments. DBSCAN~\cite{ester1996density}, FCM~\cite{pal1995cluster}, Affinity Propagation (AP)~\cite{frey2007clustering}, and their extensions are the top candidates for objects clustering in data streams. As noisy detection results are common in videos and the data-size for clustering is also unbalanced, the density-based DBSCAN is declined. Fuzzy logic based FCM mandatorily requires the cluster number for initialization. Despite the ability to handle the ill-defined cluster number and cluster under unbalanced data size, the remaining AP also has intrinsic noise-sensitive and time-consuming defects. Therefore, in the proposed PIAP, pair-wise affinities based on deep feature vectors are revised according to positioned information. The time to reach the consensus are significantly shortened by assigning responsibilities and availabilities to new-coming vectors according to the ones in the last segment. PIAP produces the face clusters, and an inner-cluster sequential link quickly generates the faces' raw trajectories afterward.

\subsubsection{Affinity Propagation}
The key of AP~\cite{frey2007clustering} is to select the exemplars, which are the represents of clusters, and then assign the rested nodes to its most preferring exemplars. Similar to traditional clustering algorithms, the very first step of AP is the measurement of the distance between data nodes, also called the similarities or affinities. Following the common notation in AP clustering, $i$ and $k$, ($i,k \in R^D$) are two of the data nodes and $S(i,k)$ denotes the similarity between data node ${i}$ and ${k}$, thereby indicating how well the data node $k$ is suited to be the exemplar for data point $i$. $S$ is the similarity matrix stores the similarities between every two nodes. $S(i,k)$ is the element on row $i$, column $k$ of $S$. Similar notations are also used in below. The diagonal of the similarity matrix $S(k,k)$ denotes the preference of selecting $k$ as an exemplar. $S(k,k)$ is called as the preference value $p(k)$ in some studies~\cite{wang2013multi}.
\par
A series of messages are then passed among all data nodes to reach the agreement on exemplars' selection. These messages are the responsibilities $R(i,k)$ and availabilities $A(i,k)$. $R$ and $A$ are the responsibility and availability matrices. Node $i$ passes $R(i,k)$ to its potential exemplar $k$, indicating the current willingness of $i$ choosing $k$ as its exemplar considering all the other potential exemplars. Correspondingly, $k$ responds $A(i,k)$ to $i$, implying the current willingness of $k$ accepting $i$ as its member considering all the other potential members. Then, $R(i,k)$ is updated according to $A(i,k)$, and a new iteration of message passing is issued. The consensus of the sum-product message passing process ($R(i,k)$ and $A(i,k)$ remain the same after an iteration) stands for the final agreement of all nodes in exemplars' selection and clusters' association is reached. The sum of $R(i,k)$ and $A(i,k)$ can directly give the fitness for choosing $k$ as the exemplar of $i$. Apart from the ill-defined cluster number, AP is not sensitive to the initialization settings; selects real data nodes as exemplars; allows asymmetric matrix as input; is more accurate when measured in the sum of squared errors. Therefore, considering the subspace distribution (measured by the least squares regression), AP is effective in generating robust and accurate clustering results for high dimension data like face vectors.
\par
We initialize both of the responsibility and availability matrices to zero for the commencement, and following~\cite{frey2007clustering}, the computation of $R$ and $A$ are:

\begin{equation}
R(i,k) \leftarrow S(i,k) - \max_{{k'}, s.t.{k'} \neq k}\{A(i,k')+S(i,{k'})\}
\label{eq1}
\end{equation}
\begin{equation}
A(i,k) \leftarrow \min\bm{\{} {0, R(k,k)+ \sum_{{i'},{i'}\notin {\{i,k\}}} \max \{0, R({i'},k)\} \bm{\}}}
\label{eq2}
\end{equation}
Equation~(\ref{eq3}) is used to fill in the elements on the diagonal of the availability matrix:
\begin{equation}
A(k,k) \leftarrow \sum_{i', s.t. i' \neq k}\max\{0, R(i',k) \}
\label{eq3}
\end{equation}
Update responsibilities and availabilities according to~(\ref{eq1}),~(\ref{eq2}) and~(\ref{eq3}) till convergence, then
the criterion matrix $C$, which holds the exemplars as well as clustering results, is simply the sum of the availability matrix and responsibility matrix at each location.
\iffalse
\begin{equation}
C(i,k) \leftarrow R(i,k)+A(i,k)
\label{eq4}
\end{equation}
\fi
The highest value in each row of $C$ is designated as the exemplar. Naturally, as each row stands for a data node, data nodes that share the same exemplar are in the same cluster. Therefore, the exemplars $\hat{c}=\{c_1,...c_i,...c_n\}, \hat{c}\in C$ can be computed as:
\begin{equation}
c_{i}=\arg \max_{k} \{A(i,k)+R(i,k)\}
\label{eq5}
\end{equation}

%\section{Proposed Method}
\subsubsection{Positioned Incremental Affinity Propagation}
\par
%we can leverage to help classic AP recovery from the outliers yielded by the embedding network
(\ref{eq1})-(\ref{eq5}) are the original equation for classic AP. In most trackers, the critical assumption is that the motion of the same face between consecutive frames is minor within a single shot. Correlation filters, IOU trackers, or other online learning networks are all working under such an assumption. However, in live streaming videos, the frequent camera shakes and the resolution conversion invalid this assumption. The only fact that helps exclude outliers from the classic AP algorithm is that any face can only appear in one position in a single frame. Therefore, faces belonging to the same frame should have small enough similarities. Cosine similarity is then brought for measurements. Let $j$ stands for the other face vectors that belong to the same frame as $i$. The similarity matrix can be computed as:
\begin{equation}
   S(i,k)=
    \left\{
             \begin{array}{lr}
             \frac{i \cdot k}{\Vert i \Vert \Vert k \Vert} -1,  \quad\mbox{ if }\ k \notin{j}\\

          \\
             -1, \quad \mbox{ if } \ k \in j
             \end{array}
    \right.
\label{eq6}
\end{equation}
$S(i,k) \in [-1,0]$, since the negative similarity is more decent for the convergence of message passing.
\par
We intentionally avoid directly manipulating the vectors' value like in some other face clustering works~\cite{cao2015diversity}. Although the embedding network is not perfect and will surely generate outliers due to many factors, it is still somehow reliable considering the face vectors' subspace distribution. Instead of any rush tuning on face vectors, we solely force the similarities to the minimum if faces come from a single frame.
\par
Moreover, $p(k)$ which is the diagonal of the similarity matrix $S$, reflects how suitable is $k$ considered as the potential exemplar for itself. $p(k)$ is directly assigned to $R(k,k)$ according to~(\ref{eq1}). Therefore, in~(\ref{eq2}), the left side of the equation is the evidence of how much $k$ can stands for itself ($R(k,k)$) and how much $k$ can stands for the others ($\sum_{{i'},{i'}\notin {\{i,k\}}} \max\{0, R({i'},k)\}$). Every positive responsibilities of $k$ contributes to $A(i,k)$. As we mentioned above, any face can only have one position in a single frame. $A(i,k)$ should not count $j$'s ($j \in k$) choice as part of $i$'s availability. One step further, the choice of $j$ will actually repel $i$ in availability and equation~(\ref{eq2}) shall be rewritten as:
\begin{equation}
    \begin{aligned}
      &A(i,k) \leftarrow  \min\bm{\{}0, R(k,k)+ \\
             &\sum_{{i'},{i'}\notin {\{i,j,k\}}} \max\{0, R({i'},k)\}   -\sum_{{i'},{i'}\in {\{j\}}} \max\{0, R({i'},k)\}\bm{\}}
           \end{aligned}
\label{eq7}
\end{equation}
(\ref{eq6}) and (\ref{eq7}) ensure $i$ and $j$ are mutual exclusive in the clustering process. If $i$ variants and aggregates to another cluster contains $j$, they will repel each other during the affinity propagation process. The significant drop in their availabilities will cause the corresponding responsibilities also drastically reduced. This chain reaction will not stop until one of them is revised to another cluster through the message passing. Also, as $S(i,j)=-1$, responsibility in~(\ref{eq1}) guarantees $j$ will not be selected as the potential exemplar of $i$ during initialization.
\par
Besides the position information, we employ incremental clustering to address the efficiency issue. The key challenge in building incremental AP is that the previously computed data nodes have already connected through nonzero responsibilities and availabilities. However, the newly arrived ones remain zero (detached). Therefore, the data nodes coming at different timestamps also stay at varying statuses with disproportionate relationships. Simple continuously rolling their affinity at each timestamp cannot reach the solution of incremental AP.
\par
In our case, video frames are strongly correlated data. According to the previous segment, we could assign a proper value to newly-arrived faces in adjacent segments without affecting the clustering purity. The face vectors belonging to a particular person are supposed to stay closer to each other. That is, the same person's faces should gather within a small area in the feature space. Thus, our incremental AP algorithm is proposed based on the fact that: if two detected faces are in adjacent segments and refer to one person, they should be clustered into the same group as well as have similar responsibilities and availabilities. Rather than starting with zero, we can assign the same responsibilities and availabilities to the newly-coming vectors and considerably trim the message-passing process. Such a fact is not well considered in past studies of incremental affinity propagation~\cite{cao2015diversity,sun2014incremental}.
\par
Following the commonly used notations, the similarity matrix is denoted as $S_{t-t'}$ at $t-t'$ time with ($M_{t-t'}*M_{t-t'}$) dimension, where $(t-t')= \frac{\mathcal{N}}{FPS}$. And the responsibility matrix and availability matrix at time $t-t'$ are $R_{t-t'}$ and $A_{t-t'}$ with a same dimension as $S_{t-t'}$. Mark the closest face vector for the newly arrived $i$ as $i'$, we have:
\begin{equation}
    i'=\arg \max_{i', i'\leq M_{t-t'}} \left \{S(i,i')\right \}
\label{eq9}
\end{equation}
Then, the incremental updation of $R_{t}$ according to $R_{t-t'}$ and $A_{t}$ according to $A_{t-t'}$ are:
\begin{equation}
  %\vspace{-1em}
    R_{t}(i,k)=
    \left\{
             \begin{array}{lr}
             R_{t-t'}(i,k), \quad i\leq M_{t-t'}, k\leq M_{t-t'}\\
             \\
             R_{t-t'}(i',k), \quad i > M_{t-t'}, k\leq M_{t-t'}\\
             \\
             R_{t-t'}(i,k'), \quad i\leq M_{t-t'}, k > M_{t-t'}\\
             \\
             0, \quad i > M_{t-t'}, k > M_{t-t'}
             \end{array}
\right.
%\vspace{-1em}
\label{eq8}
\end{equation}
\begin{equation}
  %\vspace{-1em}
    A_{t}(i,k)=
    \left\{
             \begin{array}{lr}
             A_{t-t'}(i,k), \quad i\leq M_{t-t'}, k\leq M_{t-t'}  \\
             \\
             A_{t-t'}(i',k), \quad i > M_{t-t'}, k\leq M_{t-t'}\\
             \\
             A_{t-t'}(i,k'), \quad i\leq M_{t-t'}, k > M_{t-t'}\\
             \\
             0, \quad i > M_{t-t'}, k > M_{t-t'}
             \end{array}
    \right.
    %\vspace{-1em}
\label{eq10}
\end{equation}
$R_{t}(i,k)$ and $A_{t}(i,k)$ are the responsibilities and availabilities of newly arrived face vectors at time $t$. $M_{t-t'}$ stands for the amount of faces at time $t-t'$. Note that the dimension of all the matrices is increasing with time.

\par
Denote $Z_{p}^{q}=\{Z_{1}^{q},Z_{2}^{q},Z_{3}^{q}...Z_{p}^{q}\}$ as the set of all $p$ face vectors extracted in segment $q$ at time $t$. The full process of PIAP algorithm is summarized as \textbf{Algorithm 1}.
\begin{algorithm}[htbp]
\caption{Positioned Incremental Affinity Propagation}
\label{alg:algorithm}
\textbf{Input}: $R_{t-t'}$,$A_{t-t'}$,$C_{t-t'}, Z_{p}^{q}$\\
\textbf{Output}: $R_{t},A_{t},C_{t}$
\begin{algorithmic}[1] %[1] enables line numbers
\WHILE {not end of a live-streaming}
\STATE Compute similarity matrix according to~(\ref{eq6}).
\IF {the first video segment of a live-stream}
\STATE Assign zeros to all responsibilities and availabilities.
\ELSE
\STATE Compute responsibilities and availabilities for $Z_p^q$ according to equation~(\ref{eq9}),~(\ref{eq8}) and~(\ref{eq10}).
\STATE Extend responsibilities matrix $R_{t-t'}$ to $R_{t}$, and availabilities $A_{t-t'}$ to $A_{t}$.
\ENDIF
\STATE Message-passing according to equation~(\ref{eq1}),~(\ref{eq7}),~(\ref{eq3}) and~(\ref{eq5}) until convergence.
\ENDWHILE
\end{algorithmic}
\end{algorithm}

\par
Handled by our proposed PIAP algorithm, we can quickly cluster faces in new segments; exclude the outliers, and discover newly spotted faces.  Unlike newly spotted faces, the false positives are quickly isolated by other real faces while clustering, thereby not forming any long and steady enough trajectories. As a result, the scattered faces referring to the same person could be linked together sequentially to form our raw face trajectories.

\subsection{Trajectory Refinement}
If we blur non-streamers' faces instantly on raw trajectories, fast blinking mosaics can be observed everywhere. Owing to inadequate training samples, poor streaming quality, and noisy scenes, false negatives are manifest in frame-wise face detection results. Therefore, the created raw trajectories are always intermittent and incur blinking mosaics. Regarding little breaks that only span over a few frames in the raw trajectory, we can quickly smooth them through interpolation. The real contest remains on the gaps accumulated frame-after-frame by false negatives in detection.
\par
Direct smoothing on gaps leads to massive over-pixelation problems as we cannot tell whether the miss-detection or heavy occlusions cause the gap. To eliminate the false negatives as well as dodge over-pixelation, we bring in the trajectory refinement stage. Concretely, the trajectory refinement stage grounds on a proposal network and the corresponding two-sample test based on empirical likelihood ratio (ELR) statistics to identify the miss-detection from a bunch of suspicious face areas. The proposal network is a relatively shallow CNN aiming to propose suspicious face areas on the gap frames at a high recall rate. We cull the non-faces from proposed face areas according to the two-sample test based on ELR since \textbf{(i)} it refrains from the computation burden of another embedding network and catches the extra richness of the detection; \textbf{(ii)} it does not require the proposal network results to comply with a Gaussian distribution, thereby, is not parametric-imposing. Combing the two-sample test with the proposal network, we deny the inappropriate bounding boxes and receive the proper ones for refinements.

\subsubsection{The Proposal Network}
\par
The proposal network shares some mutual characteristics with the cascaded face detection networks. The cascaded architecture introduces multiple deep convolution networks to predict faces in a coarse-to-fine manner. The first several networks in the architecture also dedicate to proposing suspicious face areas just as our proposal network does~\cite{zhang2016joint, chen2016supervised, qin2016joint}. For the sake of simplicity, here, we peel off the first network from MTCNN~\cite{zhang2016joint} to construct our proposal network. MTCNN leverages three cascaded deep convolution networks to predict faces and their landmark locations. The very first P-net in MTCNN and our proposal network reach the same goal.
\par
Moreover, the loss function of the proposal network aims at the highest recall rate. However, ordinary face detection networks always focus on the highest accuracy. The higher recall rate suggests more false positives in results. Therefore, if the proposal network is built and trained in a single-handed manner, it may incur the low precision problem. That is, when measured through the IoU of bounding boxes, the proposal network trends to produce many overlapped false positives. MTCNN is trained as an entirety, and the performance of the P-net is contained by the latter R-net and O-net to achieve high accuracy. Thus, P-net retains a high recall rate and will not incite the precision problem. The structure of the proposal network is shown in Fig.~\ref{fig3}.

\begin{figure}[htbp]
\centering
\includegraphics[scale=0.29]{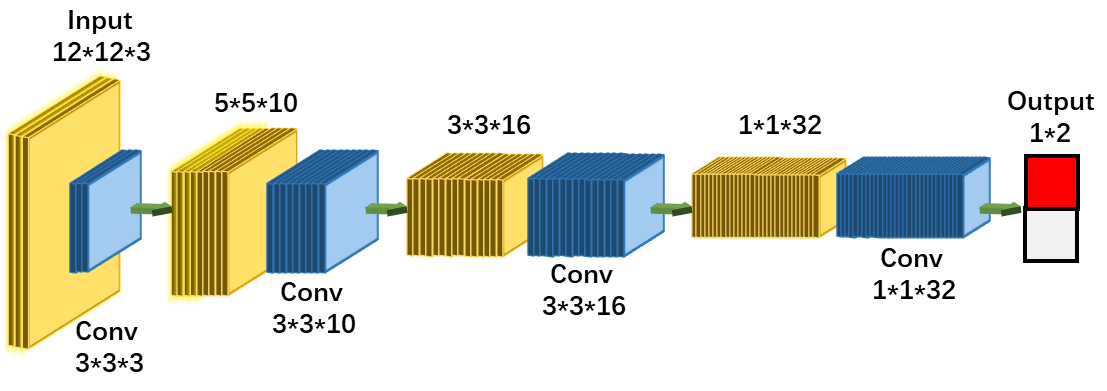}
%\vspace{-0.2em}
\caption{The structure of the proposal network. The input and network layers are depicted in yellow. The convolution kernels are rendered in blue. The output is a binary classification result, and the output of the last but one layer is used for the trajectory refinement.}
\vspace{-1em}
\label{fig3}
\end{figure}

\par
If any miss-detection is captured through the proposal network and received by the two-sample test, they will be filled back into the raw trajectories. Afterward, gaps are to be patched into tiny breaks, and the interpolation will reboot to smooth the trajectories. In this manner, we can recursively refine the trajectories seamlessly. Moreover, this manner does not request the proposal net to spot every miss-detection. We thereby can raise the proposal network detection threshold and use Non-Maximum Suppression (NMS)~\cite{neubeck2006efficient} to suppress the false positives under a high recall rate as well.

\subsubsection{Two-sample test Based on Empirical Likelihood Ratio}
Sending the gap areas of the raw trajectory to the proposal network, we can get some of the miss-detection along with other false positives as results. Associate through Hungarian algorithm~\cite{bewley2016simple}, all the proposed faces form candidate trajectories, which correspond to orange dash lines in Fig. 4. Generally, with false positives yielded by the
proposal network, there is likely to have more than one candidate trajectories. Mark an observation of a raw trajectory as the sequence $z=\{z_1,z_2,...,z_m\}$, and mark one of its possible candidate trajectories as the sequence $z'=\{z'_1,z{'}_2,...,z{'}_{m^{'}}\}(m'< m)$. $z$ is yielded by MTCNN, and likewise, $z'$ is yielded by the foremost network in MTCNN. We use the second last layer's output in the proposal network as the vectors of $z'$ and trace back the output of the same layer for $z$. Thereby, as the last but one layer is the size of 1*1*32 in Fig.~\ref{fig3}, $z$ and $z'$ are $32*m$ and $32*m'$ matrices.

\par
The expense of high recall rate is the massively generated false positives. A direct method to exclude these false positives and identify the rest is to re-train another face embedding network and feed the results to PIAP once again. Notwithstanding the above method might be theoretically capable, we cannot ensure the real-time efficiency since the number of proposed faces could be substantial. Furthermore, there is also no guarantee that the deep network will project false positives as outliers because the embedding network is mainly trained on the face dataset. Other than embedding networks, the Siamese net~\cite{bertinetto2016fully, guo2017learning, wang2018learning} might be applicable. Siamese net is designed to identify the slight differences between two similar inputs through a completely weight-sharing, dual structure. However, attribute to its shared weight structure as well, the Siamese net is extremely susceptible to noise. Therefore, it is incapable of sieving the lightly-occluded faces out of the proposed suspicious faces.

\begin{figure}[htbp]
\centering
\includegraphics[scale=0.31]{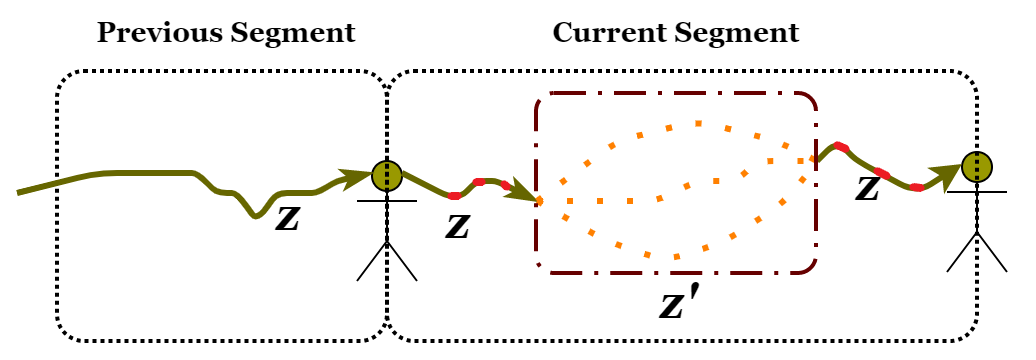}
\caption{$z$ (solid line) and all the possible $z'$ (orange dash lines). Orange dots are the suspicious faces proposed by the proposal network. Red areas on $z$ are the breaks recovered by interpolation.}
\vspace{-1em}
\label{fig4}
\end{figure}

\par
In Fig.~\ref{fig4}, the solid line corresponding to $z$ indicates the seamless trajectories after interpolation. Red areas on the solid line are the breaks recovered by interpolation. Under well-established face detection networks, for the correct or acceptable candidate trajectory $z'$, $|z|>|z'|$ should hold. Meaning the number of miss-detected faces should be less than the number of detected faces in raw trajectories. If we put the detection network insufficiency aside, $z'$ are caused by objective factors including distortions, rotations, illumination changes, and so on. Therefore, we can infer that the original network could somehow detect the miss-detected faces after some reverse-transformation of distortions, rotations, illumination changes, and so on. Substantially, under ideal conditions, $z'$ is expected to be captured directly by the detection network; however, some noises $\varepsilon_{z^{'}}$ stop deep CNN from functioning. Detection under these noises requires the help of the proposal net. Hence, we can deduce the $z'$ from $z$ according to:

\begin{equation}
   z'=f(z,\theta) + \varepsilon_{z^{'}}
\label{eq11}
\end{equation}
where $f$ is the transformation function with the hyperparameter $\theta$, and independent noise term $\varepsilon_{z^{'}} \sim N(0, \sigma_{z^{'}}^2 I)$. $I$ is the identity matrix.
\par
That is, as depicted in Fig. 4, the trajectories refinement is just like the process of linking dense dots (proposed faces) into dash lines (candidate trajectories) and then into a solid line (refined trajectories). Some state-of-the-art studies in MFT~\cite{lin2018prior} fields introduced a similar concept for trajectories optimization, but assume both $z$ \& $z'$ comply with a Gaussian distribution. They assign $f=f_G$ where $f_G$ is a Gaussian Process (GP) model and infers the correlation between tracklets through Maximum-Likelihood Estimation (MLE) on $\theta$.

\par
The GP model is sufficient in most offline tracking cases~\cite{lin2018prior, hirscher2016multiple, hou2007real, yang2013incremental}. However, in video live streaming, trajectories are refined on short video segments, containing only a small number of frames. Namely, we cannot impose $f=f_G$ because: (\textbf{i}) the amount of proposed faces within a segment is not significantly larger than the dimension number of the face vectors; (\textbf{ii}) the amount of proposed faces within a segment is insufficient in guaranteeing the proposed face sample complies with a Gaussian distribution. Inspired by~\cite{ciuperca2016empirical}, for better robustness and avoiding parametric-imposing, we propose a two-sample test to reject the proposed false positives based on empirical likelihood ratio (ELR) statistics.

\par
The goal of the two-sample test is to determine whether two distributions are different based on samples. In our case, we test whether the candidate trajectory and the corresponding raw face trajectory are two face samples that come from the same population (distribution). The statistical null hypothesis for the two-sample test is $\mathcal{H}_0: P_z=P_{z'}$. $\mathcal{H}_0$ stands if and only if $z$ and $z'$ are from the same distribution. Otherwise, $z$ and $z'$ are different at a $\alpha$ significance level, and we should reject $\mathcal{H}_0$ and the corresponding candidate trajectory. Denote the domain of face vectors is a compact set defined in a Gaussian Reproducing Kernel Hilbert Space (RKHS) $\mathit{H}$ with reproducing kernel $\mathcal{F}$. Following the common Maximum Mean Discrepancy (MMD), we measure the distribution $z$ and $z'$ by embedding them into such a Gaussian RKHS~\cite{fukumizu2004dimensionality}. Denote the mean embedding of distribution $z$ and $z'$ in $\mathit{H}$ as $\mu z$ and $\mu z'$, the question of finding optimal estimators of MMD in kernel-based two-sample tests is an estimate of statistically optimal estimators in the construction of these kernel-based tests. After the projection function $f \in \mathcal{F}$, MMD is represented by the supremum of the mean embedding of $z$ and $z'$ :
\begin{equation}
  MMD_{m,m'}[\mathcal{F}, z, z'] := \sup_{f \in \mathcal{F}} (\frac{1}{m}\sum_{i=1}^{m}[f(x_i)]-\frac{1}{m'}\sum_{i=1}^{m'}[f(y_i)] )
\label{eq13}
\end{equation}
%\begin{equation}
%  \begin{aligned}
%    &MMD_{m,m'}[\mathcal{F}, z, z'] := \sup \lVert\mu z-\mu z' \rVert \\ %%:=\mathbb{E}_{zz'}h[z,z']\hsapce{1em}
%    &:= \sup_{f \in \mathcal{F}} (\mathbb{E}_{z}[f(x)]-\mathbb{E}_{z'}[f(y)]  ) \\
%    &:= \sup_{f \in \mathcal{F}} (\frac{1}{m}\sum_{i=1}^{m}[f(x_i)]-%\frac{1}{m'}\sum_{i=1}^{m'}[f(y_i)]  )
%  \end{aligned}
%\end{equation}
$x_i$ and $y_j$ are the elements in $z$ and $z'$ after the projection. Referring~\cite{ciuperca2016empirical, ding2019linear}, $\mathcal{H}_0$ can be written as:
\begin{equation}
    \mathcal{H}_0: P_z=P_{z'}\Leftrightarrow  MMD_{m,m'}[\mathcal{F}, z, z']^2=0 \\ %= E[h(z,z')]% \quad where \quad z=(x,y),  z'=(x',y')\\
\label{eq14}
\end{equation}
According to equation~(\ref{eq13}), the right side of~(\ref{eq14}) is:
\begin{equation}
\begin{aligned}
 &MMD_{m,m'}[\mathcal{F}, z, z']^2= \langle \mu z-\mu z', \mu z- \mu z'    \rangle_\mathit{H} \\
 &=\frac{1}{m(m-1)}\sum_{i \ne j}^{m}k(x_i,x_j)+ \frac{1}{m'(m'-1)}\sum_{i \ne j}^{m'}k(y_i,y_j)\\
 &-\frac{2}{mm'}\sum_{i,j=1}^{m,m'}k(x_i,y_j)
    \end{aligned}
\label{eq15}
\end{equation}
$k(x_i,y_j)$ stands for the Gaussian kernel function of $\mathit{H}$.
\par
As $m'< m$, if we pick the nearest (in time) $m'$ face vectors in $z$, and according to~(\ref{eq15}), set $h(z,z')=\{\frac{1}{m(m-1)}\sum_{i \ne j}^{m}k(x_i,x_j)+ \frac{1}{m'(m'-1)}\sum_{i \ne j}^{m'}k(y_i,y_j)-\frac{2}{mm'}\sum_{i,j=1}^{m,m'}k(x_i,y_j)\}$ for simplification, the linear time unbiased estimator of MMD is proposed in~\cite{gretton2012kernel}. The unbiased estimator $MMD_{ub}$ of $z$ and $z'$ functions similarly as the MLE, but does not presume $z$ and $z'$ are densely sampled under a multi-variant Gaussian Process.
\begin{equation}
MMD_{ub}[\mathcal{F}, z, z'] = \frac{1}{\lfloor m'/2 \rfloor} \sum_{i=1}^{{\lfloor m'/2 \rfloor}}h(z_{2i-1},z_{2i})
\label{eq16}
\end{equation}
\par
Denote $h_i=h(z_{2i-1},z_{2i})$ for further simplification. $z$ and $z'$ are i.i.d, which implies $h_i$ are also i.i.d observations. i.e., $\bar h =0$, where $\bar h$ is the mean value. Hence, we introduce the empirical likelihood ratio (ELR) statistic~\cite{ding2019linear,owen2001empirical} $p_i$ to solve the two-sample test of $\mathcal{H}_0: \bar{h}=0$ as:
\begin{equation}
    L(p_i)= \sup_{p_i}\{\prod_{i=1}^{m'}p_i, \big\vert \sum_{i=1}^{m'} p_i=1, \sum_{i}^{m'} p_ih_i=0 \}
\label{eq17}
\end{equation}
where $p_i$ subjects to the normalization and $\bar{h}=0$ constrains. $\sum_{i}^{N} p_ih_i=0$ ensures the empirical mean of $h$ equals zero. Therefore,  any subtle difference between $z$ and $z'$ can be captured by the empirical likelihood ratio statistic $p_i$ through the pairwise discrepancy $MMD_{ub}$. The explicit solution of~(\ref{eq17}) is attained from a Lagrange multiplier argument $\lambda$ according to~\cite{owen2001empirical}:
\begin{equation}
    \begin{aligned}
        & p_i=\frac{1}{m}\frac{1}{1+\lambda h_i}\ \\s.t.
        &\sum_{i=1}^{m}\frac{h_i}{1+\lambda h_i} =0
     \end{aligned}
\label{eq18}
\end{equation}
\par
Through~(\ref{eq17}) and~(\ref{eq18}), the supremum is transferred to the ELR test $T_{ELR}$ for statistic $p_i$. If the null hypothesis $\mathcal{H}_0$ holds, we have:
\begin{equation}
    \begin{aligned}
        T_{ELR}=-2\log L(p_i) \stackrel{d}{\rightarrow} \mathcal{X}_{1}^{2} \\
    \end{aligned}
\label{eq19}
\end{equation}
where $\mathcal{X}_{1}^{2}$ is the chi-square distribution with 1 degree of freedom according to Wilk's Theorem, and $T_{ELR}$ converges in this distribution. Otherwise, we will reject the null hypothesis $\mathcal{H}_0$ when $T_{ELR} \geq \mathcal{X}_{\alpha}^{2}$, and $\mathcal{X}_{\alpha}^{2}$ is the confidence interval:
\begin{equation}
    Pr(\mathcal{X}_{1}^{2} \geq \mathcal{X}_{\alpha}^{2}) = \alpha
\label{eq20}
\end{equation}
\par
The rejection threshold of~(\ref{eq20}) could be computed through the off-the-shelf python package. Therefore, leveraging $p_i$ as the new parametric feature to eliminate the false positives raised by the proposal network is extremely time-saving. The cost of getting  $p_i$ is solely related to the computation of $T_{ELR}$, and according to~(\ref{eq15}), this cost is linear to the sample size. Our two-sample test algorithm based on empirical likelihood ratio statistic achieves impressing results in real tests.
\par
With the efforts of~(\ref{eq19}) and~(\ref{eq20}), the received $z'$ is added to $z$'s trajectory. Once again, fast interpolation applies to fill up the tiny breaks that still exist and then generate the refined trajectory.

\subsection{Postprocessing}
The final pixelation happens on the refined non-streamers' trajectories through Gaussian filters. The choice of different Gaussian kernels can offer different pixelation styles. Pixelated video segments will directly be broadcast to the audience. An overall procedure of FPVLS is described in \textbf{Algorithm 2}.
\begin{algorithm}[H]
\caption{Overall Procedures of the FPVLS}
\label{alg:algorithm}
\begin{flushleft}
\textbf{Input}:Raw video live streaming\\
\textbf{Output}:The pixelated video live streaming
\end{flushleft}
\begin{algorithmic}[1] %[1] enables line numbers
\WHILE {not end of a live stream}
\STATE Stack the streaming frames into video segments.
\STATE Use the face detection and embedding networks to generate face vectors within a segment.
\IF {is the buffer section}
\STATE Designate (for example, touching the phone screen) the streamers' faces.
\ELSE
\STATE PIAP clustering on face vectors to form the raw face trajectories.
\STATE Interpolation on raw face trajectories to recover the breaks.
\STATE The proposal network detection and two-sample test based on ELR to compensate the detection lost.
\STATE Filling the gaps with detection lost and interpolation again to form refined trajectories.
\ENDIF
\STATE Gaussian Filter blurs the non-streamers' trajectories and broadcast to audience.
\ENDWHILE
\end{algorithmic}
\end{algorithm}

\section{Experiments and Discussions}

\begin{table*}[h]
\caption{Details of the live video streaming dataset}
\centering
\begin{tabular}{|c||cccc||cccc|}
\hline
{Dataset} &\tabincell{c}{Quantity\\ of Videos} & Category  & Resolution & \tabincell{c}{People Occurred}  &Frames & \tabincell{c}{Total\\Face Labels*}   &{\tabincell{c}{Streamers'\\Face Labels*}} &{\tabincell{c}{Irrelevant People's\\Face Labels*}}  \\
\hline\hline
$HS$    &4 &  a,b,c,d       & 720p/1080p   &$\gg$ 2     &4133  &17366 &7989 &9377 \\

$LS$    &8 &  a,b,c,d      & 360p/480p   & $\gg$ 2    &4680  &16112  &7090  &9022 \\

$LN$     &4 &  a,b,c,d       &360p/480p   & $\le$2    &5692 &9849  &5038 &4811\\

$HN$    &4 &  a,b,c,d      &720p       & $\le$2    &4867  &7713  &3956 &3757\\
\hline
\end{tabular}
\begin{tablenotes}
      \small
      \item $\qquad$*: the value amounts the occurrences of the faces. Meaning the same face is repeatedly counted in different frames.
    \end{tablenotes}
\label{tbl1}
%\vspace{-1em}
\end{table*}

\subsection{Dataset}
As there are no available datasets and benchmark tests for reference, we collected and built a video live streaming video dataset from YouTube and Facebook platforms and manually labeled dense annotations (51040 face labels and 26967 irrelevant people's or privacy-sensitive face labels). We followed the notation paradigm of MOT15~\cite{leal2015motchallenge} and MOT 17~\cite{milan2016mot16}, which are benchmark datasets in tracking. The major difference is we de-associate the heavily occluded and other temporarily un-identifiable faces from their trajectories, and mark them with the 'over-pixelation' label. On the contrary, the tracking datasets give credits to trackers that can produce continuous trajectories under heavy occlusion.
\par
In total, 20 streaming video fragments come from 4 different streaming categories (a: street interview; b: dancing; c: street streaming; d: flash activities) are collected. As demonstrated in Table~\ref{tbl1}, these labeled video fragments are further divided into four groups according to the broadcasting resolution and the number of people showing up in streaming. Live streaming videos having at least 720p resolution are marked as high-resolution $H$, and the rest are low-resolution $L$. Similarly, the live streaming videos contain more than two people are sophisticated scenes $S$, and the rest are naive ones $N$.

\iffalse
\begin{table}[H]
\caption{Details of the Live Stream Video Dataset}
\centering
\begin{tabular}{c|ccccc}
\toprule
Property  &\tabincell{c}{Quantity\\ of videos} & Category &Frames & Resolution & \tabincell{c}{Number\\ of people} \\
\hline
$HS$    &4 &  a,b,c,d     &4133  & 720p/1080p   &$\gg$ 2  \\
$LS$    &8 &  a,b,c,d     &24680  & 360p/480p   & $\gg$ 2  \\
$LN$    &4 &  a,b,c,d     &5680  &360p/480p   & $\le$2  \\
$HN$    &4 &  a,b,c,d     &4867  &720p       & $\le$2  \\
\bottomrule
\end{tabular}
%\label{tab:plain}
\vspace{-1em}
\end{table}
\fi

\begin{table*}[h]
\centering
\caption{Pixelation results on collected video live streaming dataset}
\begin{tabular}{|c||cccc||cccc||c|c|c|c|}
\hline
\multicolumn{1}{|c||}{\multirow{2}{*}{Method}} &{MFPA$\uparrow$}  &{MFPA$\uparrow$}  &{MFPA$\uparrow$}  &{MFPA$\uparrow$} &{MFPA$\uparrow$}  &{MFPP$\uparrow$} &{MFPP$\uparrow$}
	\multirow{2}{*}&{MFPP$\uparrow$}
    &\multicolumn{4}{c|}{Entire Dataset}\\
    \cline{10-13}
    &($HS$)&($LS$)&($HN$)&($LN$)&($HS$)&($LS$)&($HN$)&($LN$)&
    \tabincell{c}{MP$\uparrow$\\(frames)} &{OPR$\downarrow$} &{FPS$\uparrow$} &{NoP}\\
\hline \hline
YouTube\cite{youtube2020}     &0.45  &0.42 &0.68 & 0.47 &0.53 &0.47 &0.77 &0.63 &238    & 0.31   & $N/A$ &$\times$\\
Azure\cite{Juliako2020}      &0.43  &0.47 &0.64  &0.56 &0.50 &0.53 &0.70 &0.68   &203  & 0.30    & $N/A$  &$\times$\\
KCF\cite{henriques2014high}     &0.35 &0.32 &0.38 & 0.31  &0.41 &0.40 &0.44 &0.40    &113  &0.28  &\textbf{$>$100} &\checkmark\\
ECO\cite{danelljan2017eco}        &0.27 &0.28 &0.34 & 0.31  &0.37 &0.37 &0.41 &0.40    &148  &0.24 &{30\textasciitilde 50} &\checkmark\\
IOU\cite{bochinski2017high}  &0.58 &0.53 &0.60 & 0.57 &0.68 &0.64 &0.71 &0.68   &241 &0.23 &{30\textasciitilde 50} &\checkmark\\
POI\cite{yu2016poi} &0.62 &0.61 &0.70 & 0.67 &0.70 &0.70 &0.83 &0.81  &275   &0.27 &{30\textasciitilde 50} &\checkmark\\
\hline\hline
%\tabincell{c}{PsOP\\(Pyrimid+ArcFace)} & \textbf{0.73}  &0.67  &\textbf{0.75} &\textbf{0.71} &\textbf{0.84} &0.77 &0.88 &\textbf{0.88} &362  &\textbf{0.21}  &15\textasciitilde30\\
%\tabincell{c}{FPVLS} & \textbf{0.73}  &\textbf{0.69}  &\textbf{0.75} &\textbf{0.71}  &\textbf{0.80} &\textbf{0.79} &\textbf{0.84} &\textbf{0.83} &\textbf{387} &\textbf{0.21} & 20\textasciitilde40 &\checkmark\\
\tabincell{c}{FPVLS$^1$ \\(without refinement)} & 0.56  &0.52  &0.67 &0.64 &0.68 &0.64 &0.80 &0.77   &254 &\textbf{0.09} & 20\textasciitilde40 &\checkmark\\
\hline\hline
\tabincell{c}{FPVLS$^2$\\(Base CNNs)} & \textbf{0.68}  &\textbf{0.67}  &\textbf{0.72} &0.70  &\textbf{0.80} &\textbf{0.77} &\textbf{0.86} &0.83&342 &0.12 & 20\textasciitilde40 &\checkmark\\
\hline\hline
\tabincell{c}{FPVLS$^3$\\(Advanced CNNs)} & \textbf{0.68}  &0.66  &\textbf{0.72} &\textbf{0.71} &\textbf{0.80} &\textbf{0.77} &\textbf{0.86} &\textbf{0.85} &\textbf{367} &0.12 & 20\textasciitilde40 &\checkmark\\

\hline
\end{tabular}
\begin{tablenotes}
      \small
      \item $\uparrow$ $\&$ $\downarrow$ stand for the higher the better and the lower the better. FPVLS$^1$ adopts base CNN models and doesn't activate the trajectories refinements stage. FPVLS$^2$ adopts base CNN models of MTCNN\cite{zhang2016joint}+CosFace\cite{wang2018cosface}. FPVLS$^3$ adopts advanced CNN models of PyramidBox\cite{tang2018pyramidbox}+ArcFace\cite{deng2019arcface}.
    \end{tablenotes}
\label{tbl2}
%\vspace{-1em}
\end{table*}

\subsection{Parameters}
All the parameters remain unchanged for entire live-streaming video tests. $\mathcal{N}=150$ for every segment contains 150 frames. 10 seconds for buffering as $2*\mathcal{N}/30$. CosFace resizes the cropped face to $112*96$; weight decay is 0.0005. The learning rate is 0.005 for initialization and drops at every 60K iterations. The damping factor for PIAP is default 0.5 as it in AP. Detection threshold for MTCNN is [0.7,0.8,0.9]. The proposal network inherent the threshold from MTCNN, and the resize factor is [0.702]; the IoU rate for NMS is [0.7]. $\alpha=0.95$ for ELR statistic.

\subsection{Experiments Breadth}
As we are the first work on the face pixelation in the video live streaming field, we can hardly find similar methods for comparison. Therefore, we dived into potential algorithms in the online MOT and MFT fields and migrated proper ones to the pixelation tasks. In short, thorough experiments are conducted by \textbf{(i)} employing currently applied commercial pixelation methods, including YouTube Studio~\cite{youtube2020} and Microsoft Azure~\cite{Juliako2020}; \textbf{(ii)} migrating proper state-of-the-art MOT to face pixelation, that is the KCF~\cite{henriques2014high} and ECO~\cite{danelljan2017eco}; \textbf{(iii)} migrating proper state-of-the-art MFT to face pixelation, that is the IOU~\cite{bochinski2017high} and POI~\cite{yu2016poi}; \textbf{(iv)} adopting FPVLS and FPVLS with more advanced based CNNs.

\begin{table*}[h]
\centering
\caption{FPVLS performances under various segment length}
\begin{tabular}{|c||cccc||cccc||c|c|c|}
\hline
\multicolumn{1}{|c||}{\multirow{2}{*}{Segment Length $(\mathcal{N})$}} &{MFPA$\uparrow$}  &{MFPA$\uparrow$}  &{MFPA$\uparrow$}  &{MFPA$\uparrow$} &{MFPA$\uparrow$}  &{MFPP$\uparrow$} &{MFPP$\uparrow$}
	\multirow{2}{*}&{MFPP$\uparrow$}
    &\multicolumn{3}{c|}{Entire Dataset}\\
    \cline{10-12}
    &($HS$)&($LS$)&($HN$)&($LN$)&($HS$)&($LS$)&($HN$)&($LN$)&
    \tabincell{c}{MP$\uparrow$\\(frames)} &{OPR$\downarrow$} &\tabincell{c}{Buffering\\ Time**}\\
\hline \hline
IOU*\cite{bochinski2017high}&0.58 &0.53 &0.60 & 0.57 &0.68 &0.64&0.71 &0.68&241&0.23&N/A\\
\hline \hline
\tabincell{c}{0 frames\\(without refinement)} & 0.56  &0.52  &0.67 &0.64 &0.68 &0.64 &0.80 &0.77   &254 &\textbf{0.09} & 0s\\
\hline \hline
30 frames       &0.61 &0.59 &0.68 & 0.65  &0.72 &0.68 &0.82 &0.78  &278 &0.20 &2s\\
POI*\cite{yu2016poi} &0.62 &0.61 &0.70 & 0.67 &0.70 &0.70 &0.83 &0.81  &275 &0.27 &N/A\\
\hline\hline
60 frames       &0.63 &0.63 &0.70 & 0.68  &0.75 &0.72 &0.83 &0.80  &278 &0.16 &4s\\
90 frames       &0.65 &0.65 &0.71 & 0.68  &0.77 &0.73 &0.85 &0.80  &301 &0.14 &6s\\
120 frames      &0.67 &0.67 &0.72 & 0.69  &0.79 &0.75 &0.86 &0.81  &325 &0.13 &8s\\
\hline\hline
150 frames      &0.68 &0.67 &0.72 & 0.70  &0.80 &0.77 &0.86 &0.83  &342 &0.12 &10s\\
\hline\hline
180 frames      &0.68 &0.67 &0.72 & 0.70  &0.80 &0.77 &0.86 &0.83  &342 &0.12 &12s\\
210 frames      &0.68 &0.67 &0.72 & 0.70  &0.81 &0.78 &0.86 &\textbf{0.84}  &342 &0.12 &14s\\
240 frames      &0.68 &0.67 &\textbf{0.73} & 0.70  &0.81 &\textbf{0.78} &\textbf{0.87} &\textbf{0.84}  &342 &0.12 &16s\\
270 frames      &\textbf{0.69} &\textbf{0.68} &\textbf{0.73} & 0.70  &\textbf{0.82} &\textbf{0.79} &\textbf{0.87} &\textbf{0.84}  &\textbf{360} &0.11 &18s\\
300 frames      &\textbf{0.69} &\textbf{0.68} &\textbf{0.73} & \textbf{0.71}  &\textbf{0.82} &\textbf{0.79} &\textbf{0.87} &\textbf{0.84}  &\textbf{360} &0.11 &20s\\
\hline
\end{tabular}
\begin{tablenotes}
      \small
      \item *: IOU and POI tracker are listed here for comparisons. The segment or buffer section does not exist in their algorithm.
      \item **: the buffer section length is twice of the segment length (\textbf{Proposition 1}), and the default FPS is 30.
    \end{tablenotes}
\label{tbl3}
%\vspace{-1em}
\end{table*}

\subsection{Evaluation Metrics}
Experiments are afoot on the collected dataset with all the aforementioned algorithms. Referring to the widely accepted metrics in the tracking field~\cite{bernardin2008evaluating}, we propose the Multi-Face Pixelation Accuracy (MFPA), Multi-Face Pixelation Precision (MFPP), and Most Pixelated (MP) frames to indicate the overall performance of an algorithm. Moreover, the Over-Pixelation Ratio (OPR), and the ability to handle the Number of People (NoP) are the other two customized metrics for pixelation tasks. OPR represents the degree of over-pixelation and is reversely related to preserving original pictures to the audiences. OPR is the vital metric in making up the one-sidedness brought by evaluations solely based on the accuracy (MFPA) and precision (MFPP). Higher MFPA and MFPP values of an algorithm may not stand for absolutely better performances on the pixelation tasks unless it still can contain the OPR at a low level. NoP is crucial as we cannot ask for frequent human interventions during the pixelation of live streaming. Thereby, NoP is the key to bring the pixelation algorithm into real applications. This set of metrics is grounded on the essence of face pixelation tasks and fits other privacy-protection-related pixelation tasks as well.

\par
Specifically:
\begin{equation}{\nonumber}
MFPA=1- {\frac{\sum_{t}(m_t+fp_t+mm_t)}{\sum_{t} {g_t}}}
\end{equation}
where $m_t$, $fp_t$, $mm_t$, $g_t$ correspond to the missed pixelation, false positives in pixelation, miss-matched pixelation, and total pixelation labeled in frame $t$. \begin{equation}
\nonumber
MFPP={\frac{\sum_{i,t}d_{i,t} }{\sum_{t} {c_t}}}
\end{equation}
where $d_{i,t}$ is distance between the pixelation of face $i$ and its ground truth on frame $t$. $d_{i,t}$ is measured through the bounding box overlapping ratio in our paper; therefore, the higher, the better. $c_t$ is the total number of matched pixelation in frame $t$. For all the face tracks, MP is the highest length of consecutively and correctly pixelated frame sequence. The other metric OPR indicates the degree of over-pixelation problem by:
\begin{equation}\nonumber
OPR=\frac{\sum_{t}( op_t )} {\sum_{t} {g_t}}
\end{equation}
where the $op_t$ is matched over-pixelation in frame $t$ according to the 'over-pixelation' label.

\subsection{Experiment Results}
\par
Pixelation results of the methods mentioned in \textit{\textbf{C}} are evaluated through the set of metrics proposed in \textit{\textbf{D}}, on the live streaming dataset built in \textit{\textbf{A}}. The results are listed in Table~\ref{tbl2}. YouTube and Azure are offline pixelation methods that adopt tracker based algorithms without calibration. KCF and ECO here are pre-trained with face datasets and optimized for face pixelation tasks. Although ECO applies higher-dimensional features than KCF, the MFPA of ECO is lower than KCF. The problem for the combination of the correlation filters and online learning strategy is any noisy samples yielded by correlation filters profoundly affect the learning network. For state-of-the-art tracking-by-detection methods, IOU and POI trackers get quite attractive results in comparison. POI tracker has two versions. The online version with Mask R-CNN is used here. POI tracker shares some cognate theories with FPVLS on detection, embedding, and the following trajectory generation. Self-evidently, FPVLS without the trajectory refinement stage somehow achieves similar performances with the POI tracker.
\par
Overall, FPVLS achieves better performance in MFPA, MFPP, MP, and OPR on every single subdataset. Particularly, FPVLS noticeably increases the MP metric on the entire dataset, indicating FPVLS yields longer pixelation. Referring to the last two rows of Table~\ref{tbl2}, FPVLS with advanced CNNs substitutes the base face detection and embedding networks with more advanced deep networks. PyramidBox~\cite{tang2018pyramidbox} gains around 3\% accuracy promotion comparing to MTCNN\cite{zhang2016joint} on their benchmark tests. Likewise, ArcFace~\cite{deng2019arcface} owns 1\%-7\% accuracy promotion comparing to CosFace\cite{wang2018cosface} on their benchmark tests. However, advanced CNN models did not bring FPVLS significant enough boosts on any five metrics. This proves our FPVLS framework is robust to the selection of CNN models. On the other hand, if worse enough detection and embedding networks are adopted, FPVLS will end up in a mess according to our further tests.

\begin{table*}[htbp]
\centering
\caption{Different gains brought by two-sample test based on ELR and the Gaussian Process model.}
\begin{tabular}{|c||c||c|c|c|c|c|}
\hline
\tabincell{c}{Segment Length $(\mathcal{N})$} &\tabincell{c}{Proposal Network\\Recall Rate}   &Methods & \tabincell{c}{MFPA Gain*$\uparrow$}  &\tabincell{c}{MFPP Gain*$\uparrow$}   &\tabincell{c}{MP Gain*$\uparrow$}   &\tabincell{c}{OPR Gain*$\downarrow$ }   \\
\hline\hline
\multirow{2}{*}{30} &\multirow{2}{*}{0.33}
& {$T_{ELR}$}  & {0.05} &{0.04} & {30}  & {0.11}  \\
\cline{3-7}
& &GP & {-0.02} &{-0.03} & {0}  &{0.16}   \\
\hline\hline

\multirow{2}{*}{90} &\multirow{2}{*}{0.33}
& {$T_{ELR}$}  & {0.08} &{0.07} & {57}  & {0.05}  \\
\cline{3-7}
& &GP & {0.03} &{0.03} & {19}  &{0.08}   \\
\hline\hline

\multirow{2}{*}{150} &\multirow{2}{*}{0.32}
& {$T_{ELR}$}  & {0.10} &{0.08} & {88}  & {0.03}  \\
\cline{3-7}
& &GP & {0.05} &{0.04} & {31}  &{0.08}   \\
\hline\hline

\multirow{2}{*}{210} &\multirow{2}{*}{0.33}
& {$T_{ELR}$}  & {0.10} &{0.09} & {88}  & {0.03}  \\
\cline{3-7}
& &GP & {0.08} &{0.07} & {57}  &{0.05}   \\
\hline\hline

\multirow{2}{*}{300} &\multirow{2}{*}{0.32}
& {$T_{ELR}$}  & {0.11} &{0.10} & {94}  & {0.02}  \\
\cline{3-7}
& &GP & {0.10} &{0.10} & {69}  &{0.04}   \\
\hline
\end{tabular}
\begin{tablenotes}
      \small
      \item $\qquad\quad\quad$ $T_{ELR}$ is the two-sample test based on ELR, GP is the Gaussian Process model.
      \item $\qquad\quad\quad$ *: Gains are the boosts brought by the $T_{ELR}$ or the GP model through the trajectories refinement stage .
    \end{tablenotes}
\label{tbl5}

\end{table*}

\subsection{Ablation Study}
The performance of FPVLS is the joint efforts of the raw face trajectory generation and the trajectory refinement stages. Our ablation study focuses on exploring the promotion brought by each of the stages. Therefore, we discuss the effects of parameters (preprocessing), clustering algorithms (raw trajectories generation), and the two-sample test based on ELR (trajectory refinement) in sequence.

\par
\textbf{Parameters.}
Except for the buffering length, the rest parameters own to pre-trained models of face detection and embedding networks. Several sets of parameters are published for the pre-trained models. In fact, the parameter tuning affects less than $1\%$ on their benchmark tests, while the base CNN model substitution in Table~\ref{tbl2} introduces $3-7\%$ boosts. So as FPVLS is robust to CNN model selections, it is also robust to parameter selections for pre-trained models.
\par
The buffer section length is a user-defined setting in our method. Ideally, the longer buffering section helps accuracy, but lower the user experience. The performances of FPVLS with different segment length $\mathcal{N}$ is shown in Table~\ref{tbl3}. On our entire dataset, there is just a slight difference (+0.011 in MFPA) when increasing segment length from 5s to 10s. When 4s or less is used, we incur a drop up to -0.12 in MFPA. We adopt the 5s (150 frames) segment length to achieve the shortest buffering time with the smallest drawbacks, thereby balancing the efficiency and user experience. Besides, the network capacity and video compression introduce primal latency in seconds level. Live streaming requires low-latency when remote face-to-face interaction happens. In online video live streaming, the only unilateral interaction is typing live comments. Therefore, if set to a reasonable value (commonly less than 10 seconds), the buffering lag is transparent to both streamers and audiences.
\par
Moreover, we move the metrics of IOU and POI trackers into Table~\ref{tbl3} for comparisons. FPVLS, even with the segment length of 0 frames (completely without the trajectory refinement stage), surpasses the IOU tracker, and FPVLS with the segment length of 30 frames (2s buffering time) reaches comparable MFPA and MFPP value with the POI tracker. FPVLS, with the segment length of 60 frames (4s buffering time), surpasses the POI tracker. POI tracker applies DN search~\cite{du2016online}, which excludes the outliers and smooths the trajectories according to conjunctive tracklets. POI retains higher MFPA and MFPP value than the FPVLS with extremely short segment length (0 or 30 frames). However, the POI tracker's notably higher OPR metric implies the DN search also leads to massive over-pixelation while smoothing the trajectories. That means, compared with the state-of-the-art MFT-based pixelation methods, FPVLS has better performance even when a very low buffer section is set.

%\textcolor[rgb]{0,0,1}{
%The Gaussian smoothing size is the length of the tiny breaks which span over a few frames and are to be smoothed directly by interpolation. As the smoothing will be rebooted after the refinement stage, its value shall be relatively small to carefully avoid causing over-pixelation. Besides, the exclusion of the false positives in PIAP clustering also asks for a small smoothing size. When a high value of the smoothing size is set, over-pixelation will be frequent in the result and the trajectory refinement stage will become meaningless.
%}
\par
\textbf{Subdatasets.}
According to Table~\ref{tbl2} and Table~\ref{tbl3}, the High-resolution Naive-scene ($HN$) subdataset is the one favorable to all the tested algorithms. All these algorithms get their best performance on the $HN$ subdataset. Regarding Table~\ref{tbl3}, when $\mathcal{N}$ increases linearly, on the $LN$ and $HN$ subdataset, the changing rate of MFPP is more flattened than that of MFPA. Moreover, the changing rate of MFPP also quickly reaches the turning point on these two subdatasets. In other words, the number of people showing up in the videos ($S$ or $N$) has a more substantial impact on MFPP than the resolution ($H$ or $L$). On the other side, MFPA is influenced simultaneously by the resolution and the number of people showing up in videos.

\begin{table}[htbp]
  \vspace{1em}
\centering
\caption{Clustering accuracy and efficiency}
\begin{tabular}{|c||c|c|c|c|c|}
\hline
{Dataset} &{Method} &Purity& {Accuracy}  &\tabincell{c}{Cluster number\\(Clustered/Truth)}   &Time(s)        \\
\hline\hline
\multirow{3}{*}{$HS$}
& {AP*}  & {0.81} &{0.79} & {22/17}  & {1.82}   \\
\cline{2-6}
& {PAP}  & {0.86} &{0.83} &{18/17}& {1.82}   \\
\cline{2-6}
& {PIAP}  & {0.86}&{0.83} & {18/17} & {0.08}   \\
\hline\hline
\multirow{3}{*}{$LS$}
& {AP*}  & {0.72} &{0.70} & {64/43}& {3.05}   \\
\cline{2-6}
& {PAP}  & {0.79} &{0.73}& {49/43}& {3.05}   \\
\cline{2-6}
& {PIAP}  & {0.78} &{0.73}& {49/43}& {0.19}   \\
\hline\hline
\multirow{3}{*}{$HN$}
& {AP*}  & {0.86} &{0.79} & {8/6} & {0.75}   \\
\cline{2-6}
& {PAP}  & {0.91} &{0.89}& {6/6}& {0.75}   \\
\cline{2-6}
& {PIAP}  & {0.90}&{0.89}& {6/6} & {0.04}   \\
\hline\hline
\multirow{3}{*}{$LN$}
& {AP*}  & {0.89}&{0.86} & {11/7}& {0.80}   \\
\cline{2-6}
& {PAP}  & {0.92}&{0.90}& {9/7} & {0.80}   \\
\cline{2-6}
& {PIAP}  & {0.91}&{0.89}& {9/7} & {0.03}   \\
\hline
\end{tabular}
\begin{tablenotes}
      \small
      \item *: The detection results are cleaned before fed to AP.
    \end{tablenotes}
\label{tbl4}
\vspace{1em}
\end{table}

\begin{figure*}[htbp]
\centering
\vspace{-0.35cm}
\subfigtopskip=2pt
\subfigbottomskip=2pt \subfigcapskip=-3pt
\subfigure[]{
\includegraphics[width=3.45cm]{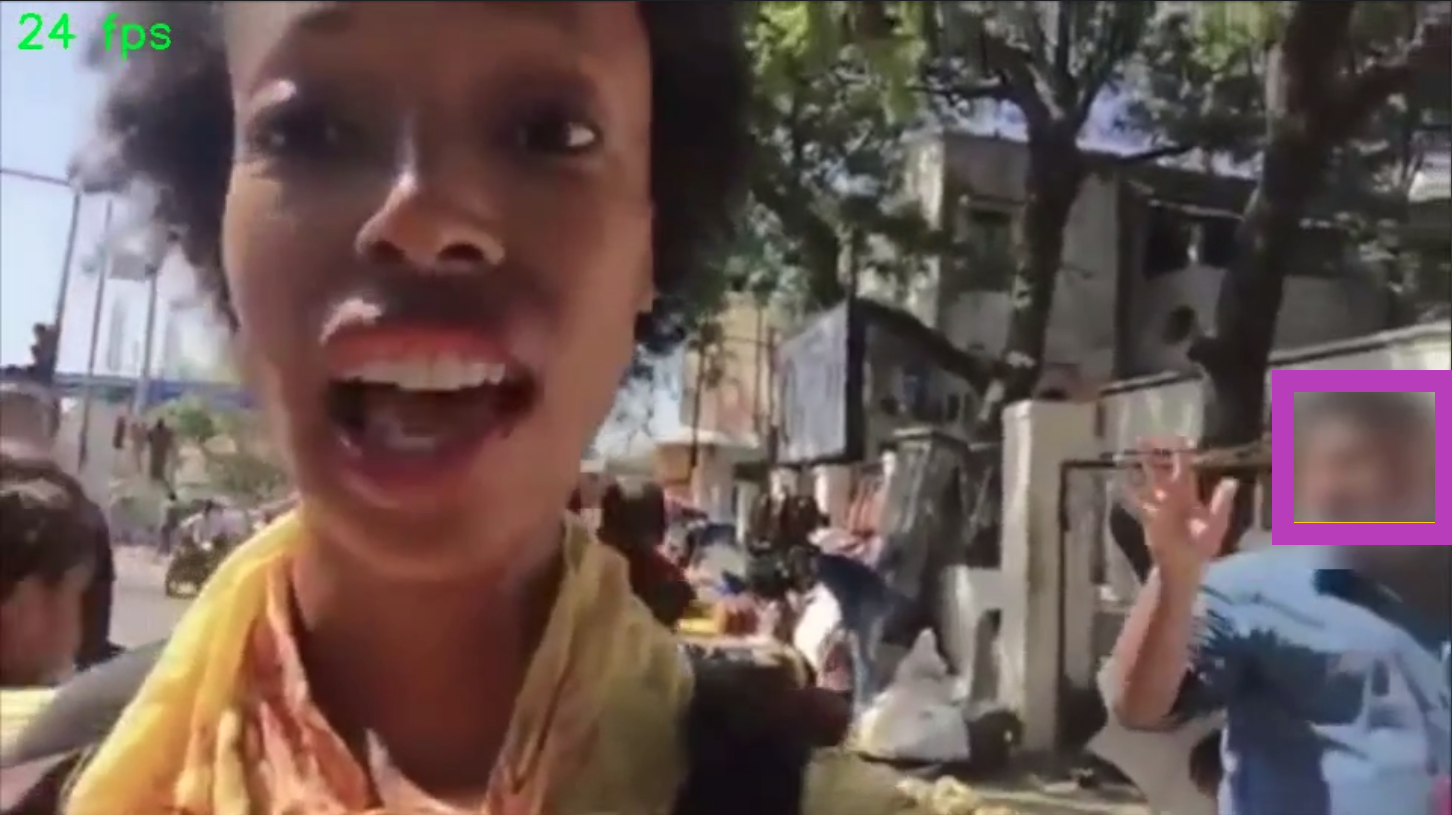}
\includegraphics[width=3.45cm]{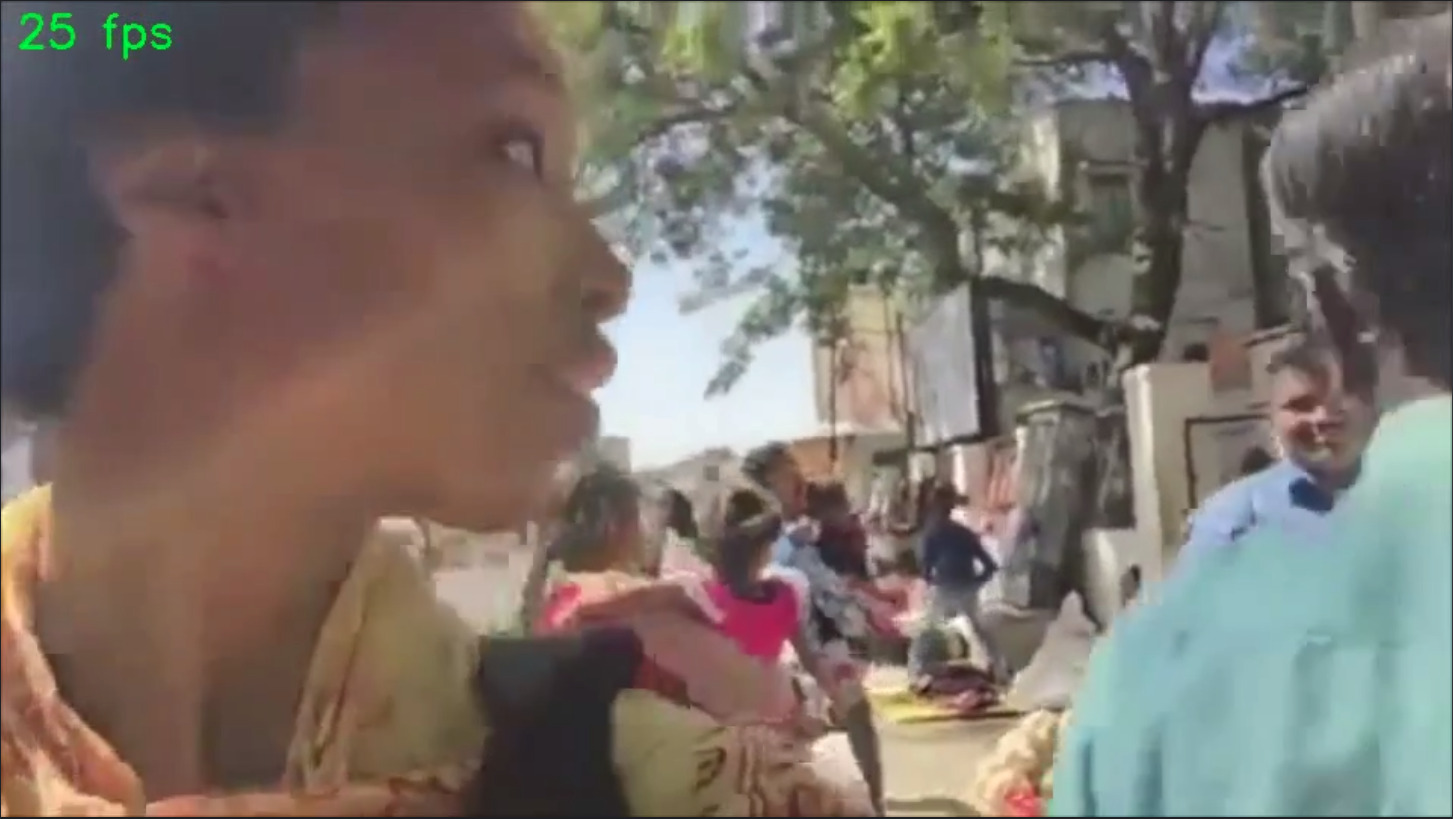}
\includegraphics[width=3.45cm]{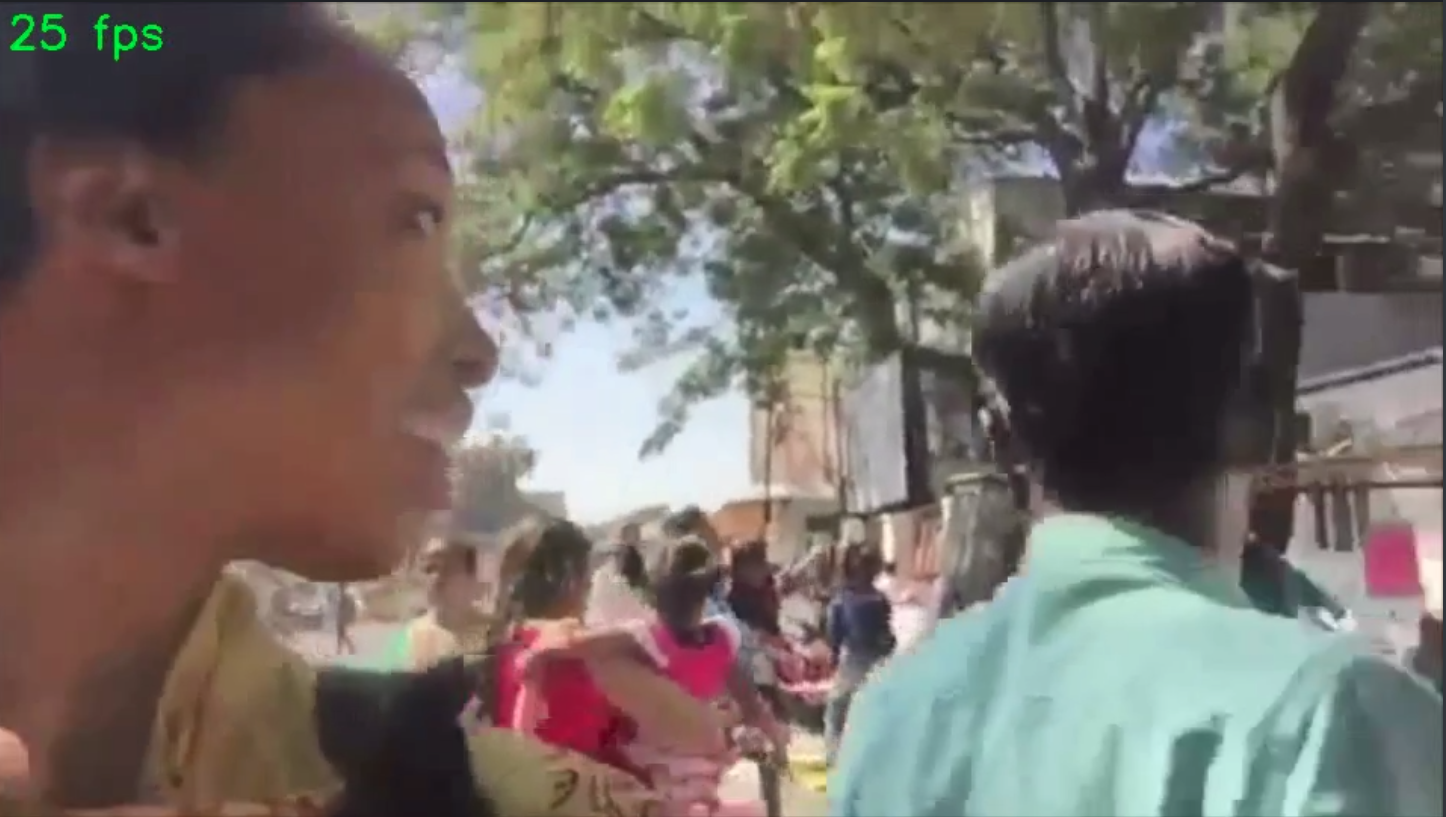}
\includegraphics[width=3.45cm]{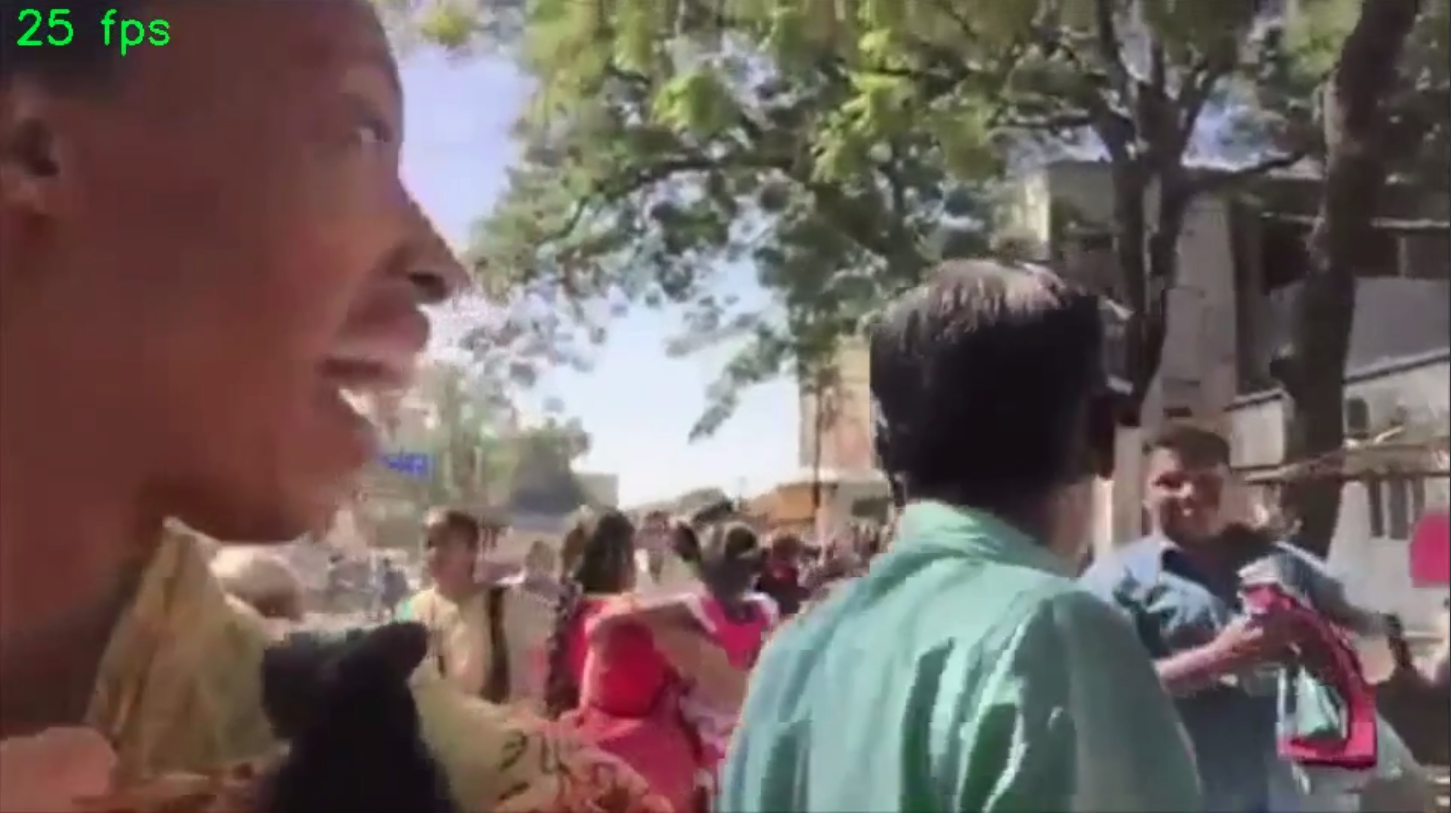}
\includegraphics[width=3.45cm]{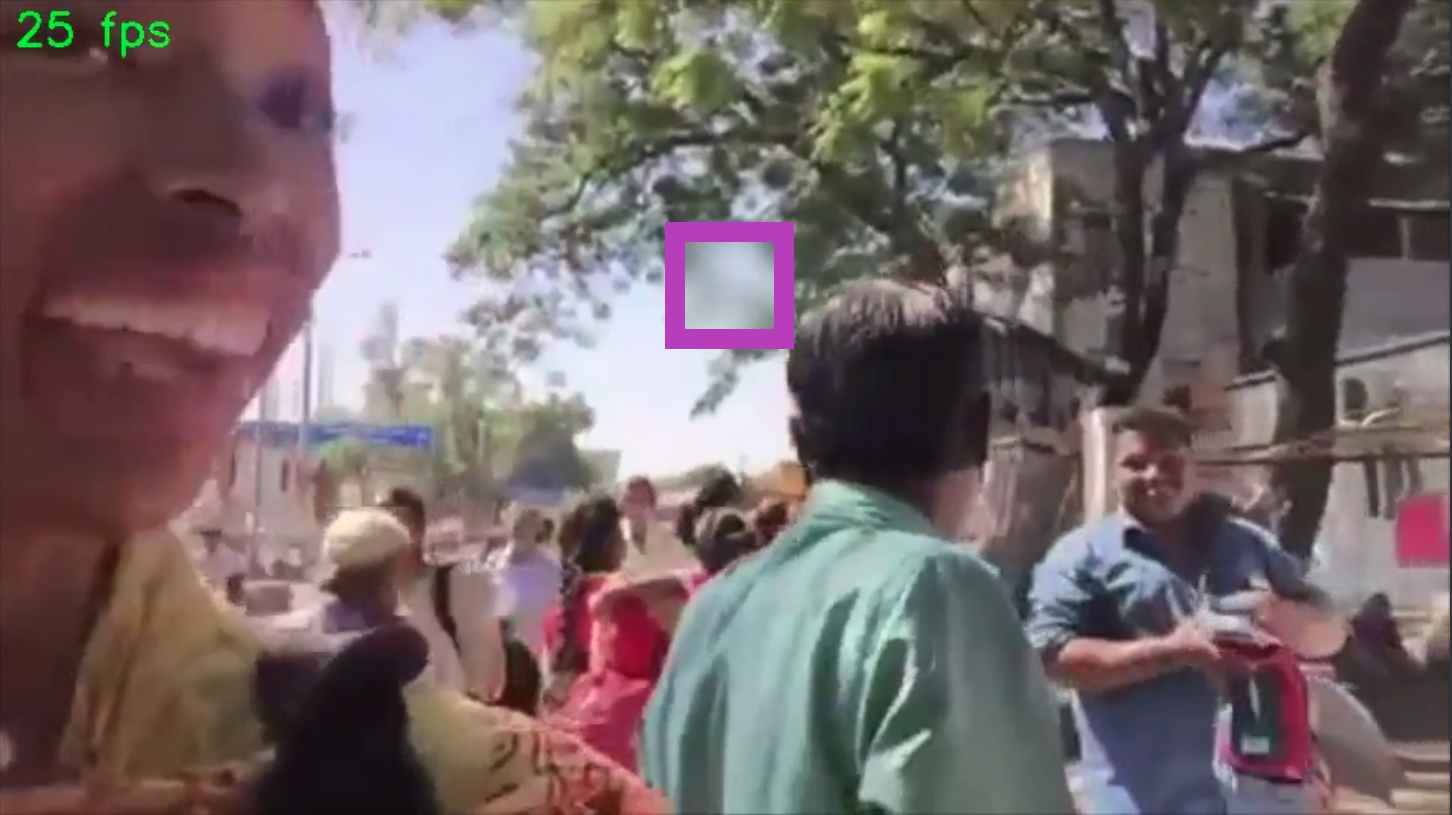}
%\subcaption*{}
%\caption*{Unnumbered figure caption.}
}\vskip -2pt
\quad
\subfigure[]{
\includegraphics[width=3.45cm]{INT/4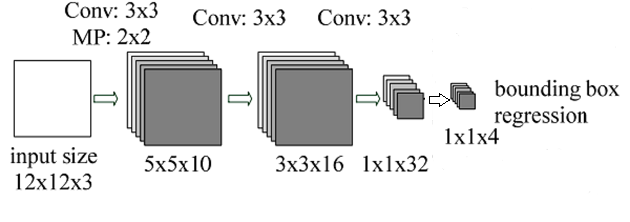}
\includegraphics[width=3.45cm]{INT/43.png}
\includegraphics[width=3.45cm]{INT/44.png}
\includegraphics[width=3.45cm]{INT/45.png}
\includegraphics[width=3.45cm]{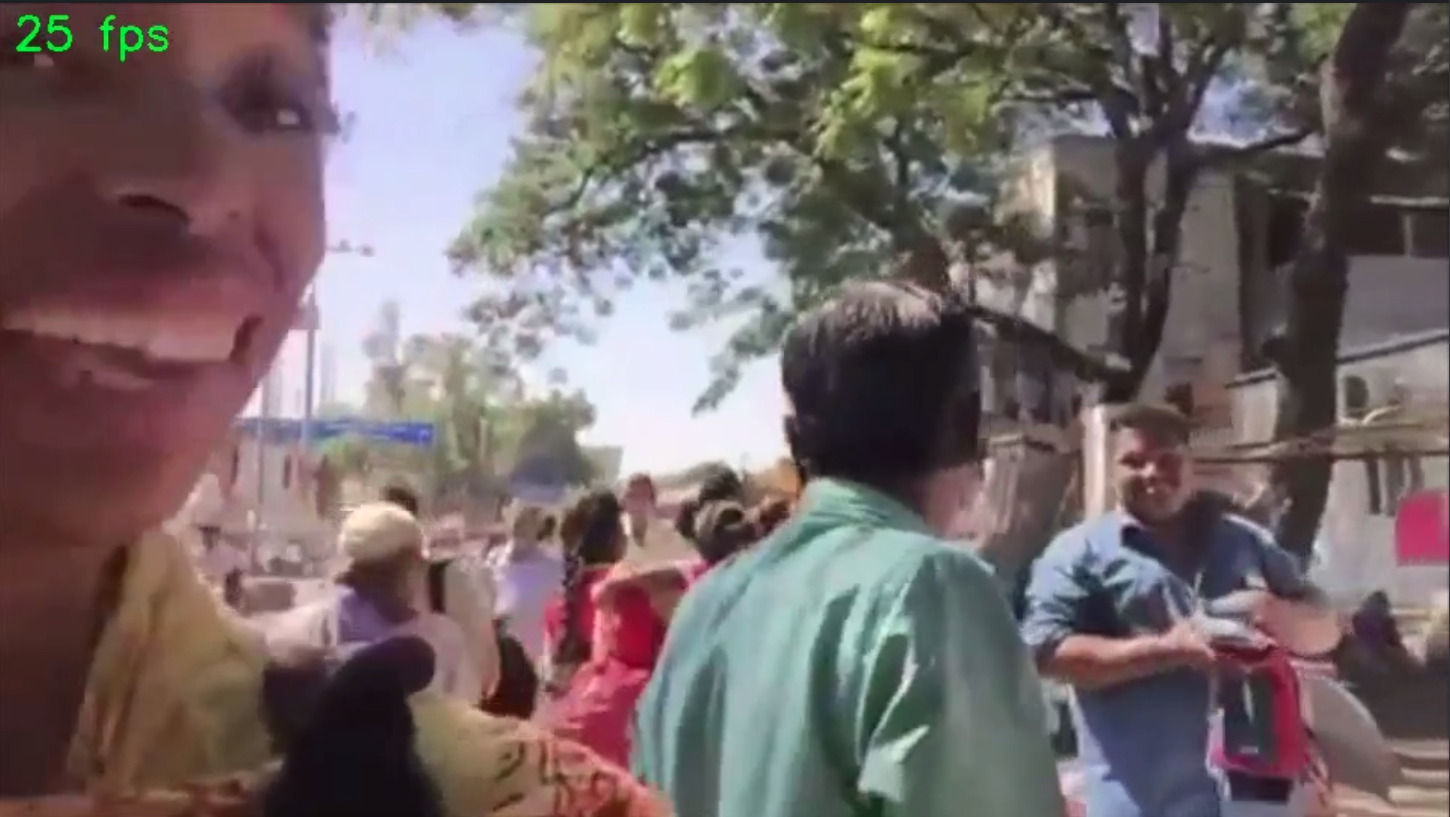}
%\subcaption*{}
}\vskip -2pt
\quad
\subfigure[]{
\includegraphics[width=3.45cm]{INT/42.png}
\includegraphics[width=3.45cm]{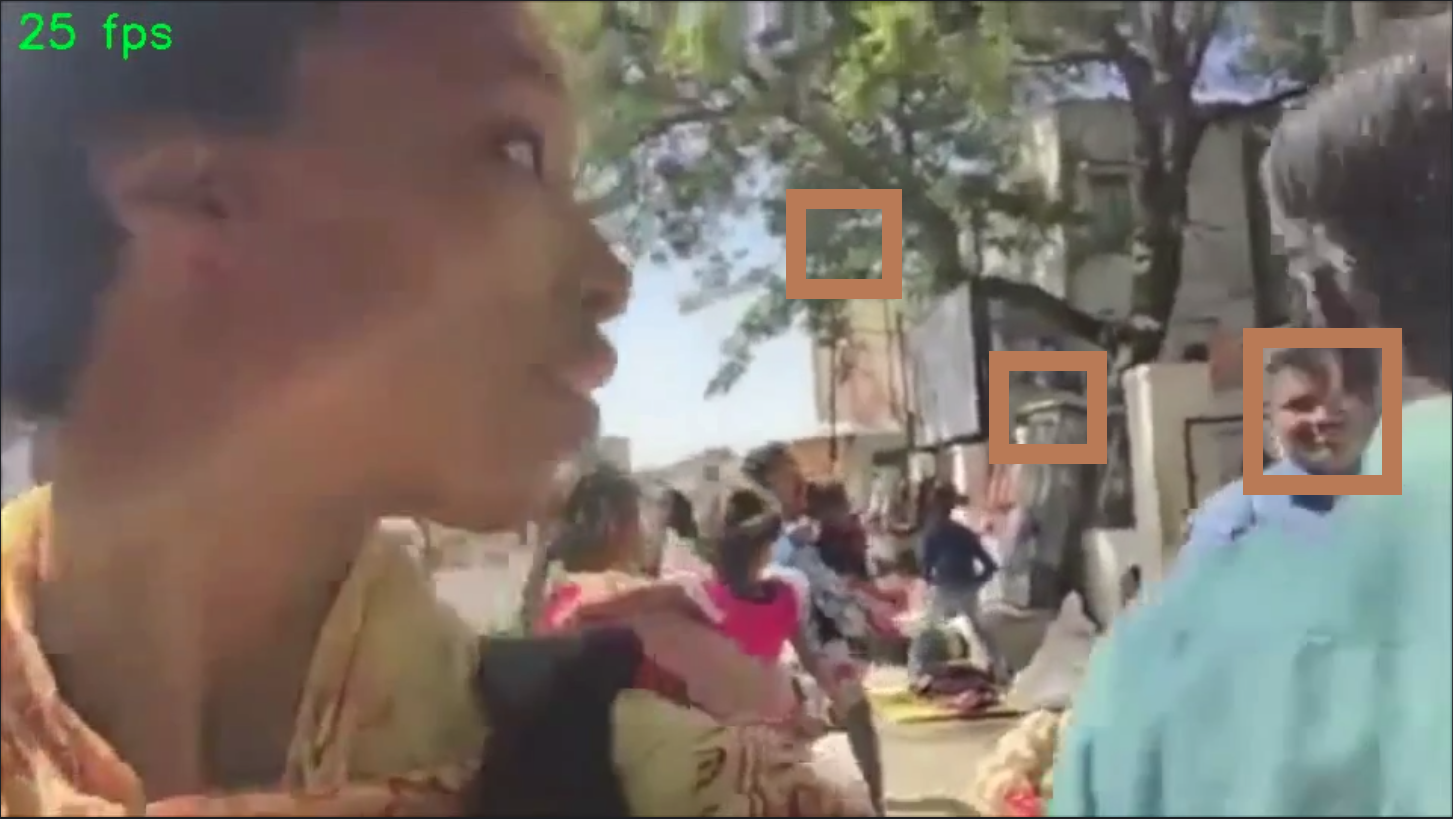}
\includegraphics[width=3.45cm]{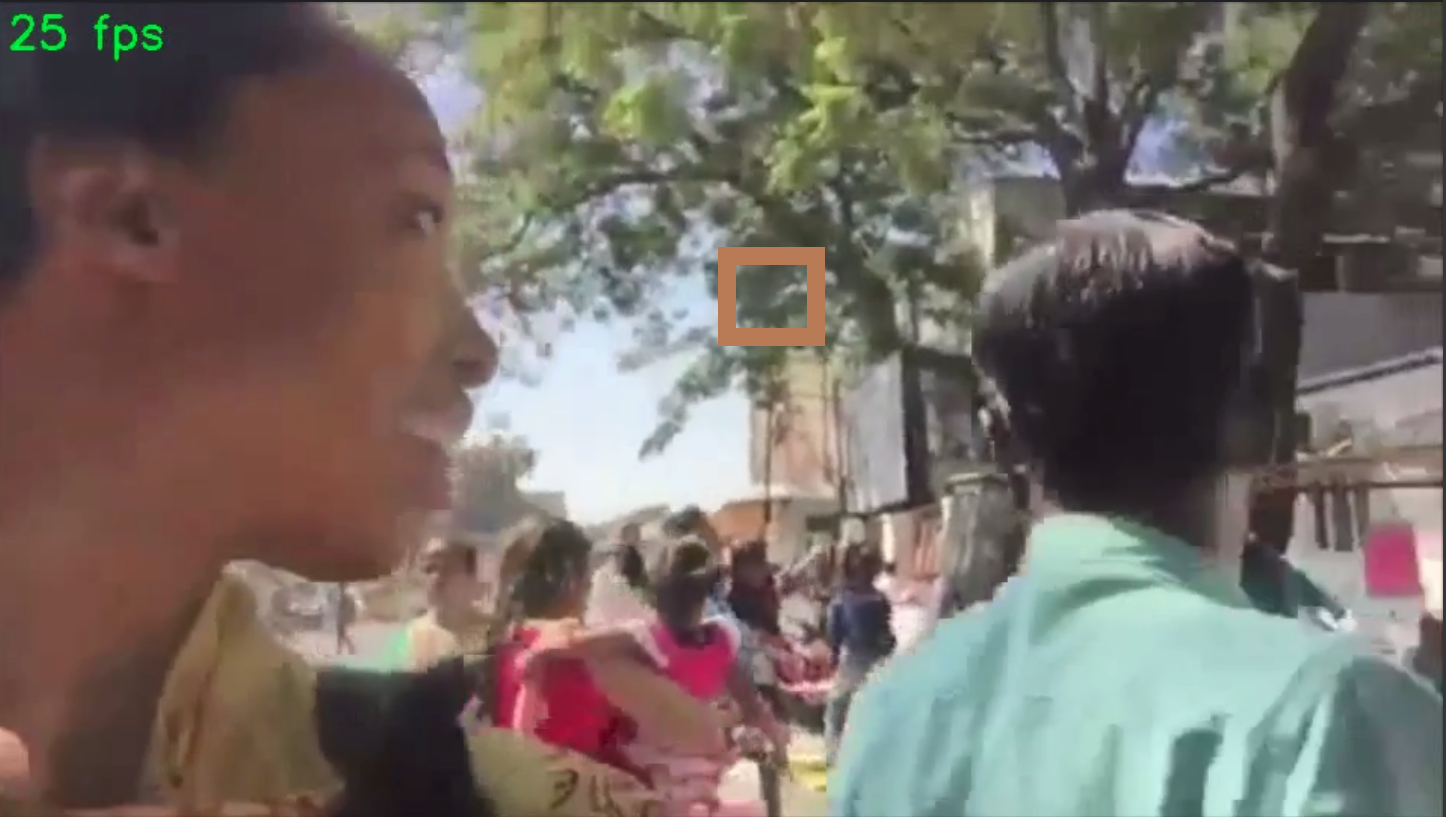}
\includegraphics[width=3.45cm]{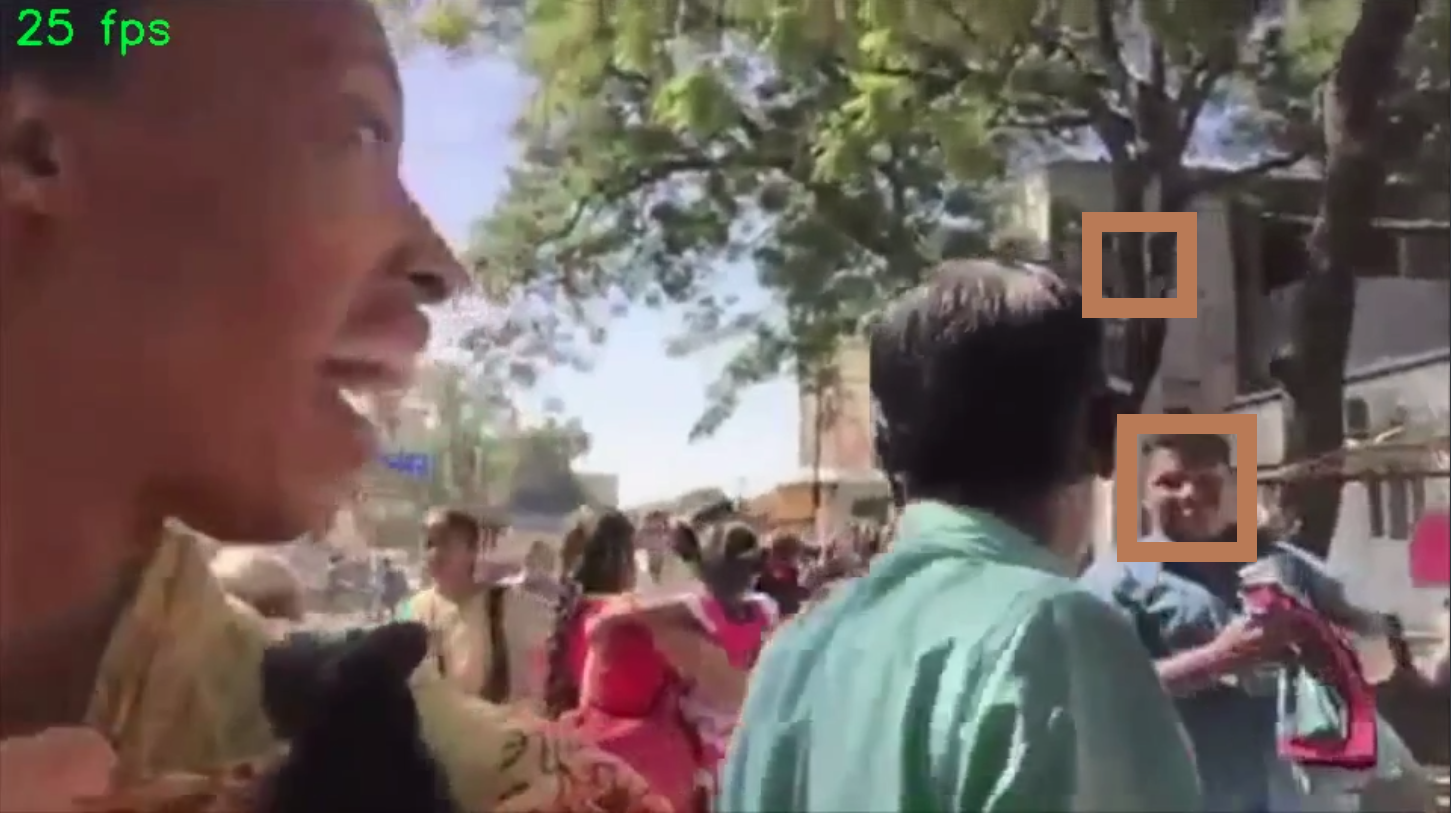}
\includegraphics[width=3.45cm]{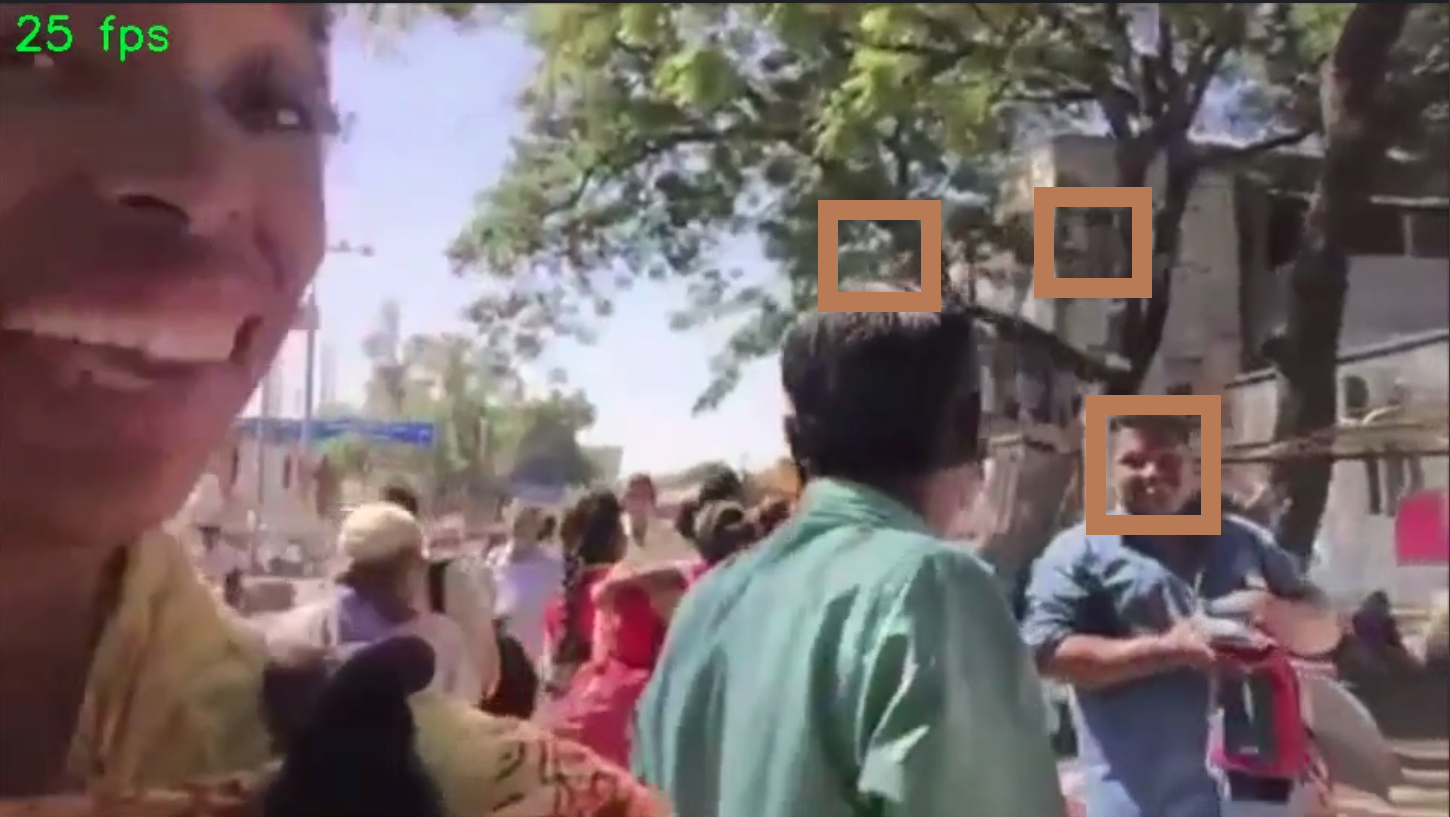}
%\subcaption*{}
}\vskip -2pt
\quad\subfigure[]{
\includegraphics[width=3.45cm]{INT/42.png}
\includegraphics[width=3.45cm]{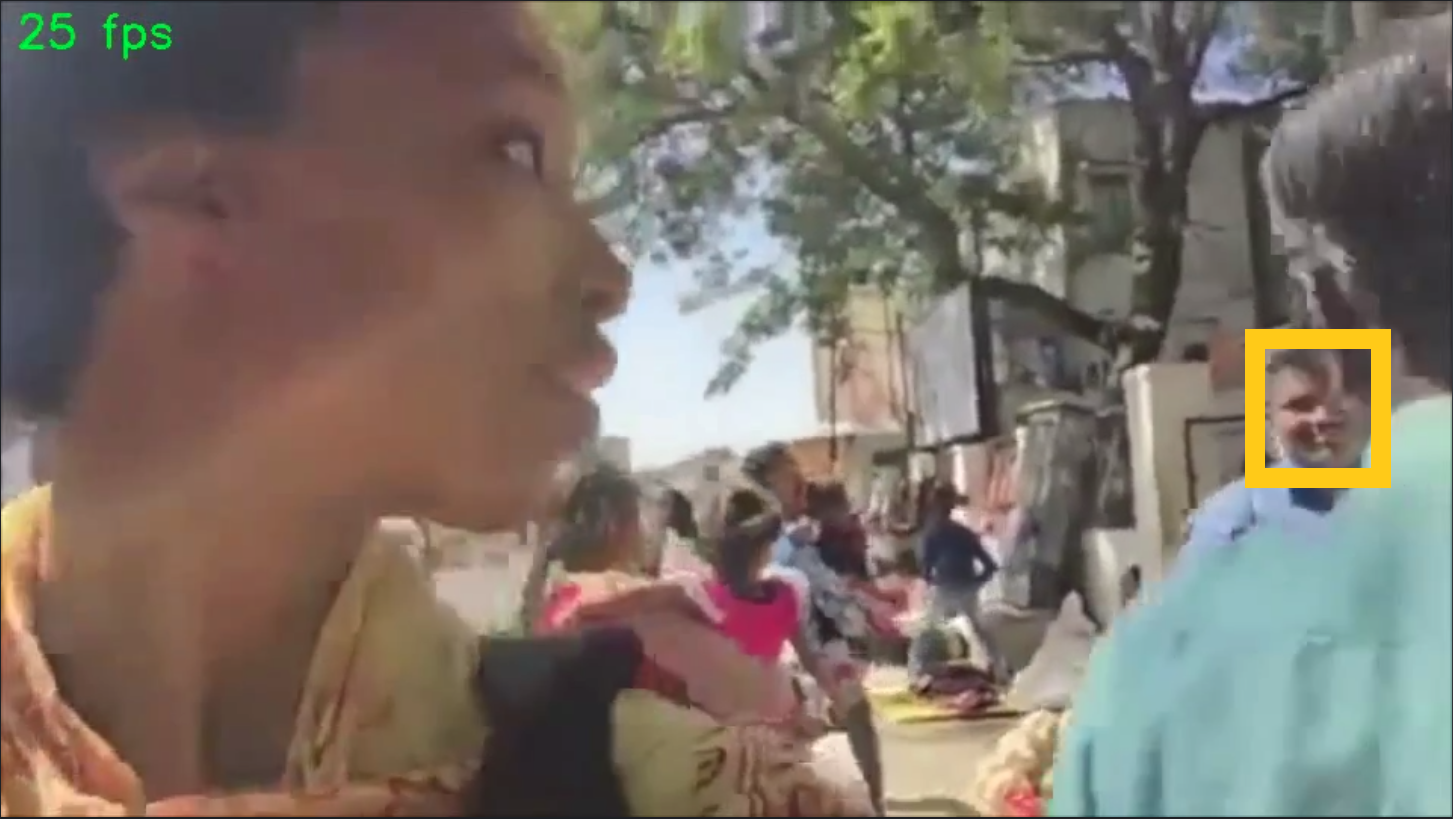}
\includegraphics[width=3.45cm]{INT/44.png}
\includegraphics[width=3.45cm]{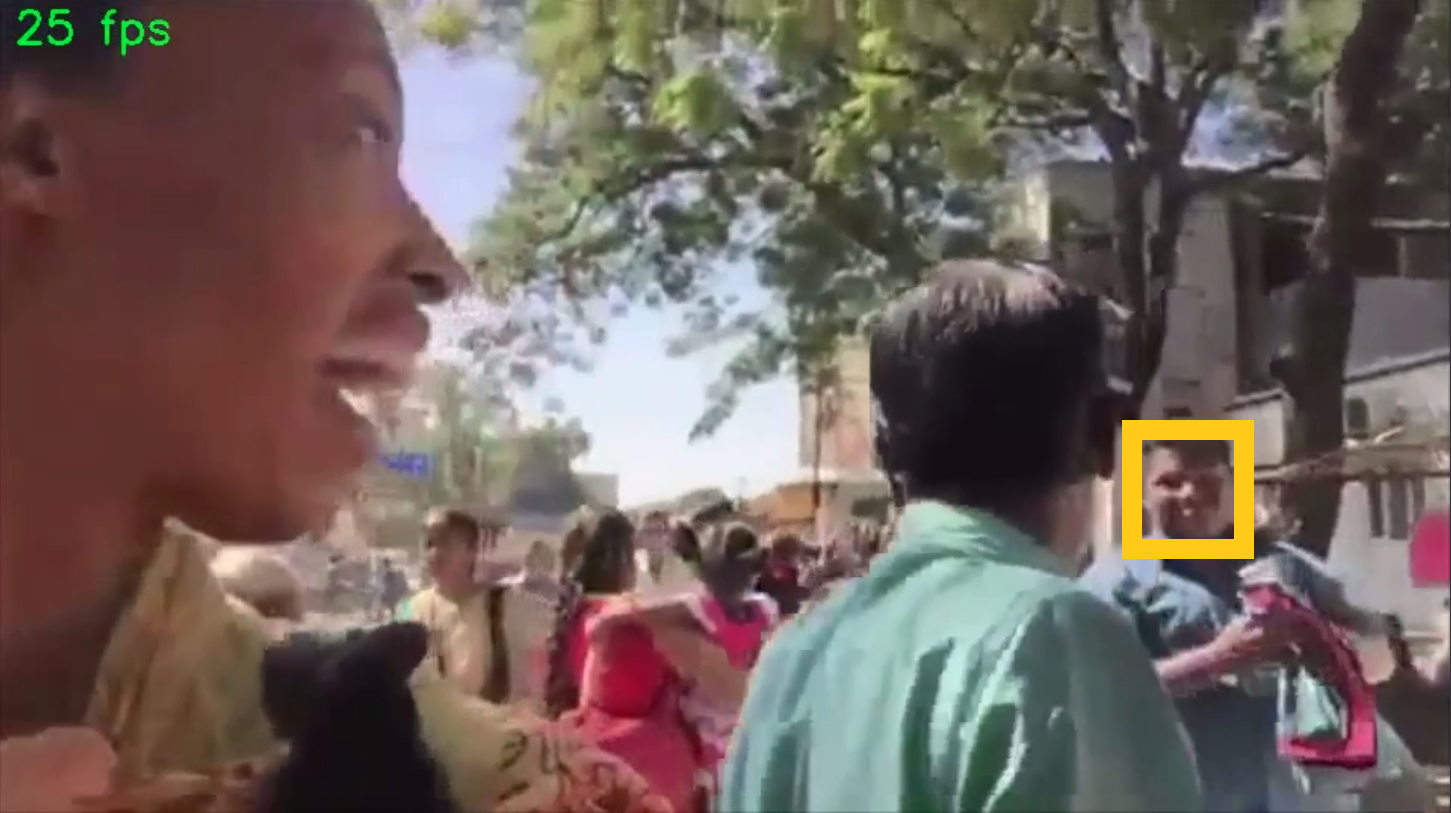}
\includegraphics[width=3.45cm]{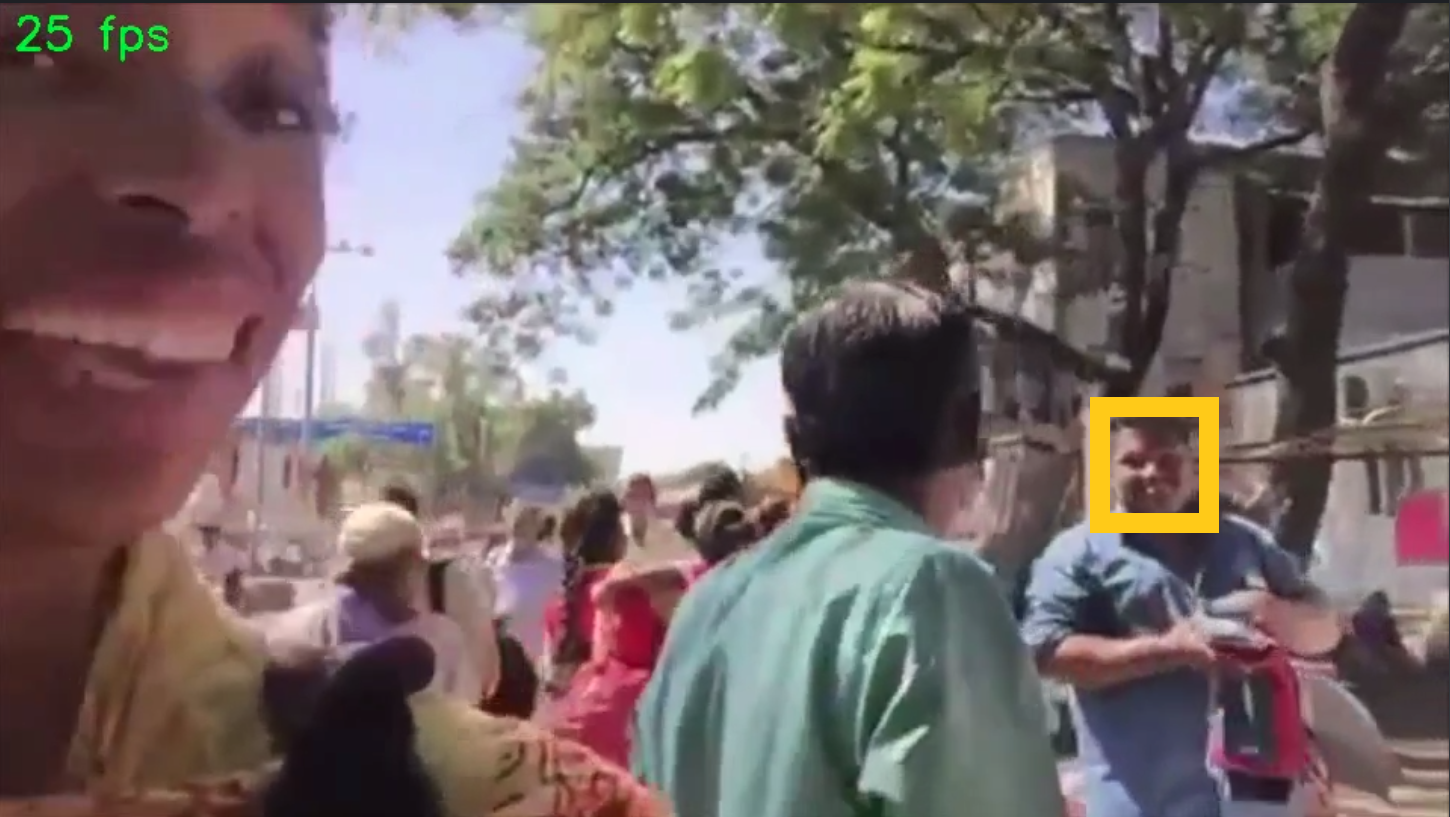}
}
\caption{
The promotion brought by each stage of FPVLS. (a) is the live video pixelated solely through face detection and embedding networks. (b) is processed on the result of (a) through PIAP clustering. (c) processes the results of (b) through the proposal network. (d) is the result of processing (c) through the two-sample test based on ELR.}

%\vspace{3em}
\label{fig5}
\end{figure*}

\par
\textbf{Clustering Algorithm.}
We built Table~\ref{tbl4} to investigate the different clustering performances among classic AP, PAP (AP with positioned information), and PIAP on the collected live streaming dataset. In such a way, we can figure out how the PIAP facilitates the performance of the entire FPVLS. As no comparable or similar clustering algorithm handles object clustering under noises and ill-defined cluster numbers, we conduct longitudinal comparisons among AP, PAP, and PIAP under $\mathcal{N}$=150. If detection full of false positives is fed to AP directly, the noise-sensitive AP will produce meaningless indexes for reference. Since PIAP excludes false positives as outliers, we trace back the raw vectors of faces remaining in PIAP clustering results. These raw face vectors are sent to AP, and in Table~\ref{tbl4}, AP only handles the embedding variations. Comparing AP with PAP, the positioned affinities significantly boost the clustering purity, indicating the embedding variations are fixed to a large extent. According to PIAP and PAP results, our incremental message passing algorithm solves the time-consuming problem of AP almost without affecting the purity. The value of the Time column is the seconds needed to reach consensus.

\par
\textbf{Two-sample test based on ELR \& GP model.}
We also explicitly explore the metrics gains or boosts brought by the other core stage of FPVLS: the trajectory refinement. We look into the differences in metrics gains while applying the two-sample test based on ELR and the Gaussian Process (GP) model. The trajectory refinement is the downstream stage of the raw trajectories generation stage. Therefore, the comparison is conducted under the same segment length to vanish the basic raw trajectories' discrepancies. Furthermore, the proposal networks recall rate is also examined in case the behavior of the proposal network varies under different segment lengths.

\par
Table~\ref{tbl5} is the result. Differences in five metrics when applying the proposed two-sample test based on ELR and the commonly used Gaussian Process model are shown. The recall rate of the proposal networks is not affected by the value of the segment length. Therefore, the raw input of the two-sample test and GP model is generally the same. Different metrics' gains are the boosts brought by the trajectories refinement stage using the two-sample test or GP model. The two-sample test based on ELR is denoted as $T_{ELR}$ in the table. $T_{ELR}$ outperforms the GP model on every metric under every segment length. The GP model even claims negative gains in MFPA and MFPP under extremely short segment length, where the initial assumption that proposed faces comply with a Gaussian distribution cannot hold.

\begin{figure*}[htbp]
\centering
\includegraphics[scale=0.30]{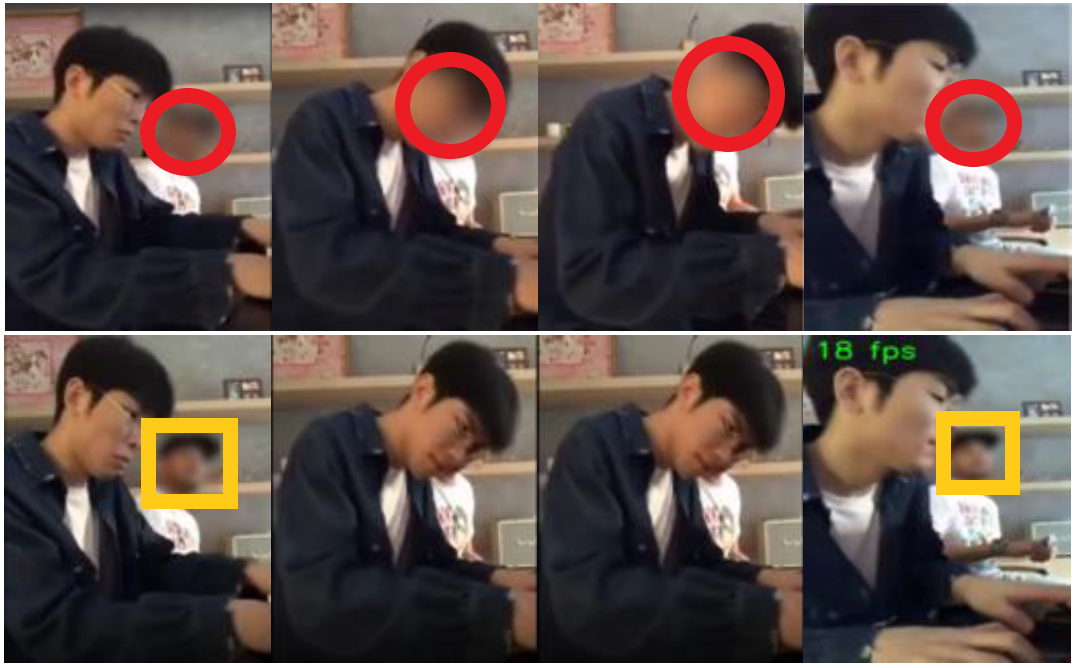}
\includegraphics[scale=0.30]{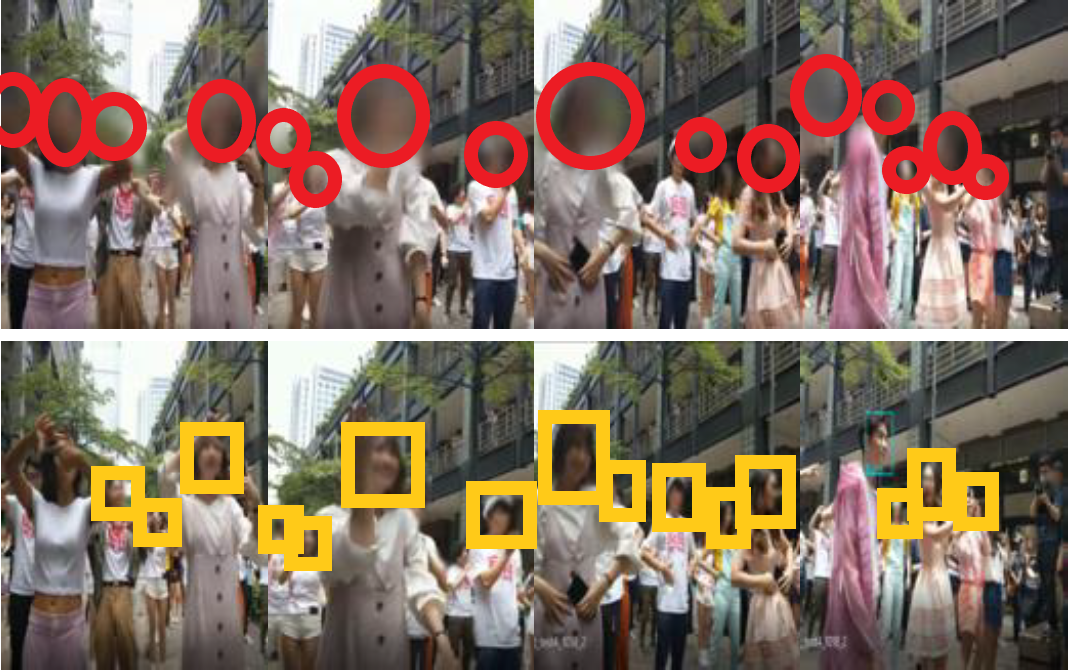}
\caption{Pixelation comparisons between FPVLS (yellow) and the POI tracker (red) in low-resolution scenes.}
\label{fig6}
\end{figure*}

\subsection{Quantitative Analysis}
Fig.~\ref{fig5} shows the specific function of each algorithm in a real streaming scene. Fig.~\ref{fig5}(a) is the same scene that we demonstrated in Fig.~\ref{fig1} pixelated solely through face detection and recognition networks. For comparison, we feed around 1000 pictures of the streamer in advance to pre-train the recognition network. The classification result of the recognition network is directly used for pixelation. In the rightmost snapshot of Fig.~\ref{fig5}(a), the detection and recognition network failed to capture the hawker's face and produced a false positive in purple rectangular. Also, in the second and the fourth snapshots, the detection network failed to locate the hawker's face. Then, we fix the results of (a) through PIAP clustering. The rightmost snapshot in (b) shows PIAP excluded the false positives caused by the detection network. (b) corresponds to the results after the raw trajectories generation stage. Further, the proposal network is applied to raw trajectories to retrieve the detection lost. In (c), the proposal network yielded the lost faces along with some other non-face areas marked in brown rectangles. The ELR statistics managed to cull the non-faces through the two-sample test in (d). The result in (d) is fully processed by FPVLS except for the final pixelation. The effect of both the raw face trajectory generation and trajectory refinement stages are explicit and significant regarding (a)-(d).

\par
More qualitative results\footnote{More video results on:\href{https://knightzjz.github.io/}{https://FPVLS.github.io}} are shown in Fig.~\ref{fig6} to demonstrate the performance of FPVLS under noisy, low-resolution video live streaming scenes. POI tracker generates mosaics in red circles, and FPVLS generates those in yellow rectangular. The left-upper row shows the same over-pixelation problem also occurs in the POI tracker. In red circles, excessive and puzzling mosaics are placed on the frontal streamer's faces while the crossover of two people's faces happens. However, the temporary invisible non-streamer's face will not be pixelated by FPVLS, therefore, we leave the streamer's face clean. The right-side rows show a crowded flash-event, which is hard to be dealt with. Limited by tracklets, trackers cannot instantly recover from the drifting in noisy scenes. Besides other misplaced mosaics, the tracker blurs the face of the streamer who wears a pink trench coat. Although not perfect, FPVLS manages to pixelate most of the faces correctly, especially leaves the streamer's face (marked in blue rectangular) un-pixelated.

\subsection{Efficiency}
The main cost of FPVLS is on the face embedding algorithm. The embedding for one face takes 10-15ms on our i7-7800X, dual GTX1080, 32G RAM machine. The face detection, including compensation for a frame, takes 3ms. Another time-costing part is the initialization of PIAP; the initialization process takes 10-30ms depending on the case. Each incremental propagation loop takes 3-5ms. FPVLS generally satisfies the real-time efficiency requirement, and under extreme circumstances that contain many faces, we can reduce the sampling rate of the video frames to improve the efficiency of FPVLS.
\section{Conclusions}
Leveraging PIAP clustering to generate raw face trajectories and the two-sample test based on ELR to refine the raw trajectories, FPVLS manages to accomplish the irrelevant people's face tracking and pixelation problem in video live streaming. PIAP deals with the ill-defined cluster number issue and endows the classic AP with noise-resistance and time-saving merits. The two-sample test based on empirical likelihood ratio statistics outperforms the widely used Gaussian process in the trajectory refinement. With most advanced mobile phone chips like Apple A12, FPVLS could be deployed on smartphones and brought into real applications without much burden.
\par
As we discussed in section \uppercase\expandafter{\romannumeral4}, the low-resolution videos are still quite challenging because deep networks can drop to a deficient performance and break the robustness of FPVLS under low-resolution scenarios. Our future work will focus on the improvement of accuracy and robustness under low resolution streaming scenes.
\section{Acknowledgment}
The authors would like to deliver special thanks to Sin-Teng Wong for her attentive work on the video live streaming dataset annotations.
\par
This work was partly supported by the University of Macau under Grants: MYRG2018-00035-FST and MYRG2019-00086-FST, and the Science and Technology Development Fund, Macau SAR (File no. 0034/2019/AMJ, 0019/2019/A).

\bibliographystyle{IEEEtran}
\bibliography{IEEEabrv,tiffs}

% Generated by IEEEtran.bst, version: 1.12 (2007/01/11)
\begin{thebibliography}{10}
\providecommand{\url}[1]{#1}
\csname url@samestyle\endcsname
\providecommand{\newblock}{\relax}
\providecommand{\bibinfo}[2]{#2}
\providecommand{\BIBentrySTDinterwordspacing}{\spaceskip=0pt\relax}
\providecommand{\BIBentryALTinterwordstretchfactor}{4}
\providecommand{\BIBentryALTinterwordspacing}{\spaceskip=\fontdimen2\font plus
\BIBentryALTinterwordstretchfactor\fontdimen3\font minus
  \fontdimen4\font\relax}
\providecommand{\BIBforeignlanguage}[2]{{%
\expandafter\ifx\csname l@#1\endcsname\relax
\typeout{** WARNING: IEEEtran.bst: No hyphenation pattern has been}%
\typeout{** loaded for the language `#1'. Using the pattern for}%
\typeout{** the default language instead.}%
\else
\language=\csname l@#1\endcsname
\fi
#2}}
\providecommand{\BIBdecl}{\relax}
\BIBdecl

\bibitem{vo2017optimal}
N.-S. Vo, T.~Q. Duong, H.~D. Tuan, and A.~Kortun, ``Optimal video streaming in
  dense 5g networks with d2d communications,'' \emph{IEEE Access}, vol.~6, pp.
  209--223, 2017.

\bibitem{faklaris2016legal}
C.~Faklaris, F.~Cafaro, S.~A. Hook, A.~Blevins, M.~O'Haver, and N.~Singhal,
  ``Legal and ethical implications of mobile live-streaming video apps,'' in
  \emph{Proceedings of the 18th International Conference on Human-Computer
  Interaction with Mobile Devices and Services Adjunct}.\hskip 1em plus 0.5em
  minus 0.4em\relax ACM, 2016, pp. 722--729.

\bibitem{zimmer2017law}
F.~Zimmer, K.~J. Fietkiewicz, and W.~G. Stock, ``Law infringements in social
  live streaming services,'' in \emph{International Conference on Human Aspects
  of Information Security, Privacy, and Trust}.\hskip 1em plus 0.5em minus
  0.4em\relax Springer, 2017, pp. 567--585.

\bibitem{stewart2016up}
D.~R. Stewart and J.~Littau, ``Up, periscope: Mobile streaming video
  technologies, privacy in public, and the right to record,'' \emph{Journalism
  \& Mass Communication Quarterly}, vol.~93, no.~2, pp. 312--331, 2016.

\bibitem{youtube2020}
\BIBentryALTinterwordspacing
Y.~Help, \emph{Blur your videos}, 2020 (accessed Apr 17th, 2020). [Online].
  Available: \url{https://support.google.com/youtube/answer/9057652?hl=en}
\BIBentrySTDinterwordspacing

\bibitem{youtube2020kid}
\BIBentryALTinterwordspacing
Y.~O. Blog, \emph{Better protecting kids' privacy on YouTube}, January 6, 2020
  (accessed Apr 17th, 2020). [Online]. Available:
  \url{https://youtube.googleblog.com/2020/01/betterprotecting-kids-privacy-on-YouTube.html}
\BIBentrySTDinterwordspacing

\bibitem{Juliako2020}
\BIBentryALTinterwordspacing
Juliako, \emph{Redact faces with Azure Media Analytics}, 2020 (accessed Apr
  17th, 2020). [Online]. Available:
  \url{https://docs.microsoft.com/en-us/azure/media-services/previous/media-services-face-redaction}
\BIBentrySTDinterwordspacing

\bibitem{sanchez2016cascaded}
E.~S{\'a}nchez-Lozano, B.~Martinez, G.~Tzimiropoulos, and M.~Valstar,
  ``Cascaded continuous regression for real-time incremental face tracking,''
  in \emph{European Conference on Computer Vision}.\hskip 1em plus 0.5em minus
  0.4em\relax Springer, 2016, pp. 645--661.

\bibitem{danelljan2017eco}
M.~Danelljan, G.~Bhat, F.~Shahbaz~Khan, and M.~Felsberg, ``Eco: Efficient
  convolution operators for tracking,'' in \emph{Proceedings of the IEEE
  conference on computer vision and pattern recognition}, 2017, pp. 6638--6646.

\bibitem{henriques2014high}
J.~F. Henriques, R.~Caseiro, P.~Martins, and J.~Batista, ``High-speed tracking
  with kernelized correlation filters,'' \emph{IEEE transactions on pattern
  analysis and machine intelligence}, vol.~37, no.~3, pp. 583--596, 2014.

\bibitem{yu2016poi}
F.~Yu, W.~Li, Q.~Li, Y.~Liu, X.~Shi, and J.~Yan, ``Poi: Multiple object
  tracking with high performance detection and appearance feature,'' in
  \emph{European Conference on Computer Vision}.\hskip 1em plus 0.5em minus
  0.4em\relax Springer, 2016, pp. 36--42.

\bibitem{shen2018tracklet}
H.~Shen, L.~Huang, C.~Huang, and W.~Xu, ``Tracklet association tracker: An
  end-to-end learning-based association approach for multi-object tracking,''
  \emph{arXiv preprint arXiv:1808.01562}, 2018.

\bibitem{huang2008robust}
C.~Huang, B.~Wu, and R.~Nevatia, ``Robust object tracking by hierarchical
  association of detection responses,'' in \emph{European Conference on
  Computer Vision}.\hskip 1em plus 0.5em minus 0.4em\relax Springer, 2008, pp.
  788--801.

\bibitem{zhang2014technology}
T.~Zhang and H.~M. Gomes, ``Technology survey on video face tracking,'' in
  \emph{Imaging and Multimedia Analytics in a Web and Mobile World 2014}, vol.
  9027.\hskip 1em plus 0.5em minus 0.4em\relax International Society for Optics
  and Photonics, 2014, p. 90270F.

\bibitem{frey2007clustering}
B.~J. Frey and D.~Dueck, ``Clustering by passing messages between data
  points,'' \emph{science}, vol. 315, no. 5814, pp. 972--976, 2007.

\bibitem{ji20133d}
S.~Ji, W.~Xu, M.~Yang, and K.~Yu, ``3d convolutional neural networks for human
  action recognition,'' \emph{IEEE transactions on pattern analysis and machine
  intelligence}, vol.~35, no.~1, pp. 221--231, 2013.

\bibitem{yue2015beyond}
J.~Yue-Hei~Ng, M.~Hausknecht, S.~Vijayanarasimhan, O.~Vinyals, R.~Monga, and
  G.~Toderici, ``Beyond short snippets: Deep networks for video
  classification,'' in \emph{Proceedings of the IEEE conference on computer
  vision and pattern recognition}, 2015, pp. 4694--4702.

\bibitem{karpathy2014large}
A.~Karpathy, G.~Toderici, S.~Shetty, T.~Leung, R.~Sukthankar, and L.~Fei-Fei,
  ``Large-scale video classification with convolutional neural networks,'' in
  \emph{Proceedings of the IEEE conference on Computer Vision and Pattern
  Recognition}, 2014, pp. 1725--1732.

\bibitem{tran2015learning}
D.~Tran, L.~Bourdev, R.~Fergus, L.~Torresani, and M.~Paluri, ``Learning
  spatiotemporal features with 3d convolutional networks,'' in
  \emph{Proceedings of the IEEE international conference on computer vision},
  2015, pp. 4489--4497.

\bibitem{berclaz2011multiple}
J.~Berclaz, F.~Fleuret, E.~Turetken, and P.~Fua, ``Multiple object tracking
  using k-shortest paths optimization,'' \emph{IEEE transactions on pattern
  analysis and machine intelligence}, vol.~33, no.~9, pp. 1806--1819, 2011.

\bibitem{zamir2012gmcp}
A.~R. Zamir, A.~Dehghan, and M.~Shah, ``Gmcp-tracker: Global multi-object
  tracking using generalized minimum clique graphs,'' in \emph{European
  Conference on Computer Vision}.\hskip 1em plus 0.5em minus 0.4em\relax
  Springer, 2012, pp. 343--356.

\bibitem{mueller2017context}
M.~Mueller, N.~Smith, and B.~Ghanem, ``Context-aware correlation filter
  tracking,'' in \emph{Proceedings of the IEEE Conference on Computer Vision
  and Pattern Recognition}, 2017, pp. 1396--1404.

\bibitem{valmadre2017end}
J.~Valmadre, L.~Bertinetto, J.~Henriques, A.~Vedaldi, and P.~H. Torr,
  ``End-to-end representation learning for correlation filter based tracking,''
  in \emph{Proceedings of the IEEE Conference on Computer Vision and Pattern
  Recognition}, 2017, pp. 2805--2813.

\bibitem{liu2016structural}
S.~Liu, T.~Zhang, X.~Cao, and C.~Xu, ``Structural correlation filter for robust
  visual tracking,'' in \emph{Proceedings of the IEEE Conference on Computer
  Vision and Pattern Recognition}, 2016, pp. 4312--4320.

\bibitem{zhang2016tracking}
S.~Zhang, Y.~Gong, J.-B. Huang, J.~Lim, J.~Wang, N.~Ahuja, and M.-H. Yang,
  ``Tracking persons-of-interest via adaptive discriminative features,'' in
  \emph{European conference on computer vision}.\hskip 1em plus 0.5em minus
  0.4em\relax Springer, 2016, pp. 415--433.

\bibitem{bochinski2017high}
E.~Bochinski, V.~Eiselein, and T.~Sikora, ``High-speed tracking-by-detection
  without using image information,'' in \emph{2017 14th IEEE International
  Conference on Advanced Video and Signal Based Surveillance (AVSS)}.\hskip 1em
  plus 0.5em minus 0.4em\relax IEEE, 2017, pp. 1--6.

\bibitem{sridhar2015fast}
S.~Sridhar, F.~Mueller, A.~Oulasvirta, and C.~Theobalt, ``Fast and robust hand
  tracking using detection-guided optimization,'' in \emph{Proceedings of the
  IEEE Conference on Computer Vision and Pattern Recognition}, 2015, pp.
  3213--3221.

\bibitem{weinzaepfel2015learning}
P.~Weinzaepfel, Z.~Harchaoui, and C.~Schmid, ``Learning to track for
  spatio-temporal action localization,'' in \emph{Proceedings of the IEEE
  international conference on computer vision}, 2015, pp. 3164--3172.

\bibitem{zhang2016joint}
Z.~Zhang, P.~Luo, C.~C. Loy, and X.~Tang, ``Joint face representation
  adaptation and clustering in videos,'' in \emph{European conference on
  computer vision}.\hskip 1em plus 0.5em minus 0.4em\relax Springer, 2016, pp.
  236--251.

\bibitem{insafutdinov2017arttrack}
E.~Insafutdinov, M.~Andriluka, L.~Pishchulin, S.~Tang, E.~Levinkov, B.~Andres,
  and B.~Schiele, ``Arttrack: Articulated multi-person tracking in the wild,''
  in \emph{Proceedings of the IEEE Conference on Computer Vision and Pattern
  Recognition}, 2017, pp. 6457--6465.

\bibitem{lin2018prior}
C.-C. Lin and Y.~Hung, ``A prior-less method for multi-face tracking in
  unconstrained videos,'' in \emph{Proceedings of the IEEE Conference on
  Computer Vision and Pattern Recognition}, 2018, pp. 538--547.

\bibitem{wang2018cosface}
H.~Wang, Y.~Wang, Z.~Zhou, X.~Ji, D.~Gong, J.~Zhou, Z.~Li, and W.~Liu,
  ``Cosface: Large margin cosine loss for deep face recognition,'' in
  \emph{Proceedings of the IEEE Conference on Computer Vision and Pattern
  Recognition}, 2018, pp. 5265--5274.

\bibitem{tang2018pyramidbox}
X.~Tang, D.~K. Du, Z.~He, and J.~Liu, ``Pyramidbox: A context-assisted single
  shot face detector,'' in \emph{Proceedings of the European Conference on
  Computer Vision (ECCV)}, 2018, pp. 797--813.

\bibitem{deng2019arcface}
J.~Deng, J.~Guo, N.~Xue, and S.~Zafeiriou, ``Arcface: Additive angular margin
  loss for deep face recognition,'' in \emph{Proceedings of the IEEE Conference
  on Computer Vision and Pattern Recognition}, 2019, pp. 4690--4699.

\bibitem{ester1996density}
M.~Ester, H.-P. Kriegel, J.~Sander, and X.~Xu, ``Density-based spatial
  clustering of applications with noise,'' in \emph{Int. Conf. Knowledge
  Discovery and Data Mining}, vol. 240, 1996, p.~6.

\bibitem{pal1995cluster}
N.~R. Pal and J.~C. Bezdek, ``On cluster validity for the fuzzy c-means
  model,'' \emph{IEEE Transactions on Fuzzy systems}, vol.~3, no.~3, pp.
  370--379, 1995.

\bibitem{wang2013multi}
C.-D. Wang, J.-H. Lai, C.~Y. Suen, and J.-Y. Zhu, ``Multi-exemplar affinity
  propagation,'' \emph{IEEE transactions on pattern analysis and machine
  intelligence}, vol.~35, no.~9, pp. 2223--2237, 2013.

\bibitem{cao2015diversity}
X.~Cao, C.~Zhang, H.~Fu, S.~Liu, and H.~Zhang, ``Diversity-induced multi-view
  subspace clustering,'' in \emph{Proceedings of the IEEE conference on
  computer vision and pattern recognition}, 2015, pp. 586--594.

\bibitem{sun2014incremental}
L.~Sun and C.~Guo, ``Incremental affinity propagation clustering based on
  message passing,'' \emph{IEEE Transactions on Knowledge and Data
  Engineering}, vol.~26, no.~11, pp. 2731--2744, 2014.

\bibitem{chen2016supervised}
D.~Chen, G.~Hua, F.~Wen, and J.~Sun, ``Supervised transformer network for
  efficient face detection,'' in \emph{European Conference on Computer
  Vision}.\hskip 1em plus 0.5em minus 0.4em\relax Springer, 2016, pp. 122--138.

\bibitem{qin2016joint}
H.~Qin, J.~Yan, X.~Li, and X.~Hu, ``Joint training of cascaded cnn for face
  detection,'' in \emph{Proceedings of the IEEE Conference on Computer Vision
  and Pattern Recognition}, 2016, pp. 3456--3465.

\bibitem{neubeck2006efficient}
A.~Neubeck and L.~Van~Gool, ``Efficient non-maximum suppression,'' in
  \emph{18th International Conference on Pattern Recognition (ICPR'06)},
  vol.~3.\hskip 1em plus 0.5em minus 0.4em\relax IEEE, 2006, pp. 850--855.

\bibitem{bewley2016simple}
A.~Bewley, Z.~Ge, L.~Ott, F.~Ramos, and B.~Upcroft, ``Simple online and
  realtime tracking,'' in \emph{2016 IEEE International Conference on Image
  Processing (ICIP)}.\hskip 1em plus 0.5em minus 0.4em\relax IEEE, 2016, pp.
  3464--3468.

\bibitem{bertinetto2016fully}
L.~Bertinetto, J.~Valmadre, J.~F. Henriques, A.~Vedaldi, and P.~H. Torr,
  ``Fully-convolutional siamese networks for object tracking,'' in
  \emph{European conference on computer vision}.\hskip 1em plus 0.5em minus
  0.4em\relax Springer, 2016, pp. 850--865.

\bibitem{guo2017learning}
Q.~Guo, W.~Feng, C.~Zhou, R.~Huang, L.~Wan, and S.~Wang, ``Learning dynamic
  siamese network for visual object tracking,'' in \emph{Proceedings of the
  IEEE international conference on computer vision}, 2017, pp. 1763--1771.

\bibitem{wang2018learning}
Q.~Wang, Z.~Teng, J.~Xing, J.~Gao, W.~Hu, and S.~Maybank, ``Learning
  attentions: residual attentional siamese network for high performance online
  visual tracking,'' in \emph{Proceedings of the IEEE conference on computer
  vision and pattern recognition}, 2018, pp. 4854--4863.

\bibitem{hirscher2016multiple}
T.~Hirscher, A.~Scheel, S.~Reuter, and K.~Dietmayer, ``Multiple extended object
  tracking using gaussian processes,'' in \emph{2016 19th International
  Conference on Information Fusion (FUSION)}.\hskip 1em plus 0.5em minus
  0.4em\relax IEEE, 2016, pp. 868--875.

\bibitem{hou2007real}
S.~Hou, A.~Galata, F.~Caillette, N.~Thacker, and P.~Bromiley, ``Real-time body
  tracking using a gaussian process latent variable model,'' in \emph{2007 IEEE
  11th International Conference on Computer Vision}.\hskip 1em plus 0.5em minus
  0.4em\relax IEEE, 2007, pp. 1--8.

\bibitem{yang2013incremental}
C.~Yang, L.~Bruzzone, R.~Guan, L.~Lu, and Y.~Liang, ``Incremental and
  decremental affinity propagation for semisupervised clustering in
  multispectral images,'' \emph{IEEE Transactions on Geoscience and Remote
  Sensing}, vol.~51, no.~3, pp. 1666--1679, 2013.

\bibitem{ciuperca2016empirical}
G.~Ciuperca and Z.~Salloum, ``Empirical likelihood test for high-dimensional
  two-sample model,'' \emph{Journal of Statistical Planning and Inference},
  vol. 178, pp. 37--60, 2016.

\bibitem{fukumizu2004dimensionality}
K.~Fukumizu, F.~R. Bach, and M.~I. Jordan, ``Dimensionality reduction for
  supervised learning with reproducing kernel hilbert spaces,'' \emph{Journal
  of Machine Learning Research}, vol.~5, no. Jan, pp. 73--99, 2004.

\bibitem{ding2019linear}
L.~Ding, Z.~Liu, Y.~Li, S.~Liao, Y.~Liu, P.~Yang, G.~Yu, L.~Shao, and X.~Gao,
  ``Linear kernel tests via empirical likelihood for high-dimensional data,''
  in \emph{Proceedings of the AAAI Conference on Artificial Intelligence},
  vol.~33, 2019, pp. 3454--3461.

\bibitem{gretton2012kernel}
A.~Gretton, K.~M. Borgwardt, M.~J. Rasch, B.~Sch{\"o}lkopf, and A.~Smola, ``A
  kernel two-sample test,'' \emph{Journal of Machine Learning Research},
  vol.~13, no. Mar, pp. 723--773, 2012.

\bibitem{owen2001empirical}
A.~B. Owen, \emph{Empirical likelihood}.\hskip 1em plus 0.5em minus 0.4em\relax
  Chapman and Hall/CRC, 2001.

\bibitem{leal2015motchallenge}
L.~Leal-Taix{\'e}, A.~Milan, I.~Reid, S.~Roth, and K.~Schindler, ``Motchallenge
  2015: Towards a benchmark for multi-target tracking,'' \emph{arXiv preprint
  arXiv:1504.01942}, 2015.

\bibitem{milan2016mot16}
A.~Milan, L.~Leal-Taix{\'e}, I.~Reid, S.~Roth, and K.~Schindler, ``Mot16: A
  benchmark for multi-object tracking,'' \emph{arXiv preprint
  arXiv:1603.00831}, 2016.

\bibitem{bernardin2008evaluating}
K.~Bernardin and R.~Stiefelhagen, ``Evaluating multiple object tracking
  performance: the clear mot metrics,'' \emph{EURASIP Journal on Image and
  Video Processing}, vol. 2008, pp. 1--10, 2008.

\bibitem{du2016online}
D.~Du, H.~Qi, W.~Li, L.~Wen, Q.~Huang, and S.~Lyu, ``Online deformable object
  tracking based on structure-aware hyper-graph,'' \emph{IEEE Transactions on
  Image Processing}, vol.~25, no.~8, pp. 3572--3584, 2016.

\end{thebibliography}

\end{document}